\definecolor{ao(english)}{rgb}{0.0, 0.5, 0.0}
\theoremstyle{definition}
\newtheorem{prop}{Proposition}
\newtheorem{definition}{Definition}[section]
\newtheorem{theorem}{Theorem}[section]
\definecolor{LightCyan}{rgb}{0.88,1,1}
\definecolor{LightRed}{rgb}{1,0.7,0.7}
\definecolor{babyblueeyes}{rgb}{0.63, 0.79, 0.95}
\definecolor{azure(colorwheel)}{rgb}{0.0, 0.5, 1.0}
\def\xb{{\bm {x}}}
\newcommand{\R}{\mathbb{R}}
\newcommand{\N}{\mathbb{N}}
\newcommand{\correspondingauthor}{Corresponding author at: Dipartimento di Matematica e Applicazioni “Renato Caccioppoli”, Università degli Studi di Napoli\\ \indent \indent “Federico II”, Naples 80126, Italy. \, 
{\texttt{constantinos.siettos@unina.it }}}
\newcommand{\allaffiliations}{
\indent \textcolor{teal}{$^{(1)}$} Modelling Engineering Risk and Complexity, \emph{Scuola Superiore Meridionale}, Naples 80138, Italy \hspace{1cm}\\
\indent \textcolor{teal}{$^{(2)}$} Department of Chemical and Biomolecular Engineering, \emph{Johns Hopkins University}, Baltimore 21210, MD, USA\\
\indent \textcolor{teal}{$^{(3)}$} Dipartimento di Matematica e Applicazioni ‘‘Renato Caccioppoli", \emph{Universit\`a degli Studi di Napoli} \emph{Federico II}, Naples 80126, Italy\\
\indent \textcolor{teal}{$^{(4)}$}Department of Statistics, \emph{Athens University of Economics and Business}, Athens, Greece\\
\indent \textcolor{teal}{$^{(5)}$} Department of Applied Mathematics and Statistics, \emph{Johns Hopkins University}, Baltimore 21210, MD, USA\\
\indent \textcolor{teal}{$^{(6)}$} Medical School, Department of Urology, \emph{Johns Hopkins University}, Baltimore 21210, MD, USA
}
\title{RandONet: Shallow-Networks with Random Projections for learning linear and nonlinear operators}
\let\newtitle\@title
\author{
\textbf{Gianluca Fabiani}\textcolor{teal}{$^{1}$}, \,
\textbf{Ioannis G. Kevrekidis}\textcolor{teal}{$^{2,5,6}$}, \,
\textbf{Constantinos Siettos}\textcolor{teal}{$^{3,}$}\thanks{\correspondingauthor \\
\allaffiliations
} \, ,
\textbf{Athanasios N.  Yannacopoulos}\textcolor{teal}{$^{4}$}
}
\begin{document}

\normalsize
\maketitle
    



\begin{abstract}
%
Deep Operator Networks (DeepOnets) have revolutionized the domain of scientific machine learning for the solution of the inverse problem for dynamical systems. 
However, their implementation necessitates optimizing a high-dimensional space of parameters and hyperparameters. This fact, along with the requirement of substantial computational resources, poses a barrier to achieving high numerical accuracy.
Here, inpsired by DeepONets and to address the above challenges, we present Random Projection-based Operator Networks (RandONets): shallow networks with random projections that learn linear and nonlinear operators. The implementation of RandONets involves: (a) incorporating random bases, thus enabling the use of shallow neural networks with a single hidden layer, where the only unknowns are the output weights of the network's weighted inner product; this reduces dramatically the dimensionality of the parameter space; and, based on this, (b) using established least-squares solvers (e.g., Tikhonov regularization and preconditioned QR decomposition) that offer superior numerical approximation properties compared to other optimization techniques used in deep-learning.
In this work, we prove the universal approximation accuracy of RandONets for approximating nonlinear operators and demonstrate their efficiency in approximating linear nonlinear evolution operators (right-hand-sides (RHS)) with a focus on PDEs. 
We show, that for this particular task, RandONets outperform, both in terms of numerical approximation accuracy and computational cost, the ``vanilla" DeepOnets.
\end{abstract}

\keywords{DeepOnet \and RandONet \and Machine Learning \and Random Projections  \and Shallow Neural Networks \and  Approximation of Linear and Nonlinear Operators \and Differential Equations \and Evolution Operators}

\textbf{\emph{Mathematics Subject Classification codes}}
65M32, 65D12, 65J22, 41A35,68T20,65D15,68T07,68W20,
41A35. 
%
%

\section{Introduction}
In recent years, significant advancements in machine learning (ML) have broadened our computational toolkit with the ability to solve both the forward and, importantly, the inverse problem in differential equations and multiscale/complex systems. For the forward problem, ML algorithms such as Gaussian process and physics-informed neural networks (PINNs) are trained to approximate the solutions of nonlinear differential equations, with a particular interest in stiff and high-dimensional systems of nonlinear differential equations \cite{han2018solving,lu2021deepxde,raissi2019physics,fabiani2021numerical,calabro2021extreme,fabiani2023parsimonious,dong2021computing,dong2021local}, as well as for the solution of nonlinear functional equations 
\cite{karniadakis2021physics,lu2021learning,kalia2021learning,alvarez2023discrete,patsatzis2023slow, alvarez2024nonlinear}. The solution of the inverse problem leverages the ability of ML algorithms to learn the physical laws, their parameters and closures among scales from data \cite{raissi2018deep,raissi2019physics,lee2020coarse,karniadakis2021physics,galaris2022numerical,fabiani2024task,lee2023learning,dietrich2023learning,lu2021learning,kalia2021learning}. To the best of our knowledge, the first neural network-based solution of the inverse problem for identifying the evolution law (the right-hand-side) of parabolic Partial Differential Equations (PDEs), using spatial partial derivatives as basis functions, was presented in Gonzalez et al. (1998) \cite{gonzalez1998identification}. In the same decade, such inverse identification problems for PDEs, were investigated through reduced order models (ROMs) for PDEs, using data-driven Proper Orthogonal Decomposition (POD) basis functions \cite{krischer1993heterogeneously} and Fourier basis functions (in a context of approximate inertial manifolds) in \cite{shvartsman2000order}.\par


Over the last few years several advanced ML-based approaches for the approximation of nonlinear operators, focused on partial differential operators, stand out: the Deep Operator Networks (DeepOnet) \cite{lu2021learning}, the Fourier Neural Operators (FNOs) \cite{li2020Fourier}, and the Graph-based Neural Operators \cite{kovachki2023neural,li2020multipole} are the most prominent ones.
The DeepOnet extends the universal approximation theorem for dynamical systems --given back in 90's by Chen and Chen \cite{chen1995universal}-- employing the so-called branch and trunk networks that handle input functions and spatial variables and/or parameters separately, thus providing a powerful and versatile framework for operator learning in dynamical systems. Various architectures can be used for either/both networks, thus offering new avenues for tackling challenging problems in the modeling of complex dynamical systems \cite{lu2021learning,lu2022comprehensive,goswami2022physics,jin2022mionet}.
FNOs \cite{li2020Fourier} exploit the Fourier transform to capture global patterns and dependencies in the data. FNOs employ convolutional layers in the frequency domain and the inverse Fourier transform to map back to the original domain. This transformation allows the neural network to efficiently learn complex, high-dimensional inputs and outputs with long range correlations. This method is particularly advantageous for problems involving large spatial domains. 
The family of graph-based neural Operators \cite{li2020multipole,kovachki2023neural} model the nonlinear operator as a graph  --where nodes represent spatial locations of the output function-- learning the kernel of the network which approximates the PDE. They define a sequence of compositions, where each layer is a map between infinite-dimensional spaces with a finite-dimensional parametric dependence. The authors also prove a universal approximation theorem showing that the formulation can approximate any given nonlinear continuous operator. Building on the above pioneering methods, other approaches include wavelet neural operators (WNOs) \cite{tripura2023wavelet} and spectral neural operators (SNOs) \cite{fanaskov2023spectral} using a fixed number of basis functions for both the input and the output functions, which can be either Chebyshev polynomials or complex exponentials. For a comprehensive review on the applications of neural operators, the interested reader can refer to the recent work in \cite{azizzadenesheli2024neural}.\par
Here, based on the architecture of DeepOnets \cite{lu2022learning} and the universal approximation theorem proposed in \cite{chen1995universal}, we present {\em Random Projection-based Operator Network} (RandONet) to deal with the ``curse of dimensionality'' in the training process of DeepONet. DeepONet, while a powerful methodology, is not without its limitations. Its training often involves iterating over large datasets multiple times to update the high-dimensional space of the deep learning network parameters, requiring significant computational time and memory. Additionally, the complexity of the underlying nonlinear operators and the size of the problem domain can further increase the computational burden. Moreover, hyperparameter tuning, regularization techniques, and model selection procedures contribute to additional computational overhead. As a result, training DeepONet can require substantial computational resources, including high-performance computing clusters or GPUs. Importantly, the computational demands of training DeepONet can significantly impact convergence behavior and numerical approximation accuracy. The high-dimensional parameter space may lead to challenges in converging to a (near) global optimum. In some cases, the optimization algorithm may get stuck in local minima or plateaus, hindering the network's ability to approximate the underlying nonlinear or even linear, as we will show, operators with a high accuracy. Addressing these challenges requires careful consideration of optimization strategies, regularization methods and dataset size, balancing computational efficiency with the desired level of approximation accuracy (see also critical discussions and approaches to deal with this cost-accuracy tradeoff in \cite{wang2021learning,de2022cost,venturi2023svd,goswami2023physics}). \par
Our proposed RandONet deals with these challenges, leveraging shallow-neural networks and random projections \cite{johnson1984extensions,rahimi2007random,rahimi2008uniform,rahimi2008weighted,gorban2016approximation} to enable a computationally efficient framework for the approximation of linear and nonlinear operators.
Additionally, we integrate established niche numerical analysis techniques for the solution of the resulting linear least-squares problem, such as Tikhonov regularization, and pivoted QR decomposition with regularization, offering highly efficient, non-iterative solutions with guaranteed (near) optimal estimations. Here, we present RandONets as universal approximators of linear and nonlinear operators. Furthermore, we assess their performance with a focus on the approximation of linear and nonlinear evolution operators (right-hand-sides (RHS)) PDEs. For our illustrations, we consider the 1D viscous Burgers PDE, and the 1D phase-field Allen-Cahn PDE. We demonstrate that, for the particular task of the approximation of evolution operators (RHS) of PDEs, RandOnets outperform the vanilla DeepOnets both in terms of numerical approximation accuracy and computational cost by orders of magnitude. In a work that will follow, we will present the efficiency of RandONets to approximate solution operators of PDEs. Overall, our work contributes to a deeper understanding of DeepOnets, improving significantly their potential for approximating faster, and more accurately, nonlinear operators by incorporating linear and nonlinear random embeddings and established niche methodologies of numerical analysis for the solution of the inverse problems, with a particular focus in PDEs and complex systems.\par 
The paper is organized as follows. In Section \ref{sec:problem}, we describe the problem. In Sections \ref{sec:pre_deepOnets} and \ref{sec:RPNN}, we present the preliminaries regarding the fundamentals and necessary notation for DeepOnets and the random projection neural networks (RPNNs) approaches, respectively. In section \ref{sec:RandONets} we introduce RandONets, and then, in Section \ref{sec:theory}, we extend the theorem of Chen and Chen \cite{chen1995universal} on the universal approximation of Operator to RandONets architectures.
In Section \ref{sec:numerical}, we assess the performance of RandONets and various linear and nonlinear benchmark problems and compare its performance with the vanilla DeepONet. We start with some simple problems of ODEs, where we approximate the solution operator, and then we proceed with the presentation of the results on the approximation of the evolution operator of PDEs.
We conclude, thus giving future perspectives, in Section \ref{sec:conclusion}.

\section{Description of the problem}
\label{sec:problem}
In this study, we focus on the challenging task of learning linear and nonlinear functional operators $\mathcal{F}:\mathsf{U} \rightarrow \mathsf{V}$ which constitute maps between two infinite-dimensional function spaces $\mathsf{U}$ and $\mathsf{V}$. Here, for simplicity, we consider both $\mathsf{U}$ and $\mathsf{V}$ to be subsets of the set $\mathsf{C}(\R^d)$ of continuous functions on $\R^d$. The elements of the set $\mathsf{U}$ are functions $u:\mathsf{X}\subseteq \R^d\rightarrow \R$ that are transformed to other functions $v=\mathcal{F}[u]:\mathsf{Y}\subseteq \R^d \in \R$ through the application of the operator $\mathcal{F}$. We use the following notation for an operator evaluated at a location $\bm{y} \in \mathsf{Y}\subseteq \mathbb{R}^d$
\begin{equation}
    v(\bm{y})=\mathcal{F}[u](\bm{y}).
\end{equation}
These operators play a pivotal role in various scientific and engineering applications, particularly in the context of (partial) differential equations.
By effectively learning (discovering from data) such nonlinear operators, we seek to enhance our understanding and predictive capabilities in diverse fields, ranging from fluid dynamics and materials science to financial and biological systems and beyond \cite{kevrekidis2003equation,lu2021learning,karniadakis2021physics,wang2021learning,goswami2022physics,galaris2022numerical,papaioannou2022time,lee2023learning,gallos2024data,fabiani2024task}.
One prominent example is the right-hand side (RHS) evolution operators $\mathcal{L}$ associated with differential equations (PDEs), which govern the temporal evolution of the associated system dynamics. We denote these \textit{evolution} operators in the following way:
\begin{equation}
v(\bm{x},t)=\frac{\partial u(\bm{x},t)}{\partial t}=\mathcal{L}[u](\bm{x},t), \qquad \bm{x}\in\Omega, \quad t\in[0,T],
\end{equation}
where $u:\mathsf{\Omega} \times [0,T] \subseteq \R^d \rightarrow \R$ is the unknown solution of the PDE (methods for the identification of such PDEs and in general RHS of dynamical systems with ML can be traced back to the '90s \cite{krischer1993heterogeneously,patra1999identification,siettos1999advanced,shvartsman2000order,siettos2002truncated}).
Given a state profile $u(\cdot,t):\Omega\rightarrow \R$ at each time $t$, e.g., the initial condition $u_0$ of the system at time $t=0$, the {\em evolution operator}  (the right-hand-side) of differential equations) $\mathcal{L}$ provides the corresponding time derivative (the output $v(\cdot,t)$) of the system at that time $t$. Again, a method for learning the RHS of PDEs with a different ANN architecture than DeepONet was proposed back in '90s in \cite{gonzalez1998identification}. There, the RHS was estimated in terms of spatial derivatives.\par
One can also learn the corresponding {\em solution operators} $\mathcal{S}_t$, which embody both the time integration and the satisfaction of boundary conditions, of the underlying physical phenomena. Given the initial condition $u_0$ at time $t=0$, the solution operator outputs the state profile $u(\cdot,t):\mathsf{\Omega}\rightarrow \R$ after a certain amount of time $t$:
\begin{equation}
v(\bm{x})=u(\bm{x},t)=\mathcal{S}_t[u_0](\bm{x}).
\end{equation}
We will deal with this problem in the part  II that will follow.\par

Although our objective is to learn functional operators from data, which take functions ($u$) as input, 
we must discretize them to effectively represent them 
and be able to apply network approximations. One practical approach, as implemented in the DeepOnet framework, is to use the function values ($u(\bm{x}_j)$) at a sufficient, but finite, number of locations ${\bm{x}_1, \bm{x}_2, \dots , \bm{x}_m}$, where $\bm{x}_j \in \mathsf{X}\subseteq\R^d$; these locations are referred to as ``sensors."
Other methods to represent functions in functional spaces include the use of Fourier coefficients \cite{li2020Fourier}, wavelets \cite{tripura2023wavelet}, spectral Chebychev basis \cite{fanaskov2023spectral}, reproducing kernel Hilbert spaces (RKHS) \cite{nelsen2021random}, graph neural operators \cite{li2020multipole} or meshless representations \cite{zhang2023belnet}.
Regarding the availability of data for the output function, we encounter two scenarios. In the first scenario, the functions in the output are known at the same fixed grid ${\bm{y}_1, \bm{y}_2,\dots,\bm{y}_{n}}$, where $y_i \in Y$; this case is termed as ``aligned" data. Conversely, there are cases where the output grid may vary randomly for each input function, known as ``unaligned" data. If this grid is uniformly sampled and dense enough, interpolation can be used to approximate the output function at fixed locations. Thus, this leads us back to the aligned data case. However, if the output is only available at sparse locations, interpolation becomes impractical. As we will see later in the text, despite this challenge, our approach can address this scenario, albeit with a higher computational cost for training the machine learning model (since, in such cases, the fixed structure of the data cannot be fully leveraged).

\section{Methods}
\label{sec:methods}
For the completeness of the presentation, we start with some preliminaries on the use of Neural Networks and DeepOnets for the approximation of nonlinear continuous functional operators. We then introduce the concepts of random projections for neural networks (RPNN) and extend them to the DeepOnet framework, arriving at the proposed Random Projection-based Operator networks (RandONets).

\subsection{Preliminaries on DeepOnets}
\label{sec:pre_deepOnets}
As universal approximators, feedforward neural networks (FNN) have the capability to approximate continuous functions effectively \cite{cybenko1989approximation,hornik1989multilayer,barron1993universal, leshno1993multilayer,pinkus1999approximation}. However, a lesser-known theorem by Chen \& Chen (1995) \cite{chen1995universal}, which gained prominence with the advent of DeepOnet by Lu et al. (2021) \cite{lu2021learning} and Fourier Neural Operator (FNO) by Li et al. (2020) \cite{li2020Fourier}, asserts the existence of a neural network architecture capable of approximating any continuous nonlinear operator to an arbitrary degree of accuracy. Before introducing this theorem, let us introduce the following definition:
\begin{definition}[Tauber-Wiener function \cite{chen1995universal}]
A function \( \psi \in \mathsf{C}(\R) \) is called a \( \mathsf{L}^p \) Tauber-Wiener function if, for any interval \([a,b] \subset \mathbb{R}\), the set of finite linear combinations
\[
\left\{ \sum_{i=1}^{N} w_i \psi(\xi_i x + \theta_i) \mid N \in \mathbb{N}, \xi_i \in \mathbb{R}, \theta_i \in \mathbb{R}, w_i \in \mathbb{C} \right\}
\]
is dense in \( \mathsf{L}^p([a,b]) \).
\end{definition}
\begin{theorem}[Universal approximation for functions\cite{chen1995universal}]
Suppose $K$ is a compact set in $\mathbb{R}^d$, $U$ is a compact set in $\mathsf{C}(K)$ and $\psi$ is a  Tauber-Wiener function, then $\forall$\( f \in \mathsf{U} \) and any \(\epsilon > 0\), there exist scaling factors \(\{\xi_i\}_{i=1}^{N}\) and shifts \(\{\theta_i\}_{i=1}^{N}\) both independent of $f$, and also coefficients \(\{w_i[f]\}_{i=1}^{N}\) depending on $f$, such that
\begin{equation}
    \left\| f(x) - \sum_{i=1}^{N} w_i[f] \psi(\xi_i x + \theta_i) \right\|_{\infty} < \epsilon.
\end{equation}
Moreover, the coefficient $w_i[f]$ are continuous functionals on $\mathsf{U}$.
\end{theorem}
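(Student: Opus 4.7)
The plan is to exploit the compactness of $\mathsf{U}$ in $\mathsf{C}(K)$ to reduce the problem to approximating finitely many \emph{fixed} target functions, for which the Tauber--Wiener density can be applied once and for all, with the $f$-dependence absorbed into continuous coefficient functionals. First, I would invoke the Arzel\`a--Ascoli theorem: compactness of $\mathsf{U}$ implies equicontinuity and uniform boundedness, so for any $\epsilon>0$ I can find $\delta>0$ such that $|x-y|<\delta$ implies $|f(x)-f(y)|<\epsilon/3$ uniformly in $f\in\mathsf{U}$, and also $R:=\sup_{f\in\mathsf{U}}\|f\|_\infty<\infty$. Choosing a finite cover of $K$ by cells of diameter less than $\delta$, with sample points $x_j$ and a subordinate continuous partition of unity $\{\phi_j\}_{j=1}^{M}$, I would obtain
\[
\left\| f - \sum_{j=1}^{M} f(x_j)\,\phi_j \right\|_{\infty} \le \frac{\epsilon}{3}, \qquad \forall f\in\mathsf{U},
\]
where each map $f\mapsto f(x_j)$ is a continuous linear functional on $(\mathsf{C}(K),\|\cdot\|_\infty)$. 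This performs the finite-dimensional reduction and isolates the continuous $f$-dependence.

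Next, I would use the Tauber--Wiener property of $\psi$ to approximate each \emph{fixed} partition function $\phi_j$ uniformly by a finite combination $\sum_{i=1}^{N_j} c_{ji}\,\psi(\xi_{ji}\,x+\theta_{ji})$. The scalars $c_{ji},\xi_{ji},\theta_{ji}$ are chosen once and depend on $\phi_j$, hence on $\mathsf{U}$ and $\epsilon$, but not on $f$; I would pick them so that the residual is at most $\epsilon/(3MR)$. Substituting into the previous display and relabelling the double index $(j,i)$ into a single index $k=1,\dots,N=\sum_j N_j$ yields
\[
\left\| f - \sum_{k=1}^{N} w_k[f]\,\psi(\xi_k x+\theta_k) \right\|_{\infty} < \epsilon, \qquad w_k[f]=f(x_{j(k)})\,c_{j(k)i(k)}.
\]
By construction the nodes $\{\xi_k,\theta_k\}$ are independent of $f$, while the weights $w_k[f]$ are fixed constants multiplying point-evaluation functionals, hence continuous (indeed linear) functionals of $f$ in the sup norm; a triangle inequality combining the two error bounds delivers the claim.

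The main obstacle I expect is that the Tauber--Wiener property as stated in the excerpt only guarantees density in $\mathsf{L}^p([a,b])$, whereas the second step requires uniform approximation of the continuous bumps $\phi_j$. Bridging this gap is the essential technical point: for $\psi$ continuous, one must upgrade $\mathsf{L}^p$-density to density in $\mathsf{C}[a,b]$, typically via a mollification/convolution argument that builds an approximate identity from shifted copies of $\psi$ itself, in the spirit of the Cybenko--Hornik--Leshno universal approximation proofs. Once this continuous-norm upgrade is in hand, the remainder is a clean two-stage assembly: compactness-driven finite-dimensional reduction via a partition of unity, followed by Tauber--Wiener density applied to the fixed finite family $\{\phi_j\}_{j=1}^M$, with continuity of the weight functionals being automatic from the linearity of point evaluation.
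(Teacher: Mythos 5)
The paper does not actually prove this statement; it is quoted from Chen and Chen \cite{chen1995universal} (their Theorem 3), so there is no internal proof to compare against. Your two-stage argument --- equicontinuity and uniform boundedness via Arzel\`a--Ascoli, a partition-of-unity interpolant $\sum_j f(x_j)\,\phi_j$ that is uniformly $\epsilon/3$-close over all of $\mathsf{U}$, followed by a one-time Tauber--Wiener approximation of the finitely many \emph{fixed} bumps $\phi_j$, with the $f$-dependence carried entirely by the point-evaluation functionals $f\mapsto f(x_j)$ --- is precisely the architecture of the original Chen--Chen proof, and your error bookkeeping and the continuity (indeed linearity) of the $w_k[f]$ are handled correctly.

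Two caveats. First, the obstacle you flag at the end is largely an artifact of the paper's Definition 3.1, which states the $\mathsf{L}^p$ variant of the Tauber--Wiener property; Chen and Chen's actual definition requires density in $\mathsf{C}[a,b]$ in the uniform norm, and the theorem is meant for that class, so no mollification upgrade is needed under the standard reading. (If one insists on the $\mathsf{L}^p$ definition, the upgrade you sketch is genuinely needed and is not automatic for discontinuous $\psi$; for continuous non-polynomial $\psi$ it follows from Leshno et al. \cite{leshno1993multilayer}.) Second, and more substantively: the Tauber--Wiener property, in either version, concerns \emph{univariate} targets on an interval $[a,b]\subset\R$, whereas your partition functions $\phi_j$ live in $\mathsf{C}(K)$ with $K\subset\R^d$. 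For $d>1$ the step ``approximate each fixed $\phi_j$ uniformly by $\sum_i c_{ji}\,\psi(\bm{\xi}_{ji}\cdot\bm{x}+\theta_{ji})$'' is not directly licensed by the definition; it requires the intermediate multivariate density result (Theorem 2 of \cite{chen1995universal}), obtained by first expanding $\phi_j$ in ridge functions (e.g.\ a truncated Fourier series, each mode a univariate function of $\bm{k}\cdot\bm{x}$) and only then applying the univariate Tauber--Wiener density to each ridge profile. With that lemma inserted your proof is complete; as written, the argument only covers $d=1$.
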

In other words, any function in \( \mathsf{C}(K) \) can be approximated arbitrarily closely by a finite linear combination of scaled and shifted versions of \( \psi \). It has been demonstrated that continuous non-polynomial functions are Tauber-Wiener functions \cite{leshno1993multilayer,chen1995universal}.
Then, the following theorem holds:

\begin{theorem}[Universal approximation for operators \cite{chen1995universal}]
\label{thm:chen_chen}
    Suppose that $\psi$ is a Tauber-Wiener function, $\mathsf{X}$ is a Banach space, and $\mathsf{K}_1\subset \mathsf{X}$, $\mathsf{K}_2 \subset \R^d$ are two corresponding compact sets. Let $\mathsf{U}$ be a compact set in $\mathsf{C}(\mathsf{K}_1)$, and let $\mathcal{F}:\mathsf{U}\rightarrow\mathsf{C}(\mathsf{K}_2)$ be a nonlinear continuous operator. 
    Then, for any $\epsilon>0$, there are $N,M,m\in \mathbb{N}$, and network parameters $w_{ki}$, $\xi_{kij}$, $\theta_{ki}$, $\beta_{k} \in \R$, $\bm{c}_k \in \R^d$, $\bm{x}_j \in \mathsf{K}_1$, with $k=1,\dots,N$, $i=1,\dots,M$, $j=1,\dots, m$, such that:
    \begin{equation}
        \left| \mathcal{F}(u)(\bm{y}) - \sum_{k=1}^{N} \sum_{i=1}^{M} w_{ki} \psi \big( \sum_{j=1}^{m} \xi_{kij} u(\bm{x}_j) + \theta_{ki} \big) \cdot \psi({\bm{c}_k} \cdot \bm{y} + \beta_k) \right| < \epsilon, \quad \forall u \in U , \bm{y} \in  \mathsf{K}_2.
        \label{eq:chen_chen_network}
    \end{equation}
\end{theorem}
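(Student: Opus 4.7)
The plan is to factor the approximation into a ``trunk'' piece depending on $\bm{y}$ and a ``branch'' piece depending on $u$, and to invoke the universal approximation theorem for functions twice: once, uniformly over the compact image $\mathcal{F}(\mathsf{U})\subset\mathsf{C}(\mathsf{K}_2)$, to extract trunk basis functions $\psi(\bm{c}_k\cdot\bm{y}+\beta_k)$ with coefficients that depend continuously on $u$; and a second time to approximate each such coefficient functional by a shallow network evaluated on finitely many sensor readings $u(\bm{x}_j)$. Compactness of $\mathsf{U}$ (via Arzel\`a-Ascoli) together with continuity of $\mathcal{F}$ are the two structural facts that will make the sensor locations, the trunk parameters, and all the inner weights independent of $u$.

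First, I would set $\mathsf{V}:=\mathcal{F}(\mathsf{U})$; continuity of $\mathcal{F}$ makes $\mathsf{V}$ compact in $\mathsf{C}(\mathsf{K}_2)$. Applying the function-level theorem to this compact family yields an integer $N$, parameters $\{\bm{c}_k,\beta_k\}_{k=1}^{N}$ independent of the target function, and \emph{continuous} functionals $g\mapsto w_{k}[g]$ on $\mathsf{V}$ with
\begin{equation}
\sup_{\bm{y}\in\mathsf{K}_2}\Bigl|\,g(\bm{y})-\sum_{k=1}^{N}w_{k}[g]\,\psi(\bm{c}_{k}\cdot\bm{y}+\beta_{k})\,\Bigr|<\tfrac{\epsilon}{2}\qquad\forall g\in\mathsf{V}.
\end{equation}
Setting $\alpha_{k}[u]:=w_{k}[\mathcal{F}(u)]$ transports these continuous functionals back to $\mathsf{U}$. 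Next, by equicontinuity of $\mathsf{U}$, for any prescribed $\delta>0$ I can choose a finite $\delta$-net of sensor points $\{\bm{x}_{j}\}_{j=1}^{m}\subset\mathsf{K}_{1}$ so that the sampling map $T(u):=(u(\bm{x}_{1}),\dots,u(\bm{x}_{m}))$ has sensor-close images whenever the underlying $u$'s are sup-norm close, and vice versa on $\mathsf{U}$. Continuity of $\alpha_{k}$ on the compact set $\mathsf{U}$ then forces $\alpha_{k}[u]$ to be approximable, to within $\epsilon/(2NC)$ where $C$ is a uniform bound on the trunk terms, by a continuous function $\widetilde{\alpha}_{k}$ of $T(u)$; extending $\widetilde{\alpha}_{k}$ to all of $\mathbb{R}^{m}$ via Tietze and applying the multivariate universal approximation theorem for functions a second time on the compact set $T(\mathsf{U})\subset\mathbb{R}^{m}$ produces weights $\{w_{ki},\xi_{kij},\theta_{ki}\}$ realizing the desired branch network $\sum_{i=1}^{M}w_{ki}\,\psi(\sum_{j}\xi_{kij}u(\bm{x}_{j})+\theta_{ki})$. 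A final triangle inequality collects the three $O(\epsilon)$ pieces.

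The main obstacle will be the joint uniformity: every parameter in the final bound must be independent of $u$, even though the target $\mathcal{F}(u)$ varies with $u$. This is secured only by carefully stacking the three compactness facts in the right order -- first applying function-level approximation to the compact set $\mathsf{V}=\mathcal{F}(\mathsf{U})$ to freeze the trunk parameters $\{\bm{c}_{k},\beta_{k}\}$, then using Arzel\`a-Ascoli equicontinuity of $\mathsf{U}$ to freeze a single finite sensor set $\{\bm{x}_{j}\}$ that works for every $u$, and finally re-applying function-level approximation on the finite-dimensional compact image $T(\mathsf{U})$ to freeze the branch weights. An auxiliary subtlety is that the coefficients $w_{k}[\cdot]$ extracted in the first step must be genuinely \emph{continuous} functionals (not merely well-defined), since they are composed with $\mathcal{F}$ and then approximated via sensor values -- this is precisely the ``continuous functionals on $\mathsf{U}$'' refinement built into the function-level Tauber-Wiener theorem stated above.
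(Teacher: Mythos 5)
Your proposal is correct and follows essentially the same route as the source: the paper states this theorem as a cited result of Chen \& Chen, and its own proof of the analogous RandONet version (Theorem 3.4, via Propositions 1 and 2) uses exactly your two-stage decomposition --- first the function-level approximation applied to the compact image $\mathcal{F}(\mathsf{U})$ to freeze the trunk parameters and extract continuous coefficient functionals, then a second application (via sensor sampling, equicontinuity from Arzel\`a--Ascoli, and Tietze extension) to realize each functional as a branch network. Your attention to the continuity of the functionals $w_k[\cdot]$ and to the order in which the compactness facts are stacked matches the structure of the original argument.
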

%
%

To briefly describe how the above Theorem in the original paper of Chen \& Chen in \cite{chen1995universal} works, let us assume that our goal is to approximate an operator $\mathcal{F}$, acting on the set of functions $u\in \mathsf{U}$. These functions $u$ (which are inputs to the DeepONet) are assumed to be known and sampled at $m$ fixed locations $x_j$ in the domain $\mathsf{K}_1$. The vector $\bm{U}=(u(\bm{x}_1),u(\bm{x}_2),\dots,u(\bm{x}_m)) \in \R^{m\times 1}$ (a column vector) is the input of a single-hidden layer FNN with $M$ neurons, the so-called \emph{branch network}, that process the function values space. At the same time there is a second single-hidden layer FNN with $N$ neurons, the so-called \emph{trunk network}, that process the new location $\bm{y} \in K_2 \subset \R^{1\times d}$ (for convenience let us assume it as a row vector) in which we have to evaluate the transformed function $\mathcal{F}[u]$. For convenience, let us define the vector $\bm{B}=(B_1,B_2,\dots,B_M)\in \R^{M\times 1}$ (column vector) of hidden layers value of the branch network:
\begin{equation}
    B_i(\bm{U})=\psi \big( \sum_{j=1}^{m} \xi_{kij} u(\bm{x}_j) + \theta_{ki} \big), i=1,2,\dots,M,
    \label{eq:branch}
\end{equation}
and let us define the vector $\bm{T}=(T_1,T_2,\dots,T_N) \in \R^{1\times N}$ (row vector):
\begin{equation}
    T_k(\bm{y})=\psi({\bm{c}_k}^T \cdot \bm{y} + \beta_k), k=1,2,\dots, N.
    \label{eq:trunk}
\end{equation}
Then, the output of the network as in Eq.~\eqref{eq:chen_chen_network} can be written as:
\begin{equation}
    \mathcal{F}[u](\bm{y})\simeq \sum_{k=1}^{N} \sum_{i=1}^{M} w_{ki} B_i(\bm{U}) T_k(\bm{y}) \Leftrightarrow  \mathcal{F}[u](y)=\bm{T} W \bm{B}=\langle \bm{T}, \bm{B}\rangle_W,
    \label{eq:branch_trunk}
\end{equation}
%
where the matrix $W \in \R^{N\times M}$ has elements $w_{ki}$. As can be seen, the output of the scheme is a weighted inner product $\langle\cdot,\cdot\rangle_{W}$ of the trunk and branch networks. In the next section, we will take advantage of this formulation for an efficient and accurate training of the network through the use of random bases.\par
We note, that the original theorem \ref{thm:chen_chen} considers only two shallow feedforward neural networks with a single hidden layer. On the other hand, DeepONet uses deep networks instead, but also can incorporate any other type of networks such as CNNs. An extension of the theorem \ref{thm:chen_chen}, given by Lu et al. \cite{lu2021learning}, states that the branch network and the trunk network can be chosen by diverse classes of ANNs, which satisfy the classical universal approximation theorem.
Also, while the Chen and Chen architecture in \eqref{eq:chen_chen_network} does not include an output bias, the DeepOnet usually utilize biases to improve generalization performance \cite{lu2021learning}. More broadly, DeepONets can be considered conditional models, where $\mathcal{F}[u](\bm{y})$ represents a function of $\bm{y}$ given $u$. These two independent inputs, $u$ and $\bm{y}$, are given as inputs to the trunk and branch networks, respectively. 
At the end, the embeddings of $u$ and $\bm{y}$ are combined through an inner product operation. However, the challenge remains in finding efficient approaches to train these networks.
%
It is also worth noting that, as it happens for shallow FNNs, while the universal approximation theorem for operators guarantees the existence of a successful approximation DeepOnet, it does not offer a numerical method for constructing the specific weights and biases of the networks. Furthermore, deep learning networks used in DeepONet do not come without limitations. While they enhance the models' ability to capture complex relationships, they introduce challenges in the optimization process. Specifically, determining the values of the networks \emph{parameters and hyperparameters} requires significant computational resources, entailing complexity that can lead to moderate generalization ability and/or numerical approximation accuracy.
Hence, it is nearly implicit that training such DeepOnet heavily relies on parallel computing and GPU-based computations.\par 
Here we present a computationally efficient method for approximating nonlinear operators, based on shallow networks with a single hidden layer, as in the paper of Chen and Chen in \cite{chen1995universal}, coupled with random projections, that relaxes the ``curse of dimensionality'' in the training process. 
First, we give some preliminaries for random projection neural networks, and then we proceed with the presentation of the RandONet and building on previous works, we prove its universal approximation property for linear and nonlinear operators.

\subsection{Preliminaries on Random Projection Neural Networks}
\label{sec:RPNN}
Random Projection Neural Networks (RPNN) are a type of single-hidden-layer ANNs with randomized activation functions to facilitate the training process. The family of RPNNs includes random weights neural networks (RWNN) \cite{schmidt1992feed}, Random Vector Functional Link Network (RVFLN) \cite{pao1992functional,igelnik1995stochastic}, Reservoir Computing (RC)\cite{jaeger2001echo,jaeger2002adaptive}, Extreme Learning Machines (ELM) \cite{huang2006extreme}
and Random Fourier Features Networks (RFFN) \cite{rahimi2007random,rahimi2008weighted}. Some seeds of this idea can be also found in \emph{gamba perceptron} proposed initially by Frank Rosenblatt (1962) \cite{rosenblatt1962perceptions} and the Distributed Method algorithm proposed by Gallant (1987) \cite{gallant1987random}. For a review on random projection neural networks see in \cite{scardapane2017randomness,fabiani2024random}.\par

Here we consider for simplicity, and refer with the acronym RP-FNN to, a single hidden layer feed-forward neural network, denoted by a vector function $\bm{f}\equiv(f_1,f_2,\dots,f_{n}):\R^d \rightarrow\R^{n}$, with $n$ outputs $f_k$, $N$ neurons and with an activation function $\psi:\R\rightarrow\R$.
To simplify our notation, we consider here each scalar output $y_k=f_k(\bm{x})$ of the RP-FNN:
\begin{equation}
f_k(\bm{x};W,\bm{\beta},C,\bm{P})=\sum_{j=1}^N w_{kj} \psi ({\bm{c}_j}\cdot \bm{x}+ \beta_j;\bm{P}),\qquad k=1,\dots,n,
\label{eq:RPNN}
\end{equation}
In RP-FNNs, the weights $W$ are the only trainable parameters of the network. While the internal weights/parameters and hyperparameters of $\psi_j$ are randomly pre-determined and fixed. In order to simplify the notation, let us group the set of parameters and hyperparameters in the vector of random variables $\bm{\alpha}$ over the set $ \mathsf{A}\subseteq\R^q$, containing the stacking of all parameters in $\{C,\bm{\beta},\bm{P}\}$. The vector of random variables $\bm{\alpha}$ is in general sampled from a probability distribution function $p_{\bm{\alpha}}$ on $\mathsf{A}$. Thus, we can rewrite Eq.~\eqref{eq:RPNN} as:
where the matrix $C \in \R^{N\times d}$, with rows $\bm{c}_j\in\R^{1\times d}$, contains \emph{a priori} randomly-fixed  \emph{internal weights}, sampled appropriately from a probability distribution $p_{\bm{c}}$, connecting the input layer with the hidden layer (in other configurations, they can also be set all to ones, see e.g.\cite{fabiani2023parsimonious}); the vector $\bm{\beta}=(\beta_1,\dots,\beta_N) \in \R^{N\times 1}$ includes \emph{a priori} randomly-fixed vector of \emph{biases} (shifts), sampled appropriately from a probability $p_{\beta}$; the vector $\bm{x}\in\R^{d\times 1}$ represents the input, the matrix $W\in\R^{n\times N}$, with elements $w_{kj}$, contains the \emph{external} weights that connect the hidden layer to the output; the vector $\bm{P}$ includes any additional required hyperparameters (either deterministically or randomly fixed) for the activation function, for example the shape parameters of Gaussian Radial Basis functions.\par
\begin{equation}
    f_k(\bm{x};\bm{\alpha})=\sum_{j=1}^N w_{kj} \psi(\bm{x};\bm{\alpha}_j),
\end{equation}
where $\bm{\alpha}_j\in\mathsf{A}$ are $N$ realizations of the random variables $\bm{\alpha}$.\par
Having  $\{\bm{x}_i, f_k(\ \bm{x}_i)\}, \, i=1,2,\dots,m$ pairs of training data for each $f_k$, the unknown parameters $\bm{w}_k\in\R^N$, which are the rows of the matrix $W$, can be computed for example using truncated SVD, preconditioned QR decomposition with regularization, and/or Tikhonov regularization. For example, with truncated SVD, the least-squares solution reads:
\begin{equation}
\bm{w}_k = \sum_{i=1}^k \frac{{u_i}^T Y_k}{\sigma_i}v_i,
\end{equation}
with $u_i$, $v_i$ being the first $k$ right and left singular vectors of $\Psi$ is the $N\times m$ matrix with entries $\Psi_{ji}=\psi (\bm{x}_i;\bm{\alpha}_j)$, and $\sigma_i$, the corresponding singular values.
On the other hand, the Tikhonov regularization reads:
\begin{equation}
\bm{w}_k = \arg \min_{\bm{w}_k} \left\{ \|\bm{w}_k \Psi - Y_k\|^2 + \lambda \|\bm{w}_k L\|^2 \right\},
\end{equation}
where  $Y_k$ is the vector of dimension $m$, containing the values (samples) of $f_k$ at $m$ sampling points $\bm{x}_i$, $\lambda > 0$ is the regularization parameter and $L\in\R^{N\times N}$ is a regularization operator, often taken as the identity matrix $I$.
The Tikhonov regularized solution can be expressed as:
\begin{equation}
\bm{w}_k = Y_k \Psi^T (\Psi \Psi^T + \lambda L L^T)^{-1}.
\label{eq:Tregular}
\end{equation}
Setting, $L=I$, the above problem can be solved, e.g., by substituting the truncated SVD of \(\Psi\) into the Tikhonov regularization formula to get: \cite{fierro1997regularization}:
\begin{equation}
\bm{w}_k = \sum_{i=1}^r \frac{{\sigma_i}^2}{{\sigma_i}^2+\lambda^2}\frac{{u_i}^T Y_k}{\sigma_i}v_i,
\end{equation}
where $r$ is the rank of $\Psi$.
Let us now define the \emph{hidden layer map} $\bm{\phi}_N:\R^d\times\mathsf{A}\rightarrow\R^N$, that maps the input layer to the (random) features $\bm{z}$ of the hidden layer, as:
\begin{equation}
    \bm{z}=\bm{\phi}_N(\bm{x};\bm{\alpha})=[\psi({\bm{c}_{1}} \cdot \bm{x}+\beta_1;P),\psi({\bm{c}_{2}} \cdot \bm{x}+\beta_2;P),\dots,\psi({\bm{c}_{N}} \cdot \bm{x}+\beta_N;P)].
\end{equation}
In its simplest form (taking just a linear projection of the input space), the above is -- conceptually equivalent -- to the celebrated Johnson-Lindenstrauss (JL) lemma \cite{johnson1984extensions}, which states that there exists an approximate isometry projection $\bm{\phi}_N: \R^d \rightarrow \R^N$ of input data $\bm{x}\in\R^{d}$ induced by a random matrix $R \in \R^{N \times d}$:
\begin{equation}\label{JL}
\bm{z}^{JL}=\bm{\phi}^{JL}_N (\bm{x};R) = \dfrac{1}{\sqrt{N}} R \,\bm{x},
\end{equation}
where the matrix $R = [R_{ji}] \in \R^{N \times d}$ has components which are i.i.d.~random variables sampled from a standard normal distribution. Let us assume $X$ to be a set of $m$ sample points $\bm{x}\in \R^d$, such that $N\ge O\big(\log (m)/\epsilon^2\big)$. Then with probability $\mathbb{P}$, for $\forall \epsilon \in (0,1)$, we obtain:
\begin{equation}
    \mathbb{P}\big(\big|\|\bm{x}\|_2-\|\bm{z}^{JL}\|_2\big|\le \epsilon\|\bm{x}\|_2\big)\ge 1-2\exp\biggl(-(\epsilon^2-\epsilon^3)\frac{N}{4}\biggr).
\end{equation}
Let us consider a simple regression problem, with training data $(\bm{x}^{(s)},\bm{y}^{(s)})\in\R^d\times \R^n$, $s=1,\dots,m$. Let us call the matrix $X\in\R^{d\times m}$ the collection of inputs, and the matrix $Y\in\R^{n\times m}$ the collection of outputs. A simple approach can consist in considering linear random JL projections of the input, and then approximating the output as a weighted linear combination of random JL features. Thus, one finds $W \in \R^{n\times N}$ such that:
\begin{equation}
    Y= \frac{1}{\sqrt{N}} W  R X.
\end{equation}
At this stage, one can solve for the unknown parameters in $W$ using a linear regularization problem as briefly described above.
When considering nonlinear projections, the training of an RP-FNN, 
involves solving a system of $n\times m$ linear algebraic equations, with $n \times N$ unknowns:
\begin{equation}
    W\Phi_N(X;\bm{\alpha})=Y, \qquad \Phi_{ji}=\psi(\bm{x}_i;\bm{\alpha}_j),
    \label{eq:RPNN_solve}
\end{equation}
where $\Phi_N(X) \in \R^{N\times m}$ is the random matrix of the hidden layer features, with elements $\Phi_{ji}$. Note that despite the nonlinearity of $\psi$, the problem is still linear with respect to the external weights $W$.\par 
While JL linear random projections are appealing due to their simplicity, studies have highlighted that well-designed \emph{nonlinear} random projections can outperform such linear embeddings \cite{barron1993universal, igelnik1995stochastic, gorban2016approximation,fabiani2024random}. 
In this context, back in '90s Barron \cite{barron1993universal} proved that for functions with integrable Fourier transformations, a random sample of the parameters of sigmoidal basis functions from an appropriately chosen distribution results to an approximation error on the order of $O(1/(n^{(2/d)}))$. Igelnik and Pao \cite{igelnik1995stochastic} extended Barron's proof \cite{barron1993universal} for any family of $L^2$ integrable basis functions, thus addressing the so-called RVFLNs \cite{pao1992functional}. Later on, the works of Rahimi and Recht \cite{rahimi2007random,rahimi2008weighted} have explored the effectiveness of nonlinear random bases for preserving any shift-invariant kernel distances. It is also worth mentioning that the ``kernel trick" \cite{scholkopf2000kernel,scholkopf2005kernel}, a common feature approach in machine learning, including Support Vector Machines (SVMs) and Gaussian Processes (GPs), provides a straightforward method to generate features for algorithms that rely solely on the inner product between pairs of input points: 
\begin{equation}
    \langle \tilde{\bm{\phi}}(\bm{u}), \tilde{\bm{\phi}}(\bm{v}) \rangle = K(\bm{u},\bm{v}),
\end{equation}
where $\tilde{\bm{\phi}}$ represents a generic implicit lifting and $K$ is a kernel distance function. This technique is commonly employed to effectively handle high-dimensional data without explicitly computing the feature vectors $\tilde{\bm{\phi}}(\bm{u})$ and $\tilde{\bm{\phi}}(\bm{v})$.\par
However, large training sets often lead to significant computational and storage costs. Instead of depending on the implicit lifting provided by the kernel trick, we seek an \emph{explicit} mapping of the data to a low-dimensional Euclidean inner product space, using a nonlinear randomized (randomly parametrized) feature map $\bm{\phi}_N(\cdot,\bm{\alpha}): \mathbb{R}^d \times \mathsf{A}\rightarrow \mathbb{R}^N$:
\begin{equation}
    K(\bm{u},\bm{v}) \approx \bm{\phi}_N(u;\bm{\alpha})^T \bm{\phi}_N(v;\bm{\alpha}),
\end{equation}
where $\bm{\alpha}\in\R^q$ is a set of hyperparameters (random variables) sampled from a probability distribution $p_{\bm{\alpha}}$.\par
Here, for the sake of completeness of the presentation, we restate the following theorem \cite{rahimi2007random},
\begin{theorem}[Low-distortion of kernel-embedding \cite{rahimi2007random}]
Let $K$ be a positive definite shift-invariant kernel $K(\bm{u}, \bm{v}) = K(\bm{u} - \bm{v})$. Consider the Fourier transform $p_{K,\bm{\alpha}}=\hat{\mathcal{F}}[K]$ of the kernel $K$, resulting a probability density function (pdf) $p_{K,\bm{\alpha}}$ in the frequency space $\mathsf{A}$: $p_{K,\bm{\alpha}} (\alpha) = \frac{1}{2\pi} \int e^{j\alpha\Delta}K(\Delta)d\Delta$, and draw $N$ i.i.d.~samples weights $\bm{\alpha}_1, \dots, \bm{\alpha}_N \in\mathbb{R}^d$ from $p_{K,\bm{\alpha}}$. Define
\begin{equation}
    \bm{\phi}_N(\bm{u};\bm{\alpha}) \equiv \sqrt{\frac{1}{N}} [\cos(\bm{\alpha}_1^T \bm{u}), \dots, \cos(\bm{\alpha}_n^T \bm{u}),\sin(\bm{\alpha}_1^T \bm{u}),\dots,\sin(\bm{\alpha}_n^T \bm{u})].
\end{equation}
Then,$\forall \epsilon>0$
    \begin{equation}
        \mathbb{P}\left[(1-\epsilon)K(\bm{u},\bm{v}) \leq \bm{\phi}_N(\bm{u})^T \bm{\phi}_N(\bm{v})\leq (1+\epsilon)K(\bm{u},\bm{v})\right] \geq 1-O\left(\exp\left(-\frac{N\epsilon^2}{4(d+2)}\right)\right),
    \end{equation}
    where $\mathbb{P}$ stands for the probability function. 
\end{theorem}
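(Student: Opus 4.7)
The plan is to build the result in three stages: first, invoke Bochner's theorem to justify that $p_{K,\bm{\alpha}}$ is a bona fide probability density; second, recognize $\bm{\phi}_N(\bm{u})^T \bm{\phi}_N(\bm{v})$ as an unbiased empirical estimator of $K(\bm{u},\bm{v})$; third, apply a concentration inequality of Hoeffding type to an average of i.i.d.\ bounded random variables.

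For the first step, I would note that a continuous positive-definite shift-invariant kernel $K$, normalized so that $K(\bm{0})=1$, is (by Bochner's theorem) the Fourier transform of a probability measure on $\R^d$. Under the assumption that $K$ is integrable, this measure admits a density $p_{K,\bm{\alpha}}$, so the sampling scheme in the statement is well-defined and
\begin{equation}
K(\bm{u}-\bm{v}) = \int_{\R^d} p_{K,\bm{\alpha}}(\bm{\alpha})\, e^{j\bm{\alpha}^T(\bm{u}-\bm{v})}\, d\bm{\alpha} = \mathbb{E}_{\bm{\alpha}\sim p_{K,\bm{\alpha}}}\!\left[e^{j\bm{\alpha}^T(\bm{u}-\bm{v})}\right].
\end{equation}
Since $K$ is real-valued, the imaginary part vanishes, and the trigonometric identity $\cos(\bm{\alpha}^T(\bm{u}-\bm{v})) = \cos(\bm{\alpha}^T\bm{u})\cos(\bm{\alpha}^T\bm{v}) + \sin(\bm{\alpha}^T\bm{u})\sin(\bm{\alpha}^T\bm{v})$ rewrites the expectation as an inner product of the elementary feature maps $h_{\bm{\alpha}}(\bm{x}) = [\cos(\bm{\alpha}^T\bm{x}),\sin(\bm{\alpha}^T\bm{x})]$. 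Hence
\begin{equation}
K(\bm{u},\bm{v}) = \mathbb{E}_{\bm{\alpha}}\!\left[h_{\bm{\alpha}}(\bm{u})^T h_{\bm{\alpha}}(\bm{v})\right],
\end{equation}
and plugging in $N$ i.i.d.\ draws yields precisely $\bm{\phi}_N(\bm{u})^T \bm{\phi}_N(\bm{v}) = \tfrac{1}{N}\sum_{i=1}^N h_{\bm{\alpha}_i}(\bm{u})^T h_{\bm{\alpha}_i}(\bm{v})$, an empirical mean whose expectation is $K(\bm{u},\bm{v})$.

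For the concentration step, each summand $Z_i := h_{\bm{\alpha}_i}(\bm{u})^T h_{\bm{\alpha}_i}(\bm{v}) = \cos(\bm{\alpha}_i^T(\bm{u}-\bm{v}))$ is almost surely bounded in $[-1,1]$, so Hoeffding's inequality immediately yields the pointwise bound
\begin{equation}
\mathbb{P}\!\left[\left|\bm{\phi}_N(\bm{u})^T\bm{\phi}_N(\bm{v}) - K(\bm{u},\bm{v})\right| \ge \epsilon\,|K(\bm{u},\bm{v})|\right] \le 2\exp\!\bigl(-N\epsilon^2/2\bigr),
\end{equation}
after absorbing $|K|\le 1$ into $\epsilon$. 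The factor $(d+2)$ in the denominator of the stated bound indicates that the intended conclusion is actually a uniform one, valid simultaneously over a compact difference set $\mathcal{M}\subset\R^d$. To obtain it, I would proceed via a standard $\epsilon$-net argument: cover $\mathcal{M}$ with a net of radius $r$ (containing $O((\mathrm{diam}(\mathcal{M})/r)^d)$ balls), apply the pointwise Hoeffding bound at each center together with a union bound, and control the deviation inside each ball by the Lipschitz constant of $s(\Delta):=\bm{\phi}_N(\bm{u})^T\bm{\phi}_N(\bm{v}) - K(\Delta)$, whose gradient norm is bounded in expectation by the second moment $\sigma_p^2 = \mathbb{E}_{\bm{\alpha}}[\|\bm{\alpha}\|^2]$ (finite for integrable Fourier-transformable kernels, e.g.\ Gaussian). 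Optimizing $r$ in the resulting trade-off between covering size and Lipschitz slack gives the $\exp(-N\epsilon^2/(4(d+2)))$ factor.

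The main obstacle is this last step: the covering/Lipschitz argument is the delicate part, requiring a high-probability control of the gradient of $s(\Delta)$ via Markov's inequality on $\sigma_p^2$, and a careful balancing of the two sources of error (net discretization versus pointwise concentration). The first two stages are essentially bookkeeping once Bochner's theorem is invoked; the quantitative strength of the theorem — in particular the explicit dimension dependence — is entirely driven by this final covering argument.
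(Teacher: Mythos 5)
This theorem is one the paper explicitly \emph{restates} from Rahimi and Recht \cite{rahimi2007random} without proof, so there is no in-paper argument to compare against; your proposal has to be judged against the original source. Your outline is in fact the standard proof from that source: Bochner's theorem to legitimize $p_{K,\bm{\alpha}}$ as a density, the product-to-sum identity showing $\bm{\phi}_N(\bm{u})^T\bm{\phi}_N(\bm{v})=\tfrac{1}{N}\sum_i\cos(\bm{\alpha}_i^T(\bm{u}-\bm{v}))$ is an unbiased estimator of $K(\bm{u}-\bm{v})$, pointwise Hoeffding, and then the $\epsilon$-net/Lipschitz argument over a compact difference set, whose optimization is exactly what produces the $4(d+2)$ in the exponent and the polynomial prefactor hidden in the $O(\cdot)$. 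That is the right route and correctly identifies where the dimension dependence comes from.

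One caveat worth flagging: the bound as stated here is \emph{multiplicative} ($|\widehat{K}-K|\le\epsilon K$), whereas the Rahimi--Recht claim your argument actually delivers is \emph{additive} ($|\widehat{K}-K|\le\epsilon$). Your remark about ``absorbing $|K|\le 1$ into $\epsilon$'' goes the wrong way: Hoeffding applied to the event $|\widehat{K}-K|\ge\epsilon\,|K(\bm{u},\bm{v})|$ gives $2\exp(-N\epsilon^2K(\bm{u},\bm{v})^2/2)$, which degenerates when $K(\bm{u},\bm{v})$ is small, so the additive bound does not imply the relative one without a lower bound on $K$ over the set considered. This is really an imprecision in the theorem as transcribed in the paper rather than a gap in your reasoning, but if you want to prove the statement literally as written you would need to either restrict to pairs with $K(\bm{u},\bm{v})$ bounded away from zero or revert to the additive form of the conclusion.
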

The above approach for the kernel approximation, employing trigonometric activation functions, is also known as Random Fourier Features (RFFN). An equivalent result can be obtained by employing only cosine as the activation function and random biases in $[0,2\pi]$ \cite{rahimi2007random}. More generally, there is no constraint in considering trigonometric activation functions, as sigmoid and radial basis functions have equivalently shown remarkable results \cite{fabiani2021numerical,fabiani2023parsimonious,fabiani2024random,dong2021computing,dong2021local,dietrich2023learning,patsatzis2023slow}. Here we restate, the following theorem \cite{rahimi2008weighted,rahimi2008uniform}:
\begin{theorem}(cf. Theorem 3.1 and 3.2 in \cite{rahimi2008uniform}))
\label{thm:rahimi}
Consider the parametric set activation functions on $\mathsf{X}\subseteq\R^d$, $\psi(\bm{x};\bm{\alpha}):\mathsf{X}\times \mathsf{A} \rightarrow \R$ parametrized by random variables $\bm{\alpha}$ in $\mathsf{A}$, that satisfy $\sup_{\bm{x},\bm{\alpha}}|\phi(\bm{x},\bm{\alpha})|\leq 1$. Let $p$ be a probability distribution on $\mathsf{A}$ and $\mu$ be a probability measure on $\mathsf{X}$ and the corresponding norm $\|f \|_{L^2(\mu)}=\int_{\mathsf{X}} f(\xb)^2\mu(d\xb)$. Define the set:
\begin{equation}\label{infsolution}
\mathsf{G}_p \equiv \biggl\{g(\xb)=\int_{\mathsf{A}} w(\bm{\alpha})\phi(\bm{x};\bm{\alpha}) d\bm{\alpha} : \, \|g\|_{p(\bm{\alpha})}<\infty \biggr\},\, \| g \|_p:=\sup_{\bm{\alpha} \in \mathsf{A}}\|w(\bm{\alpha})/p(\bm{\alpha}) \|.
\end{equation}
Fix a function $g^*$ in $\mathsf{G}_p$. Then, for any $\delta >0$, there exist $N\in\N$, and $\bm{\alpha}_1,\bm{\alpha}_2,\dots,\bm{\alpha}_N$ of $\bm{\alpha}$ drawn i.i.d. from $p$, and a function $\hat{g}$  in the random set of finite sums
\begin{equation}
\mathsf{\hat{G}}_{\bm{\alpha}}\equiv \biggl\{\hat{g} \, : \, \hat{g}(\xb)=\sum_{j=1}^N w_j \phi(\bm{x};\bm{\alpha}_j)\biggr\}
\end{equation}
such that
\begin{equation}\label{uniformconv}
\sqrt{\int_{\mathsf{X}}(g^*(\bm{x})- \hat{g}(\bm{x}))^2d\bm{\mu}(\bm{x})}\leq\dfrac{||g^*||_p}{\sqrt{N}}\biggl(1+\sqrt{2\log \dfrac{1}{\delta}}\biggr),
\end{equation}
holds with probability at least $1-\delta$.
Moreover, if $\phi(\bm{x} ;\bm{\alpha})=\varphi(\bm{\alpha}\cdot\bm{x})$, for a $L$-Lipschitz function $\psi$, the above approximation is uniform (i.e. in the supremum norm).
\end{theorem}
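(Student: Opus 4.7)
The plan is to realize $g^*$ as an expectation under $p$ and apply a Monte-Carlo argument in the Hilbert space $L^2(\mu)$: first obtain an $L^2$ mean-squared bound via a direct variance calculation, and then upgrade it to a high-probability statement via McDiarmid's bounded-differences inequality. To begin, I would write
\begin{equation*}
g^*(\bm{x}) = \int_{\mathsf{A}} \frac{w(\bm{\alpha})}{p(\bm{\alpha})}\,\psi(\bm{x};\bm{\alpha})\,p(\bm{\alpha})\,d\bm{\alpha} = \mathbb{E}_{\bm{\alpha}\sim p}\!\left[\frac{w(\bm{\alpha})}{p(\bm{\alpha})}\,\psi(\bm{x};\bm{\alpha})\right],
\end{equation*}
and define the candidate $\hat{g}(\bm{x}) = \frac{1}{N}\sum_{j=1}^N f_j(\bm{x})$ with $f_j(\bm{x}) = \frac{w(\bm{\alpha}_j)}{p(\bm{\alpha}_j)}\,\psi(\bm{x};\bm{\alpha}_j)$. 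By construction $\hat{g}\in\hat{\mathsf{G}}_{\bm{\alpha}}$ with weights $w_j = \frac{1}{N}\frac{w(\bm{\alpha}_j)}{p(\bm{\alpha}_j)}$, and the hypotheses $\sup_{\bm{x},\bm{\alpha}}|\psi|\le 1$ together with the definition of $\|g^*\|_p$ give the almost-sure bound $\|f_j\|_{L^2(\mu)}\le \|g^*\|_p$ (using that $\mu$ is a probability measure) as well as $\mathbb{E}[f_j(\bm{x})]=g^*(\bm{x})$.

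For the mean rate, independence of $f_1-g^*,\ldots,f_N-g^*$ yields
\begin{equation*}
\mathbb{E}\,\|\hat{g}-g^*\|_{L^2(\mu)}^2 = \frac{1}{N^2}\sum_{j=1}^N \mathbb{E}\,\|f_j-g^*\|_{L^2(\mu)}^2 \le \frac{\|g^*\|_p^2}{N},
\end{equation*}
so Jensen's inequality gives $\mathbb{E}\,\|\hat{g}-g^*\|_{L^2(\mu)}\le \|g^*\|_p/\sqrt{N}$. For the concentration, I would view $F(\bm{\alpha}_1,\ldots,\bm{\alpha}_N) = \|\hat{g}-g^*\|_{L^2(\mu)}$ as a function of the i.i.d. samples. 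Replacing a single $\bm{\alpha}_j$ changes $\hat{g}$ by $(f_j-f_j')/N$, so by the triangle inequality $F$ changes by at most $c_j := 2\|g^*\|_p/N$. McDiarmid's inequality then gives
\begin{equation*}
\mathbb{P}\big(F \ge \mathbb{E}F + t\big) \le \exp\!\left(-\frac{2t^2}{\sum_{j=1}^N c_j^2}\right) = \exp\!\left(-\frac{N t^2}{2\|g^*\|_p^2}\right);
\end{equation*}
setting $t=\|g^*\|_p\sqrt{2\log(1/\delta)/N}$ and adding the mean bound reproduces exactly $\|g^*\|_p(1+\sqrt{2\log(1/\delta)})/\sqrt{N}$ with probability at least $1-\delta$.

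The main obstacle is the last claim, that when $\psi(\bm{x};\bm{\alpha}) = \varphi(\bm{\alpha}\cdot \bm{x})$ with $\varphi$ being $L$-Lipschitz the same estimate upgrades to the supremum norm. The plan here is a covering argument: cover the compact $\mathsf{X}$ by an $\epsilon$-net $\mathsf{X}_\epsilon$ of cardinality $O\big((\mathrm{diam}(\mathsf{X})/\epsilon)^d\big)$; apply the scalar Hoeffding/bounded-differences bound at each point of $\mathsf{X}_\epsilon$ together with a union bound to control $\sup_{\bm{x}\in\mathsf{X}_\epsilon}|\hat{g}(\bm{x})-g^*(\bm{x})|$; and then interpolate between net points using the uniform Lipschitz estimate $|\hat{g}(\bm{x})-\hat{g}(\bm{x}')| \le L\,\|g^*\|_p\,\|\bm{x}-\bm{x}'\|\,\sup_j\|\bm{\alpha}_j\|$ (and the analogous bound for $g^*$), where $\sup_j\|\bm{\alpha}_j\|$ is controlled in high probability under $p$ by a separate tail bound. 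Optimizing $\epsilon$ against $N$ then recovers the $O(1/\sqrt{N})$ rate in the supremum norm, up to logarithmic factors in $N$, $L$, and $\mathrm{diam}(\mathsf{X})$.
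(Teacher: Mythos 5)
Your proposal is essentially correct, but there is nothing in the paper to compare it against: the paper does not prove this statement. It is imported verbatim (``cf.\ Theorem 3.1 and 3.2 in Rahimi and Recht (2008)'') and used as a black box in the proof of Proposition~1. What you have written is, in effect, a reconstruction of the proof from the cited source: the $L^2$ part is the classical Maurey/Jones/Barron empirical-average argument in the Hilbert space $L^2(\mu)$ (unbiased estimator $\hat{g}=\frac{1}{N}\sum_j f_j$ with $\|f_j\|_{L^2(\mu)}\le\|g^*\|_p$, orthogonality of independent mean-zero increments for the variance, Jensen for the mean, McDiarmid with increments $2\|g^*\|_p/N$ for the concentration), and your constants land exactly on the stated bound $\|g^*\|_p(1+\sqrt{2\log(1/\delta)})/\sqrt{N}$. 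The covering-number-plus-Lipschitz-interpolation outline for the supremum-norm upgrade is likewise the standard route and is the one used in the original reference.

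Two caveats, neither of which is a defect relative to the paper since the paper states the theorem with the same omissions. First, the uniform-norm part silently requires $\mathsf{X}$ to be compact (or at least bounded) and $p$ to have controlled tails (Rahimi and Recht assume a finite second moment $\sigma_p^2=\mathbb{E}\|\bm{\alpha}\|^2$); your step controlling $\sup_j\|\bm{\alpha}_j\|$ by a ``separate tail bound'' is where this assumption enters, and with only a second moment the cleaner course is to bound the \emph{expected} Lipschitz constant of the error $\hat{g}-g^*$ via $\sigma_p$ and apply Markov, rather than a maximal tail bound, which is how the original proof avoids the extra logarithmic factors you concede. Second, the norm $\|g\|_p=\sup_{\bm{\alpha}}|w(\bm{\alpha})/p(\bm{\alpha})|$ depends on the chosen representation of $g$ as an integral, which need not be unique; strictly one should take an infimum over representations, but this is a definitional issue inherited from the statement itself and does not affect your argument, which fixes one representation and works with it throughout.
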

The above Theorem implies that the function class $G_p$ can be approximated to any accuracy when $N\rightarrow \infty$. Moreover (see \cite{rahimi2008uniform}) this class of functions is dense in Reproducing Kernel Hilbert Spaces defined by $\phi$ and $p$.
For a detailed discussion on the pros and cons of function approximation with such random bases see \cite{gorban2016approximation}. 
Finally, very recently, Fabiani (2024) \cite{fabiani2024random} have theoretically proved the existence and uniqueness of RP-FNN of the best approximation and their exponential convergence rate when approximating low-dimensional infinitely differentiable functions. These theoretical results are also numerically validated through extensive benchmarks in \cite{fabiani2024random}. This showcases a concrete possibility of bridging the gap between theory and practice in ANN-based approximation \cite{adcock2021gap,fabiani2024random}.

\subsection{Random Projection-based Operator Networks (RandONets)}
\label{sec:RandONets}
In this section, we present RandONets for approximating nonlinear operators. Building on previous works, we first demonstrate that the proposed shallow--single hidden layer--random projection neural networks are universal approximators of non-linear operators. Then we discuss how RandONets can be used in both the aligned and unaligned data cases, and finally present their numerical implementation for the solution of the inverse problem: the approximation of the differential operator, i.e., the RHS of the differential equations as well as their solution operator. 
\begin{figure}[ht!]
    \centering
    \includegraphics[trim={4.5cm 0cm 2cm 1cm},clip,width=0.6\textwidth]{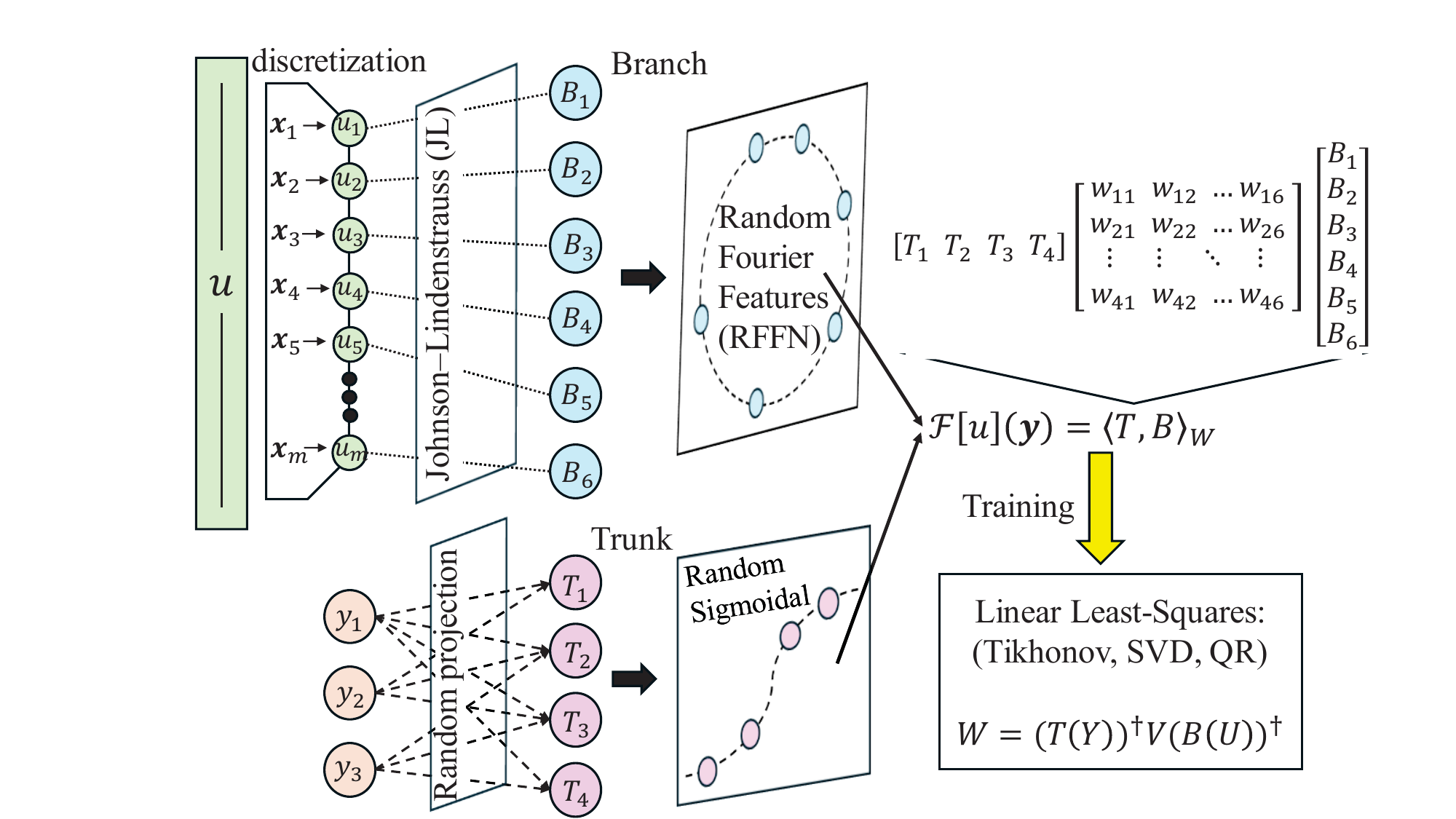}
    \caption{Schematic of the Random Projection-based Operator Network (RandOnet). The RandOnet first discretizes the input function ($u$) over a fixed grid of spatial points. Then it separately embeds the space of the spatial locations ($\bm{y}$) into a random hidden layer (e.g., with sigmoidal activations functions) and the space of the discretized functions into low-distortion kernel-embedding (e.g., with Johnson-Lindenstrauss random projections \cite{johnson1984extensions} or Rahimi and Recht Random Fourier Features \cite{rahimi2007random}). Finally, the output is composed of a weighted ($W$) inner product of the branch ($B$) and trunk ($T$) features. The training can be performed through linear least-squares techniques (e.g., Tikhonov regularization, SVD and QR decomposition).}
    \label{fig:randOnet}
\end{figure}

\subsubsection{RandONets as universal approximators of nonlinear operators}
\label{sec:theory}
In this section, we prove that RandONets are universal approximators of nonlinear operators.
Following the methodological thread in \cite{chen1995universal}, we first state the following proposition:

\begin{prop}
\label{thm:theorem3chenchen_randonet}
Let $\mathsf{K} \subset \R^d$ compact and $\mathsf{U} \subset \mathsf{C}(\mathsf{K})$ compact and consider a parametric family of random activation functions $\{ \psi(\bm{x} ; \bm{\alpha}) \,\, : \bm{x} \in \R^{d}, \,\, \bm{\alpha} \in \mathsf{A}\}$, where $\bm{\alpha} \in \mathsf{A}$ is a vector of randomly chosen (hyper) parameters, and assume that $\psi$ are uniformly bounded in $\R^d \times \mathsf{A}$. Let $p$ be a probability distribution on $\mathsf{A}$. Given any $\epsilon$, there exists a $N \in {\mathbb N}$ and i.i.d. sample $\bm{\alpha}_1, \cdots, \bm{\alpha}_N$ from $p$, chosen independently of $f$, such that for every $f \in \mathsf{U}$ the random approximation
\begin{equation}
f_{\epsilon}(\bm{x})= \sum_{j=1}^{N} c_{j}[f] \psi(\bm{x} ; \bm{\alpha}_j),
\end{equation}
approximates $f$ in the sense that with high probability
\begin{equation}
\| f - f_{\epsilon} \|_{L^2(\mu)} < \epsilon,
\end{equation}
for a suitable probability measure $\mu$ over $\mathsf{K}$. 
Moreover, if $\psi(\bm{x} ;\bm{\alpha})=\varphi(\bm{\alpha}\cdot\bm{x})$, for a $L$-Lipschitz function $\psi$, the above approximation is uniform (i.e. in the supremum norm).
\end{prop}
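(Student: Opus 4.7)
The overall strategy is to combine the deterministic universal approximation theorem of Chen and Chen (Theorem 2.2 in the excerpt) with the random-feature approximation result of Rahimi and Recht (Theorem 3.2) via a covering argument on the compact set $\mathsf{U}$. Theorem 3.2 already gives, for a single fixed target $g^* \in \mathsf{G}_p$, an $L^2(\mu)$ approximation by a random combination of the form $\sum_j c_j \psi(\cdot;\bm{\alpha}_j)$ with error controlled by $\|g^*\|_p/\sqrt{N}$ and success probability at least $1-\delta$. The task is to upgrade this pointwise-in-$f$ statement to a uniform one over all of $\mathsf{U}$, with a single sample $\bm{\alpha}_1,\dots,\bm{\alpha}_N$ serving every $f$ and only the output weights $c_j[f]$ being $f$-dependent.

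First I would exploit compactness of $\mathsf{U}$ in $\mathsf{C}(\mathsf{K})$: for target tolerance $\eta = \epsilon/3$ there exists a finite $\eta$-net $\{f_1,\dots,f_M\} \subset \mathsf{U}$, so every $f \in \mathsf{U}$ has a closest net element $f_{i(f)}$ with $\|f - f_{i(f)}\|_\infty < \eta$, which since $\mu$ is a probability measure on $\mathsf{K}$ also gives $\|f - f_{i(f)}\|_{L^2(\mu)} < \eta$. The bridging step is then to show that each net element can be realized, up to error $\eta$, as an element of the Rahimi-Recht class $\mathsf{G}_p$ with a uniformly bounded norm $B := \max_i \|\tilde f_i\|_p$. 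This is where Chen and Chen's Theorem 2.2 enters: every $f_i \in \mathsf{C}(\mathsf{K})$ is approximable by a finite deterministic combination of $\psi$'s, and such a combination can be embedded or smoothly relaxed into $\mathsf{G}_p$; compactness of $\mathsf{U}$ together with the continuity of the Chen-Chen coefficient functionals yields a uniform bound on $\|\tilde f_i\|_p$ across the finite net.

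Next I would sample $\bm{\alpha}_1,\dots,\bm{\alpha}_N$ i.i.d.\ from $p$ once, choosing $N$ via the Rahimi-Recht bound so that the probability of failure for each net element is at most $\delta/M$ and the error at most $\eta$. A union bound over $i = 1,\dots,M$ gives, with probability at least $1-\delta$, coefficient vectors $\{c_j^{(i)}\}_{j=1}^N$ for every $i$ satisfying $\|\tilde f_i - \sum_j c_j^{(i)} \psi(\cdot;\bm{\alpha}_j)\|_{L^2(\mu)} < \eta$ simultaneously. For arbitrary $f \in \mathsf{U}$ I would then set $c_j[f] := c_j^{(i(f))}$, so the $\bm{\alpha}_j$'s are independent of $f$ and only the output weights adapt. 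A triangle inequality in $L^2(\mu)$ combining the net-approximation error, the Chen-Chen bridging error, and the random-feature error yields $\|f - f_{\epsilon}\|_{L^2(\mu)} < \epsilon$. The uniform/Lipschitz case follows by the identical argument, invoking the sup-norm statement at the end of Theorem 3.2 in place of the $L^2(\mu)$ bound throughout.

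The step I expect to be the main obstacle is the bridging argument producing a uniform $\mathsf{G}_p$-norm bound, because Theorem 3.2's error scales with $\|g^*\|_p$ while Chen-Chen's approximants are finite sums whose naive lift into $\mathsf{G}_p$ through Dirac-like weight densities has infinite $\|\cdot\|_p$. Making this rigorous requires either approximating each net element by a sufficiently smooth element of $\mathsf{G}_p$, invoking density of $\mathsf{G}_p$ in a suitable space containing $\mathsf{U}$, or re-deriving the random-feature bound in a form that replaces $\|g^*\|_p$ with a compactness-based uniform constant. The covering and union-bound scaffolding is otherwise routine; it is this deterministic-to-probabilistic interface that dictates the final choice of $N$.
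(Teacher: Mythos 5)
Your scaffolding (net, union bound, triangle inequality) is sound in outline, but the step you yourself flag as the ``main obstacle'' is a genuine gap, not a technicality, and your proof does not close it. The Rahimi--Recht bound controls the error by $\|g^*\|_p/\sqrt{N}$, and this norm is finite only for targets admitting an integral representation $g(\bm{x})=\int_{\mathsf{A}} w(\bm{\alpha})\phi(\bm{x};\bm{\alpha})\,d\bm{\alpha}$ with $w/p$ bounded. A generic element of a compact $\mathsf{U}\subset\mathsf{C}(\mathsf{K})$ (hence of your $\eta$-net) need not lie in $\mathsf{G}_p$ at all, and a Chen--Chen finite sum lifts only to a combination of Dirac masses in the parameter space, for which $\|\cdot\|_p=\infty$. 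Without an explicit construction placing each net element in $\mathsf{G}_p$ with a norm bound uniform over the net, your choice of $N$ is undefined and the union-bound step cannot be carried out.

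The paper closes exactly this hole by routing the random-feature expansion through the \emph{basis} rather than through the targets. Theorem 3 of Chen and Chen already supplies a single fixed, $f$-independent finite family of sigmoids $\sigma(\bm{\omega}_i\cdot\bm{x}+\hat{\theta}_i)$ with continuous coefficient functionals $c_i[f]$, uniformly over all of $\mathsf{U}$ --- so no covering of $\mathsf{U}$ is needed in the first place; your net argument essentially re-derives, in a weaker form, what that theorem gives for free. Each such sigmoid is then re-expanded deterministically in a bounded radial basis function $g$ chosen in the Schwartz class, and Rahimi--Recht is applied only to the finitely many, $f$-independent terms $g(a_{ij}\bm{x}+\bm{b}_{ij})$, which genuinely belong to $\mathsf{G}_p$. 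This sidesteps the uniform $\mathsf{G}_p$-norm bound over $\mathsf{U}$ entirely, and it also preserves continuity of the coefficient functionals $f\mapsto c_j[f]$ --- a property your assignment $c_j[f]:=c_j^{(i(f))}$ destroys (it is piecewise constant across net cells) and which the paper relies on later when the functionals $\tilde{w}_k[\mathcal{F}[u]]$ are themselves approximated in the proof of the operator theorem.
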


\begin{proof} We only show the uniform approximation result. The $L^2(\mu)$ approximation follows similar arguments.

We apply Theorem 3 in \cite{chen1995universal}.  Given a  continuous sigmoidal  (non polynomial) function $\sigma$,   for every $\epsilon >0$, there exist $N \in {\mathbb N}$, $(\hat{\theta}_i, \bm{\omega}_i) \in \R \times \R^{d}$, $i=1, \cdots, N$ such that for every $f \in \mathsf{U} \subset \mathsf{C}(\mathsf{K})$, there exist a linear continuous functional on $U$, $f \mapsto c_{i}[f]$, with the property that
\begin{equation}\label{PAG-RR-1}
|f(\xb)- \sum_{i=1}^{N} c_{i}[f] \sigma(\bm{\omega}_i \cdot \bm{x} + \hat{\theta}_{i} ) | < \epsilon, \,\,\,\, \forall \, \xb \in K.
\end{equation}
This approximation is deterministic. Note that even though according to Theorem 3 op. cit. this approximation holds for any Tauber-Wiener function (i.e. even for non-continuous $\sigma$) here we must insist on the continuity of $\sigma$.

In the above, we obtain a uniform approximation $f_{\epsilon}$ to $f$ in terms of
\begin{equation}\label{PAG-RR-2}
f_{\epsilon}(\xb)=\sum_{i=1}^{N} c_{i}[f] \sigma(\bm{\omega}_i \cdot \bm{x} + \hat{\theta}_{i} )
\end{equation}
We emphasize the $\sigma$ is chosen to be a continuous  and bounded sigmoidal function. Note that the choice of $N$, $(\theta_i, \omega_i)$, are independent of $f$, while the coefficients $c_{i}[f]$ depend on $f$ and in fact $f \mapsto c_{i}[f]$ is a linear continuous functional on $U$.

To connect with the random features approach of Rahimi and Rechts we  first employ an expansion of each sigmoid  in \eqref{PAG-RR-1} in terms of a radial basis function (RBF) (which is eligible to a random features expansion a la Rahimi and Rechts) and then follow with the expansion of the RBFs in random features.  We follow the subsequent steps:

(a) For each $i=1, \cdots, N$, we consider the function $\phi _i$, defined by $ \mathsf{K} \ni \bm{x} \mapsto \phi_i(\bm{x};\bm{\omega}_i,\hat{\theta}_i):=\sigma( \bm{\omega}_i \cdot \bm{x} + \hat{\theta}_i)$.  By the properties of $\sigma$, the functions $\phi_i \in \mathsf{C}(\mathsf{K})$.
Hence, we may apply the approximation of each $\phi_i$ in terms of an RBF neural network.  Using standard results (e.g. Theorem 3 \cite{liao2003relaxed}) 
we have that if $g \in \mathsf{C}(\R^d)$ is a  bounded radial basis function
${\cal S}:={\rm span}\{ g(a \xb + \bm{b}) \,\, : \,\, a \in \R, \,\, \bm{b} \in \R^{d}\}$ is dense in $\mathsf{C}(\mathsf{K})$.  Note that without loss of generality we may impose the extra assumption that $g$  can be  expressed in terms of the inverse Fourier transform of some function (i.e. an element of a function space on which the Fourier transform is surjective, for example, $g$ belongs in the Schwartz space). This assumption  also allows us to invoke the standard results of  \cite{Park1991universal}, \cite{park1993approximation} leading to the same density result). Note that this step does not affect the generality of our results, as it is only used in the intermediate step (a) which re-expands the general sigmoids used in \eqref{PAG-RR-1} into a more convenient basis on which the step (b) is applicable. Moreover, the choice of $g$ is not unique.

Using the above result we can approximate each $\phi_i$ in terms of the functions
\begin{equation}\label{PAG-RR-10}
\phi_{i,\epsilon}(\xb) = \sum_{j=1}^{M_{i}} w_{ij} g(a_{ij}  \bm{x} + \bm{b}_{ij}),
\end{equation}
where importantly, the $(w_{ij}, a_{ij} , \bm{b}_{ij}) \in \R \times \R \times \R^d$ are independent of the choice of the function $f$ (as they only depend on $(\bm{\omega}_i, \hat{\theta}_i) \in \R^{d} \times \R$, which are independent of $f$ -- see \eqref{PAG-RR-1}). The function $\phi_{i,\epsilon}$ satisfies the property $\| \phi_i - \psi_{i,\epsilon}\| < \frac{\epsilon}{N}$, in the uniform norm $\forall \epsilon>0$. Combining \eqref{PAG-RR-1} (and \eqref{PAG-RR-2}) with \eqref{PAG-RR-10}, we obtain an approximation $f^{RBF}_{\epsilon}$ to $f_{\epsilon}$ in terms of
\begin{equation}\label{PAG-RR-11}
\begin{aligned}
f^{RBF}_{\epsilon}(\xb) = \sum_{i=1}^{N} \sum_{j=1}^{M_i} c_{i}[f] w_{ij} g(a_{ij} \bm{x} + \bm{b}_{ij}),
\end{aligned}
\end{equation}
such that $\| f_{\epsilon} - f^{RBF}_{\epsilon} \| < \epsilon$, hence satisfying by \eqref{PAG-RR-1} that $\| f - f_{\epsilon}^{RBF} \| < 2 \epsilon$ (in the uniform norm).

(b) Now,  by appropriate choice of the RBF function $g$ we apply Rahimi and Recht for a further expansion of each term 
$g(a_{ij} \xb + \bm{b}_{ij}) $, using the random features RPNN. By the choice of $g$ as above,  this is now possible, and the results  of Rahimi and Recht \cite{rahimi2008uniform}(theorem \ref{thm:rahimi}), are now applicable to $g$. This holds, since RBFs can be elements of the RKHS that the RR framework applies to, i.e.,  they belong to the space of functions ${\cal G}_p$, defined in (\ref{infsolution}).
For such a choice Theorem 3.1 in \cite{rahimi2008uniform} can be directly applied to each of the RBF $g$ in the function \eqref{PAG-RR-11}. 
Under the extra assumption that $\phi(\xb ; \bm{\alpha})=\varphi(\bm{\alpha} \cdot \xb)$, for $\varphi$ L-Lipschitz (which without loss of generality can be shifted so that $\varphi(0)=0$ and scaled so that $\sup |\varphi | \le 1$), we can also apply Theorem 3.2 op. cit for a corresponding uniform approximation. We only present the second case, the first one being similar. There are two equivalent ways to proceed.

b1) Using Theorem 3.2 op cit, for an $L$-Lipschitz $\varphi$ as defined above, for any $\delta > 0$ there exists a random function $g_{\delta}$ of the form
\begin{equation}\label{PAG-RR-20}
g_{\delta}(\xb) = \sum_{k=1}^{K} \hat{c}_{k} \varphi(\bm{\alpha}_{k} \cdot \bm{x}),
\end{equation}
where $\bm{\alpha}_{k}$ is i.i.d. randomly sampled from a chosen distribution $p$, which approximates $\hat{\epsilon}(\delta)$ close $g$ with probability at least $1-\delta$. 

Using \eqref{PAG-RR-20} into \eqref{PAG-RR-11} we obtain
\begin{equation}\label{PAG-RR-111}
\begin{aligned}
f_{\epsilon, \delta}(\xb) = \sum_{i=1}^{N} \sum_{j=1}^{M_i} \sum_{k=1}^{K} c_{i}[f] w_{ij} \hat{c}_{k} \varphi(\bm{\alpha}_{k}\cdot (a_{ij} \bm{x} + \bm{b}_{ij})),
\end{aligned}
\end{equation}
which if $\delta$ is chosen such that $\hat{\epsilon}(\delta) < \frac{\epsilon}{N M}$ satisfies $\| f^{RBF}- f_{\epsilon, \delta} \| < \epsilon$, hence, $\| f -f_{\epsilon, \delta} \| < 3 \epsilon$.

Using a resummation of \eqref{PAG-RR-111} in terms of a single summation index $\ell$
we end up with an approximation $f_{\epsilon, \delta}$ for $f$ in the form 
\begin{equation}\label{PAG-RR-300}
f_{\epsilon, \delta}(\xb)=\sum_{\ell=1}^{\hat{N}} \hat{w}_{\ell}[f] \varphi(\bm{\alpha}_{\ell}\cdot (a_{\ell} \bm{x} + \bm{b}_{\ell})),
\end{equation}
where $\hat{w}_{\ell}[f]=c_{i}[f] w_{ij} \hat{c}_{k}$ are continuous functionals on $\mathsf{U}$. Hemce, we obtain an approximation in terms of shifted and re-scaled $L$-Lipschitz random feature functions.

b2) One possible drawback of this expansion is that -- see \eqref{PAG-RR-111} it depends both on the $a_j$, $\bm{b}_j$ and the $\bm{\alpha}_{k}$ and not on the $\bm{\alpha}_k$ only.  An alternative could be to expand each one of the $g_j(\xb):=g(\alpha_j \xb + \bm{b}_j)$ separately. If it holds that $g_j \in {\cal G}$ for each $j$ then 
\begin{equation}
g_{j}(\xb)=\sum_{k=1}^{K} \hat{c}_{jk} \varphi(\bm{\alpha}_{k}^{(j)} \cdot \bm{x}), \,\,\, j=1, \cdots, M,
\end{equation}
where $\bm{\alpha}^{(j)} := \{\bm{\alpha}_{k}^{(j)}, \,\, : \,\, k=1, \cdots, K\} \sim_{i.i.d} p$ for each $j=1, \cdots, M$ and with the $ \hat{\bm{\alpha}}^{(j)}$ for different $j$ being independent.
When using this approach we get the expansion \eqref{PAG-RR-300} with ${\alpha}_{k}$ i.i.d. from our initial distribution $p$. Upon resummation the stated result follows.
\end{proof}

Based on the proposition \ref{thm:theorem3chenchen_randonet}, we can now prove the following proposition for universal approximation of functional $\mathcal{F}:U\to \R$ in terms of the random projection neural network (RPNN):
\begin{prop}
[Random Projection Neural Networks (RPNNs) for functionals]
\label{thm:chen_chen4_randonet_functionals}
Adopting the framework from Proposition 1, and additionally, let
    $\mathsf{U}$ be a compact subset of $\mathsf{C}(\mathsf{K})$ and $\mathcal{F}$ be a continuous functional in $U$. Let us define the compact set $\mathsf{U}_m \subseteq \R^d$ of vectors, whose elements consist of the values of the function $u \in \mathsf{U}$ on a finite set of $m$ grid points $\bm{x}_1,\dots,\bm{x}_m \in \R^d$ and denote the vector $\bm{u}:=[u(\bm{x}_1),\dots,u(\bm{x}_m)] \in \R^m$. 

Then,  
with high probability, w.r.t. $p$, for any $\epsilon>0$, there exist $M,m\in\N$, $\bm{\alpha}_1,\dots,\bm{\alpha}_M \in \mathsf{A}$, i.i.d distributed from $p$, such that:
\begin{equation}
  \left\| \mathcal{F}(u) - \sum_{i=1}^{M} w_{i} \varphi \left( \bm{\alpha}_i\cdot \bm{u}(\bm{x}) \right) \right\|_{\infty} < \epsilon, \quad \forall u \in U.
\end{equation}
\end{prop}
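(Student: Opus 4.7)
The plan is to reduce the approximation of the functional $\mathcal{F}$ on the infinite-dimensional compact set $\mathsf{U} \subset \mathsf{C}(\mathsf{K})$ to the approximation of an induced continuous function on a compact subset $\mathsf{U}_m$ of the finite-dimensional space $\R^m$, and then apply Proposition \ref{thm:theorem3chenchen_randonet} directly. This is the same discretization-then-RPNN strategy used in the Chen and Chen proof of Theorem \ref{thm:chen_chen}, but with the deterministic inner approximation replaced by the random-feature one.

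First I would exploit the compactness of $\mathsf{U}$ in $\mathsf{C}(\mathsf{K})$. By the Arzel\`a-Ascoli theorem, $\mathsf{U}$ is uniformly equicontinuous: for every $\eta > 0$ there exists $\delta > 0$ such that $|u(\bm{x}) - u(\bm{x}')| < \eta$ for all $u \in \mathsf{U}$ whenever $\|\bm{x} - \bm{x}'\| < \delta$. Choosing the sensors $\bm{x}_1,\dots,\bm{x}_m \in \mathsf{K}$ to form a $\delta$-net in $\mathsf{K}$, the sampling map $\Pi_m : \mathsf{U} \to \R^m$, $\Pi_m(u) = \bm{u} := (u(\bm{x}_1),\dots,u(\bm{x}_m))$, is continuous, and its image $\mathsf{U}_m := \Pi_m(\mathsf{U})$ is a compact subset of $\R^m$. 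Next, I would pick any reconstruction operator $\mathcal{R} : \mathsf{U}_m \to \mathsf{C}(\mathsf{K})$ (e.g.\ piecewise-constant or piecewise-linear interpolation on the $\delta$-net) so that, by equicontinuity, $\|u - \mathcal{R}(\Pi_m u)\|_\infty \le C\eta$ uniformly in $u \in \mathsf{U}$. Because $\mathcal{F}$ is continuous on the compact set $\mathsf{U}$, it is uniformly continuous there, so for any prescribed $\epsilon/2$ one can select $\eta$ (hence $m$) small enough that $|\mathcal{F}(u) - \mathcal{F}(\mathcal{R}(\Pi_m u))| < \epsilon/2$ for all $u \in \mathsf{U}$.

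This lets me define $G : \mathsf{U}_m \to \R$ by $G(\bm{u}) := \mathcal{F}(\mathcal{R}(\bm{u}))$, which is a continuous \emph{function} on a compact subset of $\R^m$. Now I would apply Proposition \ref{thm:theorem3chenchen_randonet} (in its Lipschitz / uniform form) to $G$ with target accuracy $\epsilon/2$: with high probability over the i.i.d.\ draw of $\bm{\alpha}_1,\dots,\bm{\alpha}_M \sim p$, there exist coefficients $w_i = w_i[G]$ such that
\begin{equation}
\sup_{\bm{u} \in \mathsf{U}_m} \Bigl| G(\bm{u}) - \sum_{i=1}^{M} w_i \,\varphi(\bm{\alpha}_i \cdot \bm{u}) \Bigr| < \frac{\epsilon}{2}.
\end{equation}
Combining this with the discretization error from the previous step and the triangle inequality yields, with high probability,
\begin{equation}
\sup_{u \in \mathsf{U}} \Bigl| \mathcal{F}(u) - \sum_{i=1}^{M} w_i \,\varphi(\bm{\alpha}_i \cdot \bm{u}) \Bigr| < \epsilon,
\end{equation}
where $\bm{u} = (u(\bm{x}_1),\dots,u(\bm{x}_m))$, which is the desired statement; the sample $\bm{\alpha}_1,\dots,\bm{\alpha}_M$ and the sensors are clearly independent of $u$, while the $w_i = w_i[G] = w_i[\mathcal{F}]$ depend on $\mathcal{F}$ (equivalently on $u$ through its coefficients, as in Theorem \ref{thm:chen_chen}).

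The main obstacle is ensuring that the inner-layer argument has the exact form $\varphi(\bm{\alpha}_i\cdot \bm{u})$ required by the statement (no explicit bias), while Proposition \ref{thm:theorem3chenchen_randonet} delivers an approximation whose structure is $\varphi(\bm{\alpha}\cdot\bm{x})$ only under the extra Lipschitz/homogeneous hypothesis on $\varphi$. If biases are desired (as in Theorem \ref{thm:rahimi}), the standard trick of augmenting $\bm{u}$ with a constant coordinate (replacing $\bm{u}$ by $(\bm{u},1) \in \R^{m+1}$) absorbs the shift into the random weights, so the stated form is without loss of generality. A second subtle point, already handled in the proof of Proposition \ref{thm:theorem3chenchen_randonet}, is that the continuous-functional dependence $G \mapsto w_i[G]$ guaranteed by the Rahimi–Recht construction translates into continuous functional dependence $\mathcal{F} \mapsto w_i[\mathcal{F}]$, which preserves the DeepONet-style structural property that the trainable parameters are the only $f$-dependent quantities.
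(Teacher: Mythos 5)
Your proposal is correct and follows essentially the same route as the paper: reduce the functional $\mathcal{F}$ on the infinite-dimensional compact set $\mathsf{U}$ to a continuous function on the finite-dimensional compact set $\mathsf{U}_m=\Pi_m(\mathsf{U})\subset\R^m$, and then invoke Proposition \ref{thm:theorem3chenchen_randonet}. The paper performs the reduction by citing the Tietze extension argument of Chen and Chen, whereas you make it explicit via a reconstruction operator $\mathcal{R}$ and the composition $G=\mathcal{F}\circ\mathcal{R}$; your version is more self-contained, but it has one small hole the Tietze route avoids: $\mathcal{R}(\Pi_m u)$ (e.g.\ a piecewise-linear interpolant) need not lie in $\mathsf{U}$, and $\mathcal{F}$ is only assumed continuous \emph{on} $\mathsf{U}$, so $G(\bm{u})=\mathcal{F}(\mathcal{R}(\bm{u}))$ is not a priori defined. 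This is easily patched --- either extend $\mathcal{F}$ continuously from the closed set $\mathsf{U}$ to all of $\mathsf{C}(\mathsf{K})$ first (Tietze/Dugundji), or define $G$ on $\mathsf{U}_m$ directly from the sampled values and extend by Tietze as Chen and Chen do --- but as written the step should be justified. Your remarks on absorbing the bias by augmenting $\bm{u}$ with a constant coordinate and on the continuity of $\mathcal{F}\mapsto w_i[\mathcal{F}]$ are sound and consistent with how the paper uses this proposition inside Theorem \ref{thm:RandONet_universal_chenchen5}.
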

\begin{proof}
The representation of the function $u\in\mathsf{U}$ through a finite set of $m$ evaluations $u(\bm{x}_j)$ is possible by the Tietze Extension Theorem for functionals from the set $\mathsf{U}_m$ to $\mathsf{U}$ (see \cite{chen1995universal} for more details). Then the proof comes directly from Proposition \ref{thm:theorem3chenchen_randonet}.
\end{proof}
Finally, using the above ideas and results and possibly allowing for different randpom embeddings for the branch and trunk networks, we can prove the following theorem:
\begin{theorem}[RandONet universal approximation for Operators]
\label{thm:RandONet_universal_chenchen5}
Adopting the framework of propositions 1,2 and the notation of Theorem 3.2, and additionally, let:
$\mathsf{X}$ be a Banach Space, and $\mathsf{K}_1\subset \mathsf{X}$, $\mathsf{K}_2 \subset \R^d, \mathsf{U}\subset C(\mathsf{K}_1)$ be compact sets, and $\mathcal{F}:\mathsf{U}\rightarrow\mathsf{C}(\mathsf{K}_2)$ be a continuous (in the general case nonlinear) operator. 
Then, 
with high probability w.r.t. $p$, for any $ \epsilon>0$, there exist positive integers $M, N, m \in \mathbb{N}$, and  network (hyper)parameters $\bm{\alpha}^{br,tr}_1,\dots,\bm{\alpha}^{br,tr}_N \in \mathsf{A}^{br,tr}$, i.i.d distributed from $p_{\bm{\alpha}}$ 
such that:
\begin{equation}
\left\| \mathcal{F}(u)(\bm{y}) - \sum_{k=1}^{N} \sum_{i=1}^{M} w_{ki} \varphi^{br} \left(\bm{\alpha}^{br}_i \cdot \bm{u}(\bm{x})\right) \varphi^{tr}(\bm{\alpha}^{tr}_k\cdot \bm{y}) \right\|_{\infty} < \epsilon, \quad \forall u \in U , \bm{y} \in  \mathsf{K}_2,
\label{eq:chen_chen_network2}
\end{equation}
where the superscripts $br,tr$ correspond to branch and trunk networks and can be chosen in generally independently.
\end{theorem}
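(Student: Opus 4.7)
My plan is to compose the deterministic operator approximation of Chen and Chen (Theorem 3.2 in the excerpt) with the randomized function and functional approximations of Propositions 1 and 2. Chen and Chen already write $\mathcal{F}[u](\bm{y})$ as a finite sum of products of branch and trunk sigmoids; the remaining task is to replace each sigmoidal factor by a random-feature expansion whose parameters are drawn i.i.d.\ from $p_{\bm{\alpha}}$, and then resum the resulting quadruple sum into the desired RandONet double sum \eqref{eq:chen_chen_network2}.

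Concretely, I would fix $\epsilon>0$ and invoke Theorem 3.2 with tolerance $\epsilon/2$ to obtain integers $N',M',m$, sensor locations $\bm{x}_1,\dots,\bm{x}_m$, and deterministic parameters $w'_{ki},\bm{\xi}'_{ki},\theta'_{ki},\bm{c}'_k,\beta'_k$ such that
\[\Bigl|\mathcal{F}[u](\bm{y}) - \sum_{k=1}^{N'}\sum_{i=1}^{M'} w'_{ki}\,\phi^{br}_{ki}(\bm{u})\,\phi^{tr}_k(\bm{y})\Bigr|<\tfrac{\epsilon}{2},\]
uniformly in $u\in\mathsf{U}$ and $\bm{y}\in\mathsf{K}_2$, with $\bm{u}:=(u(\bm{x}_1),\dots,u(\bm{x}_m))$, $\phi^{br}_{ki}(\bm{u}):=\psi(\bm{\xi}'_{ki}\cdot\bm{u}+\theta'_{ki})$, and $\phi^{tr}_k(\bm{y}):=\psi(\bm{c}'_k\cdot\bm{y}+\beta'_k)$. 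Compactness of $\mathsf{U}$ in $\mathsf{C}(\mathsf{K}_1)$ makes $\mathsf{U}_m:=\{\bm{u}:u\in\mathsf{U}\}\subset\R^m$ compact, so each $\phi^{br}_{ki}$ is a continuous functional on $\mathsf{U}_m$ and each $\phi^{tr}_k$ is a continuous function on $\mathsf{K}_2$, placing us exactly in the settings of Propositions 2 and 1 respectively.

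Next I would apply Proposition 1 with activation $\varphi^{tr}$ to the finite compact family $\{\phi^{tr}_k\}_{k=1}^{N'}$, producing a single i.i.d.\ draw $\bm{\alpha}^{tr}_1,\dots,\bm{\alpha}^{tr}_N\sim p_{\bm{\alpha}}$ and coefficients $\tilde w^{tr}_{k,r}$ delivering a uniform $\eta$-approximation of each $\phi^{tr}_k$ with probability $\ge 1-\delta/2$; symmetrically, Proposition 2 applied to each $\phi^{br}_{ki}$ with activation $\varphi^{br}$, combined with a union bound over the $N'M'$ indices, yields a single i.i.d.\ draw $\bm{\alpha}^{br}_1,\dots,\bm{\alpha}^{br}_M\sim p_{\bm{\alpha}}$ and coefficients $\tilde w^{br}_{ki,\ell}$ delivering uniform $\eta$-approximation with probability $\ge 1-\delta/2$. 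Substituting both expansions into the Chen-Chen sum, the elementary product estimate $|\tilde A\tilde B - AB|\le|\tilde A-A|\cdot|B|+|\tilde A|\cdot|\tilde B-B|$, together with the uniform boundedness of $\psi$ and of $\varphi^{br},\varphi^{tr}$ (granted by the hypotheses of Propositions 1--2), bounds the extra error by $C\,N'M'\,\eta\sum_{k,i}|w'_{ki}|$. Choosing $\eta$ small makes this $<\epsilon/2$; the quadruple sum over $(k',i',\ell,r)$ collapses by simple reindexing into the double sum $\sum_{k=1}^{N}\sum_{i=1}^{M} w_{ki}\,\varphi^{br}(\bm{\alpha}^{br}_i\cdot\bm{u})\,\varphi^{tr}(\bm{\alpha}^{tr}_k\cdot\bm{y})$ with new weights $w_{ki}:=\sum_{k',i'}w'_{k'i'}\,\tilde w^{br}_{k'i',i}\,\tilde w^{tr}_{k',k}$, giving precisely \eqref{eq:chen_chen_network2} with total error $<\epsilon$ and joint probability $\ge 1-\delta$.

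The main obstacle, and the only step demanding real care, is guaranteeing that a \emph{single} realization of the branch (resp.\ trunk) random parameters simultaneously approximates all $N'M'$ branch (resp.\ $N'$ trunk) targets; otherwise resummation would require a fresh pool per term and destroy the rank structure displayed in \eqref{eq:chen_chen_network2}. This is precisely the content built into Proposition 1 (where the samples are ``chosen independently of $f$'') and recovered for Proposition 2 via a standard union bound, at the mild cost of enlarging the pool sizes $M,N$ by a logarithmic factor in $N'M'/\delta$. All remaining ingredients -- uniform boundedness of the activations, the elementary product-error bound, and the linear resummation of coefficients -- are routine.
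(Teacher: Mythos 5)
Your proof is correct, but it follows a genuinely different route from the paper's. The paper never invokes the deterministic operator theorem (Theorem~\ref{thm:chen_chen}) in its proof of Theorem~\ref{thm:RandONet_universal_chenchen5}: instead it applies Proposition~\ref{thm:theorem3chenchen_randonet} directly to the compact image $\mathcal{F}(\mathsf{U})\subset \mathsf{C}(\mathsf{K}_2)$, obtaining a random \emph{trunk} expansion $\sum_{k}\tilde{w}_k[\mathcal{F}[u]]\,\varphi^{tr}(\bm{\alpha}^{tr}_k\cdot\bm{y})$ whose coefficients are continuous functionals of $u$, and then applies Proposition~\ref{thm:chen_chen4_randonet_functionals} to each such functional to produce the random \emph{branch} expansion; the two errors are combined by a triangle inequality. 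This mirrors Chen and Chen's original two-stage argument with the randomized propositions substituted for the deterministic ones, and its error bookkeeping is minimal. You instead take the full deterministic Chen--Chen bilinear form as the starting point and randomize each of its two factor families separately, which forces you to (i) secure a \emph{single} random draw approximating all $N'$ trunk targets and all $N'M'$ branch targets simultaneously (which you correctly obtain from the compact-family formulation of Proposition~1, or via a union bound at logarithmic cost in the pool sizes), (ii) control the error of a product of approximations via $|\tilde A\tilde B-AB|\le|\tilde A-A|\,|B|+|\tilde A|\,|\tilde B-B|$ together with uniform boundedness of the activations, and (iii) resum a quadruple sum into the rank-$N\times M$ bilinear form. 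All three steps are sound, so your argument is a valid alternative; what the paper's route buys is brevity and independence from the deterministic operator theorem, while yours makes explicit that any deterministic Chen--Chen operator network can be converted term-by-term into a RandONet, at the price of the extra product-error and simultaneity bookkeeping.
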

\begin{proof}
From the Proposition \ref{thm:theorem3chenchen_randonet}, we have that 
with high probability, for any ${\epsilon_1>0}$, 
there are $N\in\N$, $\tilde{w}_k[\mathcal{F}[u]]$ and $\bm{\alpha}^{tr}_k \in \mathsf{A}^{tr}$, such that
\begin{equation}
    \left\| \mathcal{F}(u)(\bm{y}) - \sum_{k=1}^{N} \tilde{w}_k[\mathcal{F}[u]] \varphi^{tr}(\bm{\alpha}^{tr} \cdot \bm{y}) \right\|_{\infty} < {}{\epsilon_1}.
    \label{eq:thm5_eq1}
\end{equation}
Moreover, from Proposition \ref{thm:theorem3chenchen_randonet}, we have that for any $k=1,\dots,N$, $\tilde{w}_k[\mathcal{F}[u]]$ is a continuous functional on $\mathsf{U}$. We can therefore repeatedly apply Proposition \ref{thm:chen_chen4_randonet_functionals}, for each $k$, and obtain approximations of each functional $\tilde{w}_k[\mathcal{F}[u]]$ on $\mathsf{U}_m$. Thus, with high probability, for any $\epsilon_{2}>0$ there exist $m,M\in\N$, $w_{ki}$, $\bm{\alpha}^{br}_{i} \in \mathsf{A}^{br}$, such that:
\begin{equation}
    \biggl\| \tilde{w}_k[\mathcal{F}[u]] - \sum_{i=1}^M w_{ik} \varphi^{br}\left( \bm{\alpha}^{br}_i\cdot \bm{u}(\bm{x})\right)  \biggr\|_{\infty} < \epsilon_{2}.
    \label{eq:thm5_eq2}
\end{equation}
Combining \eqref{eq:thm5_eq1} and \eqref{eq:thm5_eq2} 
we obtain Eq.~\eqref{eq:chen_chen_network2}. This completes the proof.
\end{proof}

\subsubsection{Implementation of RandONets}
\label{sec:RandONets_implementation}
In this section, we present the architecture of RandONets, depicted in Figure \ref{fig:randOnet}. As in Theorem \ref{thm:chen_chen}, we use two single-hidden-layer FNNs with appropriate random bases as activation functions. We employ (nonlinear) random based projections for embedding, in the two separate hidden layer features, both the (high-dimensional) space of the discretized function ($u$) and the domain (low-dimensional) of spatial locations ($\bm{y}$) of the transformed output ($v(\bm{y})=\mathcal{F}[u](\bm{y})$). 

Specifically, we propose leveraging nonlinear random projections to embed the space of spatial locations efficiently, employing parsimoniously chosen random bases. Thus, the random projected-based trunk feature vector $\bm{T}=(T_1,\dots,T_N)$, as denoted in Eq.~\eqref{eq:trunk}, can be re-written as:
\begin{equation}
\bm{T}=\bm{\varphi}^{tr}_n(\bm{y};\bm{\alpha}^{tr})=[\varphi( \bm{y} \cdot \bm{\alpha}^{tr}_1 +b_1),\dots,\varphi( \bm{y}\cdot \bm{\alpha}^{tr}_N +b_N)],
\label{eq:trunk_embedding}
\end{equation}
where $\bm{\alpha}^{tr}_k\in\R^d$, $b_k$, $j=1,\dots,N$ are i.i.d. randomly sampled from a continuous probability distribution function.

Here, when the domain of $\mathcal{F}[u]$ is a one-dimensional interval $[a,b]\subseteq\R$, we select the activation function $\varphi$ of the trunk network to be the hyperbolic tangent, and we utilize a parsimonious function-agnostic randomization of the weights as explained in \cite{fabiani2024random,fabiani2021numerical,fabiani2023parsimonious}. In particular, the weights $\bm{\alpha}^{tr}_j$ are uniformly distributed as $\mathcal{U}[-a_U,a_U]$. The bounds $a_U$, of the uniform distributions, have been optimized in \cite{fabiani2024random,fabiani2021numerical,fabiani2023parsimonious}.\par 
For the branch network, we have implemented two types of embeddings: 
\begin{itemize}
    \item  Linear random Johnson-Lindenstrauss (JL) embeddings \cite{johnson1984extensions}, in which case, we denote the branch feature vector $\bm{B}=(B_1,\dots,B_M)$ as:
\begin{equation}
    \bm{B}=\bm{\phi}_M^{br}(\bm{U})=\bm{\phi}^{JL}_M(\bm{U})=\frac{1}{\sqrt{M}}R\, \bm{U},
    \label{eq:JL_embedding}
\end{equation}
where $R$ is a matrix with elements that are sampled from a standard Gaussian distribution and $\bm{U}$ is the vector of function evaluation in the input grid.
    \item Nonlinear random embeddings \cite{rahimi2008uniform}. Here, for our illustrations, we use a random Fourier feature network (RFFN) \cite{rahimi2007random}, as embedding of the functional space:
\begin{equation}
    \bm{B}=\bm{\varphi}_M^{br}(\bm{U};\bm{\alpha}^{br})=\bm{\varphi}^{RFFN}_M(\bm{U};\bm{\alpha}^{br},\bm{b}^{br})=\frac{1}{m}\sqrt{\frac{2}{M}}[\cos(\bm{\alpha}^{br}_1\cdot \bm{U}+b^{br}_1),\dots,\cos(\bm{\alpha}^{br}_M\cdot \bm{U}+b^{br}_M)],
    \label{eq:RFFN_embedding}
\end{equation}
where we have two vectors of random variables $\bm{\alpha}^{br}$ and $\bm{b}^{br}$, 
from which we sample $M$ realizations. The weights $\bm{\alpha}^{br}_i$  are i.i.d. sampled from a standard Gaussian distribution, and the biases $\bm{b}^{br}_i$ are uniformly distributed in $\mathcal{U}[0,2\pi]$. This explicit random lifting has a low distortion for a Gaussian shift-invariant kernel distance.
\end{itemize} 
The training of RandONets reduces to the solution of a linear least-squares problem in the unknowns $W$, i.e., the external weights of the weighted inner product as in Eq.~\eqref{eq:branch_trunk}.
In what follows, before presenting the numerical implementation, we first present the treatment of aligned and unaligned data. The aligned data case refers to datasets where the training pairs are consistently organized, say in a grid, facilitating the learning process. In such scenarios, the network can more effectively learn the underlying nonlinear operator mappings due to the structured form of the data. On the other hand, training for unaligned data presents challenges compared to the aligned data case. In such scenarios, where the input and output pairs are not consistently organized, the network must learn to identify and map the complex relationships between disjoint datasets. This lack of alignment can make it more difficult for the network to capture the underlying nonlinear operator mappings accurately. In general, achieving high accuracy in this context often demands greater computational resources and more extensive hyperparameter tuning to ensure the network converges to an optimal solution.

\paragraph{RandONets for aligned data.}
Let us assume that the training dataset consists of $s$ sampled input functions at $m$ collocation/grid points. Thus, the input is included in a matrix $U \in \R^{m\times s}$. Let us also assume that the output function can be evaluated on a fixed grid of $n$ points $\bm{y}_k \in \R^d$, which are stored in a matrix  $Y\in \R^{n\times d}$ (row-vector); $d$ is the dimension of the domain. In this case, we assume that for each input function $u$, we have function evaluations $v$ at the grid $Y$ stored in matrix $V=\mathcal{F}[U] \in \R^{n\times s}$.

While this assumption may appear restrictive at a first glance (as for example some values in the matrix $V$ could be missing, or $Y$ can be nonuniform, and may change in time), nonetheless, for many problems in dynamical systems and numerical analysis, such as the numerical solution of PDEs, entails employing a fixed grid where the solution is sought. This is clearly the case in methods like Finite Difference or Finite Elements-based numerical schemes without mesh adaptation.
Additionally, even in cases where the grid is random or adaptive, there is still the opportunity to construct a ``regular" output matrix $V$ through ``routine" numerical interpolation of outputs on a fixed regular grid.
Now, given that the data are aligned, following Eq.~\eqref{eq:branch_trunk}, we can solve the following linear system (double-sided) of $n\times s$ algebraic equations in $N\times M$ unknowns:
\begin{equation}
    V=\mathcal{F}[U]= \bm{\varphi}_n^{tr}(Y;\bm{\alpha}^{tr}) \,  W  \, \bm{\varphi}_m^{br}(U;\bm{\alpha}^{br})=T(Y) \, W \, B(U).
    \label{eq:RandONets}
\end{equation}
Let us observe that --differently from a classical system of equations (e.g., $Ax=b$), here we have two matrices from the trunk and the branch features, that multiply the readout weights $W$ on both sides.\par
Although the number of unknowns and equations appears large due to the product, the convenient alignment of the data allows for effective operations that involve separate and independent (pseudo-) inversion of the trunk/branch matrices $T(Y)\in \R^{n\times N}$ and $B(U)\in\R^{M\times s}$.
Thus, the solution weights of Eq.~\eqref{eq:RandONets}, can be found by employing methods such as the Tikhonov regularization \cite{golub1999tikhonov}, truncated singular value decomposition (SVD), QR/LQ decomposition and Complete Orthogonal Decomposition (COD)\cite{hough1997complete} of the two matrices, as we will detail later, obtaining:
\begin{equation}
    W=\big(T(Y)\big)^{\dagger} \, V \, \big(B(U)\big)^{\dagger}.
    \label{eq:RandONets_solve}
\end{equation}
As one might expect, the trunk matrix typically features smaller dimensions compared to the branch matrix. This is because the branch matrix may involve numerous samples $s$ of functions (usually exceeding the number $n$ of points in the output grid), along with a higher number of neurons $M$ required to represent the high-dimensional function input, as opposed to the 
$N$ neurons of the RP-FNN trunk which embeds the input space.
At the end, the computational cost associated with the training (i.e., the solution of the linear least-squares problem) of RandONets is of the order $O(M^2 s+s^2M)$. 
Here, we use the COD algorithm \cite{hough1997complete} for the inversion of both $T$ and $B$ matrices (for a comparison of truncated SVD and COD algorithms for RP-FNN training see in \cite{fabiani2024random}).

\paragraph{RandONets for unaligned data.}
In contrast to the aligned data, 
the output $V$ cannot be usually stored in a matrix, but we have to consider a (long) vector.
To address learning with unaligned data, it is sufficient to assume that for each input function $u$, the output $v(\bm{y})$ is available at a \emph{single} random location in the output domain. This encompasses scenarios with a sparse random grid, where each output in the grid is treated separately, yet necessitating the introduction of multiple copies of the function $u$. Thus, let us assume we have stored the input functions in a matrix $U\in \R^{m\times S}$ and a vector of outputs $V \in \R^{1\times S}$. 

Here, $S\in\N$ denotes both the total number of output functions and the total number of input functions. Unlike the aligned case, we store the random points for each input in the matrix $Y\in \R^{d\times S}$, where $d$ now represents the columns instead of the rows, and $S$ reflect the total number of (single) random locations where the individual outputs are sought.\par
Now, returning to Eq.~\eqref{eq:branch_trunk}, we notice that with the current format of inputs (both column-wise), we can express the output using the Hadamard (Shur) product ($\otimes$):
\begin{equation}
    V=\sum_{k=1}^N\sum_{i=1}^M w_{ki} T_k(Y) B_i(U)=\sum_{k=1}^n T_k(Y) \otimes \bm{w}_k \, B(U),
\end{equation}
where $\bm{w}_k$ are the rows of the matrix $W$. This corresponds to the original formulation of the DeepONet by Lu et al. (2021) \cite{lu2021learning}, where instead of considering the merging of the branch and trunk networks as a weighted inner product, the focus is on the individual output at a single location, rather than treating the output as an entire transformed function.
In this scenario, to solve the linear least-squares problem in terms of the $N\times M$ unknown weights $W$, we need to reshape the matrix $W$ into a row vector $\bm{\omega}$, where the elements $\omega_q=w_{ki}$, with $q=k+(i-1)n$. Then we construct the full collocation matrix $Z \in \R^{NM \times S}$, such that the rows $\bm{z}_q$ are obtained as:
\begin{equation}
    \bm{z}_q=T_k\otimes B_i, \qquad q=k+(i-1)N.
\end{equation}
Note that the Hadamard product $T_k\otimes B_i$ is possible as both lie in $\R^S$.\par
At this point, the weights $\bm{\omega}$ can be computed, through a (pseudo-) inversion of the matrix $Z$, in analogy to what was detailed in section \ref{sec:RPNN}, resulting in:
\begin{equation}
    \bm{\omega}=Y\, Z^{\dagger}.
\end{equation}
We note that the total number $S$ of single outputs can be viewed as proportional to the product of $s$ (the number of different input functions) and $N$ (the number of points in the output grid), as explained in the aligned case. 
In particular, employing an unaligned training algorithm for aligned data (by augmenting the input function with copies and reshaping the output) will result exactly in $S=Ns$. Now, the pseudo-inversion of the matrix $Z$ will result in a computational cost of the order $O\big((Ns)^2MN+(MN)^2Ns\big)$, which is significantly higher.  For instance, in the case the values of $M$, $s$ are similar/proportional to $N$, 
we obtain a transition, in terms of computational complexity, from an order $O(N^3)$ for the aligned case to an order $O(N^6)$ for the unaligned case.\par
To this end, we argue that the unaligned approach described here should only be considered if the output data display substantial sparsity, suggesting that the random output grid does not adequately represent the output function. Conversely, we advocate for prioritizing the aligned approach in other scenarios. Even if it entails performing interpolation on a fixed grid to generate new aligned output data.

\paragraph{Numerical implementation of the training of RandONets.}
\label{sec:RPNN_solve}
Below, we provide more details on the training process of RandONets. 
From a numerical point of view, the resulting random trunk $T(Y)$ and branch $B(U)$ matrices, tend to be ill-conditioned. Therefore, in practice, we suggest solving Eq.~\eqref{eq:RandONets} as described in  \ref{sec:RPNN} via a truncated SVD (tSVD), Tikhonov regularization, QR decomposition or regularized Complete Orthogonal decomposition (COD) \cite{hough1997complete,fabiani2024random}. Here, we describe the procedure for the branch network. For the trunk matrix, the procedure is similar.\par 

The regularized pseudo-inverse $(B(U))^{\dagger}$, for the solution of the problem in Eq.~\eqref{eq:RandONets_solve} is computed as:
\begin{equation}
    B(U)=U\Sigma V^T=[U_r \quad \tilde{U}]
    \begin{bmatrix}
        \Sigma_k & 0\\
        0 & \tilde{\Sigma}
    \end{bmatrix}
     [V_k \quad \tilde{V}]^T
    , \qquad (B(U))^{\dagger}=V_k\Sigma_k^{-1}U_r^T,
    \label{eq:pseudo_inverse}
\end{equation}
where the matrices $U=[U_k \quad \tilde{U}]\in \R^{k\times n}\oplus\R^{(n-k)\times n}$ and $V=[V_k \quad \tilde{V}] \in \R^{k\times s}\oplus\R^{(s-k)\times s}$ are orthogonal and $\Sigma \in \R^{n\times s}$ is a diagonal matrix containing the singular values $\sigma_i=\Sigma_{(i,i)}$. Here, we select the $k$ largest singular values exceeding a specified tolerance $0<\epsilon\ll 1$, i.e., $\sigma_1,\dots,\sigma_k>\epsilon$, effectively filtering out insignificant contributions and improving numerical stability.\par  
Alternatively, a more robust method that further enhances numerical stability, involves utilizing a rank-revealing LQ decomposition (transposition of the QR decomposition, which can be used for the trunk matrix $T(Y)$) with column-pivoting:
\begin{equation}
    P\, B(U)= [L \quad 0]\begin{bmatrix}
        Q_1\\
        Q_2
    \end{bmatrix}, 
\end{equation}
where (e.g., if $n>s$) the matrix $Q=[Q_1 \quad Q_2] \in \R^{s\times n}\oplus\R^{(n-s)\times n}$ is orthogonal, $L \in \R^{s \times s}$ is a lower triangular square matrix and the matrix $P \in \R^{n \times n}$ is an orthogonal permutation of the columns. The key advantage of the column permutations lies in its ability to automatically identify and discard small values that contribute to instability. Indeed, in case of ill-conditioned matrices, we have that effectively the rows of the matrix $Q$ do not span the same space as the rows of the matrix $B(U)$. As a result, the matrix $L$ is not full lower triangular, but we have:
\begin{equation}
    B(U)=P^T\begin{bmatrix}
        L_{11} & 0\\
        L_{21} & 0
    \end{bmatrix}
    \begin{bmatrix}
        Q_1\\
        Q_2
    \end{bmatrix},
    \label{eq:QR1}
\end{equation}
where, if $rank(B(U))=r<n$, the matrix $L_{11} \in \R^{r \times r}$ is effectively lower triangular and $L_{21} \in \R^{(s-r)\times r}$ are the remaining rows. Note that numerically, one selects a tolerance $0<\epsilon<<1$ to estimate the rank $r$ of the matrix $B(U)$ and set values of $B(U)$ below the threshold to zero. 
Then, to find the pseudo inverse of the branch matrix, we can additionally use the Complete Orthogonal Decomposition (COD) \cite{hough1997complete}, by also computing the LQ decomposition of the transposed non-zero elements in $L$ (for the trunk matrix this will correspond to a second QR decomposition):
\begin{equation}
    [L_{11}^T\quad L_{21}^T]=[T_{11}^T\quad 0] V.
\end{equation}
Finally, by setting $S^T=VP$, one obtains:
\begin{equation}
    B(U)=S \begin{bmatrix}
        T_{11} & 0\\
        0 & 0
    \end{bmatrix}
    \begin{bmatrix}
        Q_1\\
        Q_2
    \end{bmatrix},
    \label{eq:COD}
\end{equation}
where $T_{11}$ is an upper triangular matrix of size $r \times r$. 
Note that the inversion of the matrix $T_{11}$, can be efficiently computed using the back substitution algorithm that is numerically stable.

\section{Numerical Results}
\label{sec:numerical}
In this section, we present numerical results focusing exclusively on the aligned case within the framework of RandONets. This choice stems from (a) the observed superior computational cost of the proposed aligned approach; and (b) the aligned case fits well with the nature of the considered PDEs problems in dynamical systems and numerical analysis approaches. For a first proof of concept, our investigation encompasses a selection of benchmark problems originally addressed in the paper introducing DeepONet by Lu et al. (2021) \cite{lu2021learning}. Additionally, we focus on other benchmark problems concerning the evolution operator (right-hand-side (RHS)) of PDEs. Through these numerical experiments, we aim to demonstrate the effectiveness and versatility of RandONets in tackling a diverse array of challenging problems in dynamical systems and scientific computing for the solution of the inverse problem.\par 
In \cite{lu2021learning}, there is a discussion on how to generate the dataset of functions: they compare Gaussian Random Fields and other random parametrized orthogonal polynomial sets. Here, we decided to generate the input-output data functions without using precomputed datasets. We consider a random parametrized RP-FNN with $200$ neurons and Gaussian radial basis functions combined with few additional random polynomial terms.
\begin{equation}
    u(t)=\bm{w} \exp\big( \bm{s} (t-\bm{c})^2 \big)+ a_0+a_1 \,t+a_2\,t^2,
    \label{eq:basisRPNN}
\end{equation}
where the parameters $\bm{w},\bm{s},\bm{c} \in \R^{200}$, 
$a_0, a_1, a_2 \in \R$ are all randomized to generate the dataset of input functions.\par 
In all numerical examples, we select many different realizations of these functions. In some of these, in selecting the training datasets, we distinguish the case in which we utilize \emph{limited-data} for training. In particular, we utilize $15\%$ of the data for the training set and $85\%$ for the test set. In the case in which we assume that we have available \emph{extensive-data}, we utilize $80\%$ of the data for the training set and $20\%$ for the test set.\par
The range of the values of the parameters ($\bm{w},\bm{s},\bm{c},a_0, a_1, a_2$) is detailed for each case study. When possible, the output function is computed analytically. Otherwise, a well-established numerical method (with sufficiently small tolerances) is used to compute accurate solutions as the ``ground truth".
To represent both the input functions $u$ and output functions $v$, we use an equally spaced grid of 100 points in the domains $\mathsf{K}_1$ and $\mathsf{K}_2$ of interest. In particular, in all examples considered here, we take as input, one-dimensional domains (intervals in $[a,b]$).\par

Regarding the architecture of the RandOnets, as also detailed in Section \ref{sec:RandONets_implementation}, we investigate and compare the performance of two different RandOnet architectures, with two well-established embedding techniques, respectively: linear random Johnson-Lindenstrauss (JL) embeddings denoted by $\phi_{M}^{JL}$ (as presented in Eq. \eqref{eq:JL_embedding}) and Random Fourier Feature Network (RFFN) embeddings denoted by  $\phi_{M}^{RFFN}$ (as presented in Eq. \eqref{eq:RFFN_embedding}). These architectures will be subsequently referred to as RandOnets-JL and RandOnets-RFFN, respectively. We will explore the impact of varying the number of neurons ($M$) within the single hidden layer of the branch, effectively controlling the dimension of the branch embedding.
For both RandOnets-JL and RandOnets-RFFN, the trunk embedding leverages a non-linear RP-FNN architecture denoted by $\phi_N^{tr}$ (as presented in \eqref{eq:trunk_embedding}), which utilizes hyperbolic tangent activation functions $\psi$ and parsimoniously function-agnostic randomization of the internal weights (as described in \cite{fabiani2024random,fabiani2023parsimonious,fabiani2021numerical}). Throughout the experiments, we will maintain a consistent number of neurons ($N=200$) within the trunk's hidden layer, thus ensuring a fixed size for the trunk embedding. It is important to note that the RandOnet-JL architecture incorporates a combination of linear and non-linear embedding techniques, whereas the RandOnet-RFFN architecture is entirely non-linear.\par

Given the big difference in the computational cost for the inversion of the branch matrix compared to the trunk matrix, we decided to fix the number of neurons $N=200$ in the trunk RP-FNN embedding for both the RandOnets-JL and RandOnets-RFFN. The inversion of the corresponding trunk matrix $T\in\R^{100\times 200}$m thus it is, for any practical purposes, relatively negligible. Here, we investigate the performance of the scheme for $M=10, 20, 40, 80, 100, 150, 300, 500, 1000, 2000$ neurons in the branch embedding of both RandOnets-JL and RandOnets-RFFN and the corresponding increment in computational cost.
\paragraph{Metrics.}
To assess the performance of the RandONets, we utilize both the mean squared error (MSE) for the entire test set, as well the $L^2$--error for each output-function in the test set. In particular, we report the median $L^2$ and the percentiles $5\%-95\%$.
Importantly, we report the execution time in seconds of the scheme, thus indicating when the computations are performed with GPU or CPU.
\paragraph{Remark on the DeepOnet architectures used.}
Given the high-computational cost associated when training DeepOnets with the Adaptive Moment Estimation (Adam) algorithm, (even if we employ a GPU hardware), we do not focus now on performing a convergence diagram of the scheme or finding the best architecture. For our illustrations, we just selected a few configurations. In particular, we selected $2$ hidden layers for both trunk and branch networks, each layer with a prescribed number $N=\{5,10,20,40\}$ of neurons. Also, we employ hyperbolic tangent as activation functions for both branch and trunk networks. We will refer to the performance of these vanilla DeepONets in Tables with the notation $[N,N]$. We remark that the number of free trainable parameters $\zeta$ of such DeepOnet configurations is $m\times N+3N\times N+5N$, (e.g., $N=40, m=100$, then $\zeta=9000$) which is not higher than the biggest considered RandONet. Indeed, RandOnets have a number of free trainable parameters equal to $N\times M$ (e.g., in the biggest case considered here, $N=200$, $M=2000$, it corresponds to $\zeta=400,000$ parameters). However, as we will show, despite the high number of parameters, such RandONets can be trained in around one second. 
Finally, we also remark that when employing a gradient-descent based algorithm, as the Adam one, there is no guarantee of convergence, and the generalization of the network can be moderate. We anyway decided to train DeepOnet for a fixed number of iterations equal to $20,000$ for ODE benchmarks and to $50,000$ for the PDEs.
\paragraph{Remark on the hardware and software used.}
In our experiments, we utilized the DeepONet framework implemented in Python with the DeepXde library \cite{lu2021deepxde}, leveraging TensorFlow as the back-end for computations. These computations are executed on a GPU NVIDIA GeForce RTX 2060, harnessing its parallel processing capabilities to expedite training and evaluation. Additionally, we ran the Python code on \texttt{Google Colab} using a Tesla K80 GPU, resulting in computational times approximately 6 to 7 times slower than those reported in the main text. While we do not include these execution times in the main text, we mention them here as a reference. In contrast, the RandONets framework is implemented in MATLAB 2024a and executed on a single CPU of an Intel(R) Core(TM) i7-10750H CPU @ 2.60GHz, with 16 GB of RAM.\par
It is worth noting that while the hardware and software environments for the two frameworks differ significantly, hindering a direct comparison of computational times, we report the computational times for both approaches for transparency. Despite the disparity in hardware and software, it is noteworthy that the single CPU utilized is not suitable for the Python code, whereas it proves to be efficient for the RandONets implemented in MATLAB. Additionally, while we acknowledge that differences in computational times between MATLAB and Python platforms may arise due to a range of factors beyond software differences alone, we believe that the reported differences cannot be just explained by the two different software implementations. We provide these computational costs as they are observed, recognizing that, while a hand-made MATLAB implementation of DeepONet is feasible, our objective is to compare with the professionally implemented and widely used DeepXde library \cite{lu2021deepxde} to ensure a reliable comparison.\par

\subsection{Some simple (pedagogical) ODE benchmark problems}
\label{sec:benchmarkODE}
We start with some very simple benchmark problems involving non-autonomous ODEs subject to a time dependent source term $u(t)$, in the form:
\begin{equation}
    \frac{dv(t)}{dt}=f(t,v)+u(t), \qquad v(0)=v_0, \quad t\in[0,T],
\end{equation} with some forcing time-dependent input function $u(t)$. The solution function $v(t)$ depends directly on the forcing term, the function $u(t)$. Thus, there exists an operator that maps $u(t)$ into $v(t)$.
The task here, different from the one for the PDEs, is to learn the ``solution operator'' for one initial condition. Of course, learning the full solution operator would need a set of different initial conditions as in \cite{lu2021learning} but as mentioned these serve purely for pedagogical purposes. Still, the focus, in this first work, is on the approximation of the evolution operators of PDEs; we will present these results in the subsection \ref{sec:benchmarkPDE}.
\begin{figure}[ht!]
    \centering
    \subfigure[DeepOnet (extensive-data)]{
        \includegraphics[width=0.31\textwidth]{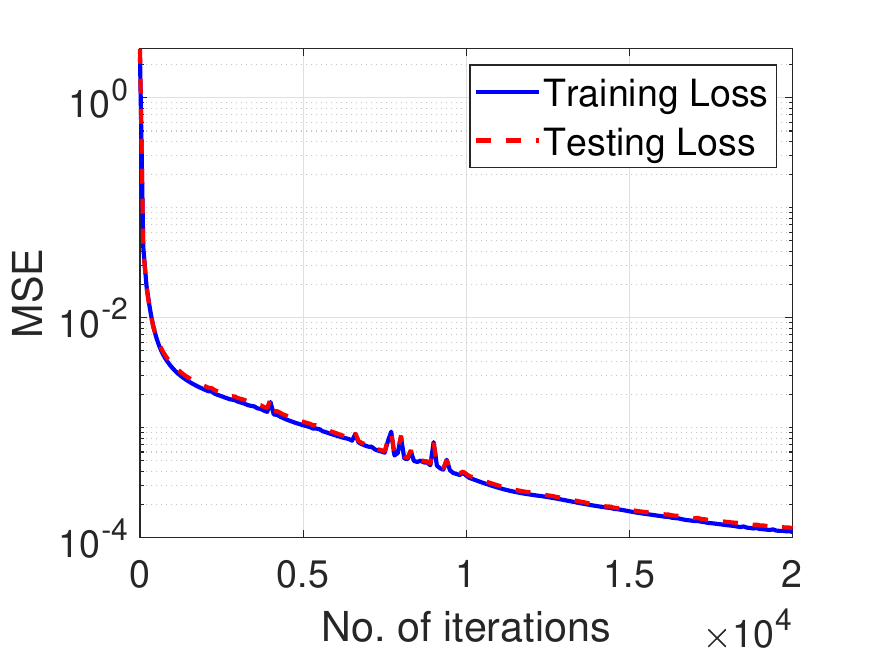}
    }
    \subfigure[RandONets (extensive-data)]{
    \includegraphics[width=0.31\textwidth]{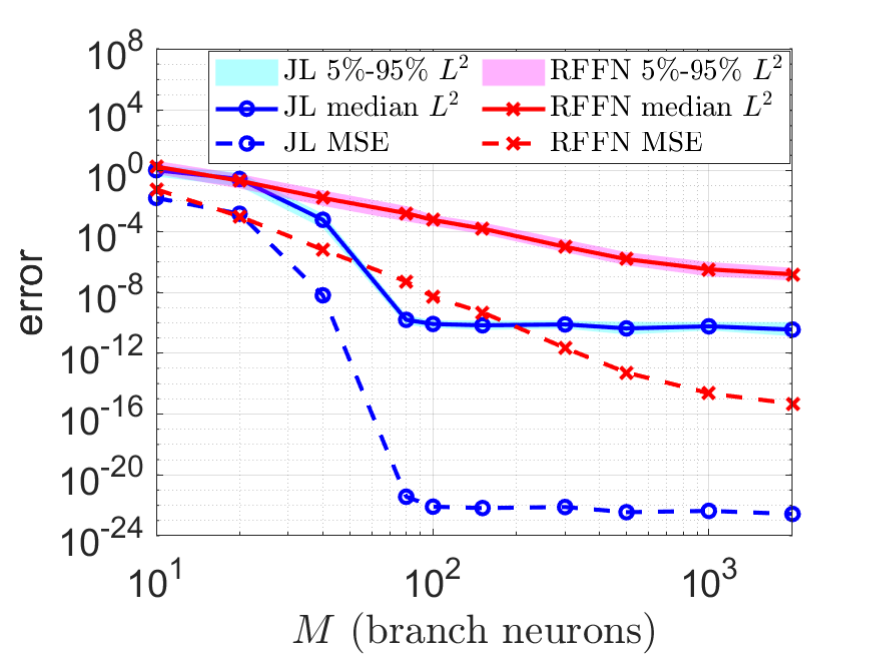}
    }
    \subfigure[RandONets (extensive-data)]{
    \includegraphics[width=0.31\textwidth]{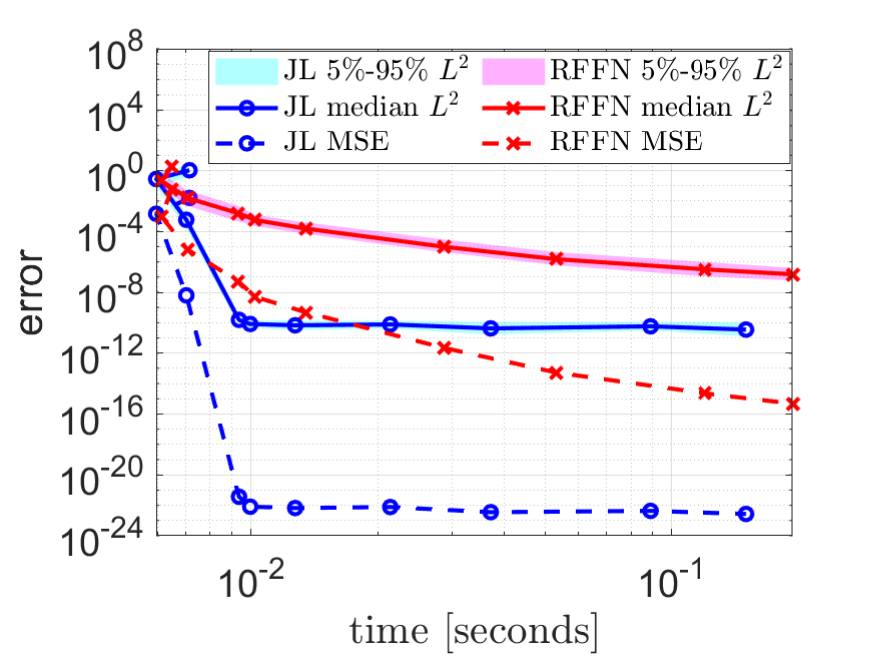}
    }
    \subfigure[DeepOnet (limited-data)]{
    \includegraphics[width=0.31\textwidth]{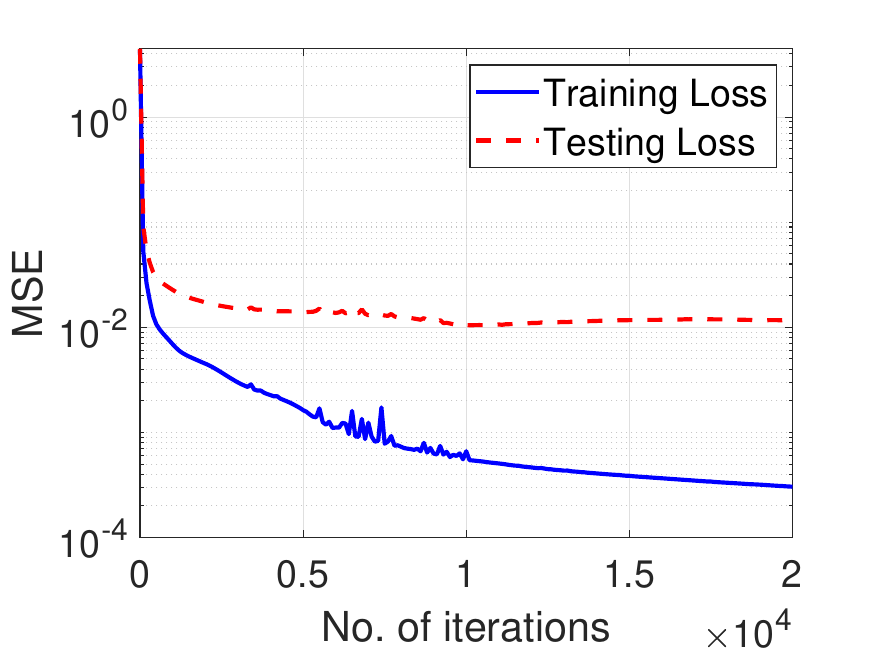}
    }
    \subfigure[RandONets (limited-data)]{
    \includegraphics[width=0.31\textwidth]{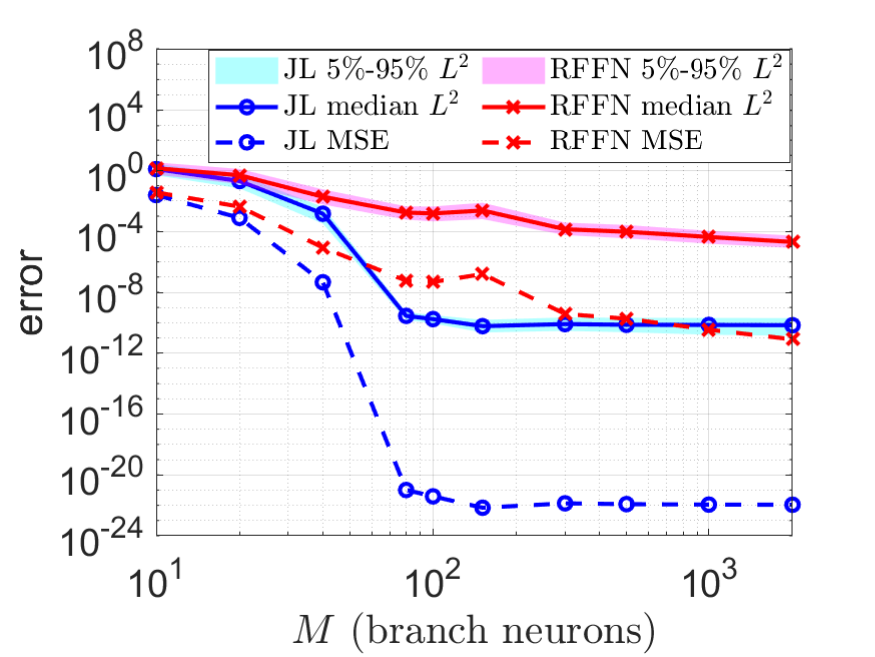}
    }
    \subfigure[RandONets (limited-data)]{
    \includegraphics[width=0.31\textwidth]{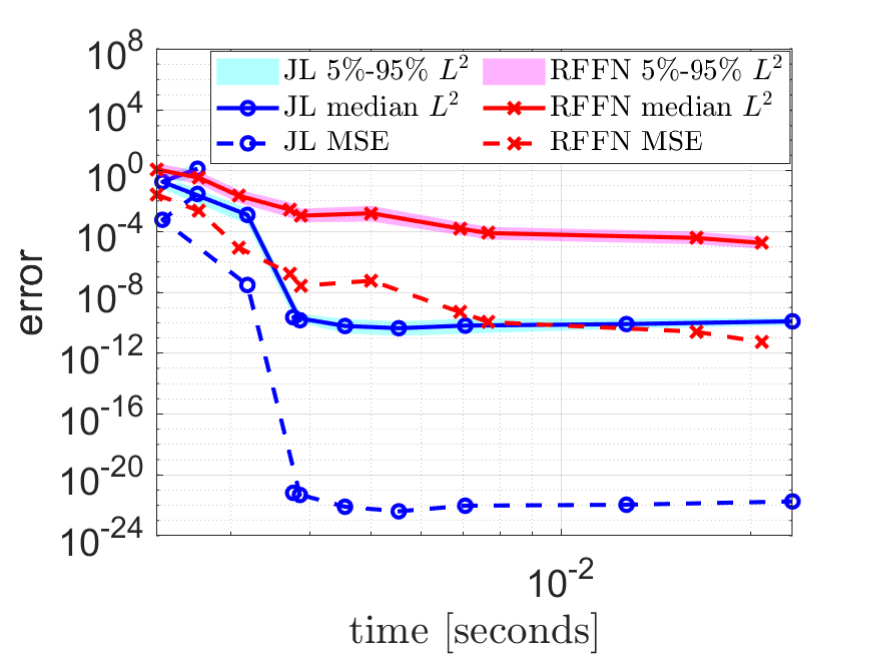}
    }
    \caption{Case study 1: Antiderivative Operator, in Eq. \eqref{eq:antiderivative}. (First row) extensive-data case, $800$ training input functions; (Second row) limited-data case, $150$ training input functions. (a), (d) MSE for the training and test sets, with the DeepOnet with $2$ hidden layers (indicatively) with $40$ neurons each, for both the branch and trunk networks. (b), (c), (e), (f) MSE and $L^2$ error, $5\%-95\%$ range and median, of the RandONets, for different size $M$ of the branch embedding. The errors are computed w.r.t only the output functions in the test dataset. Comparison of Johnson-Lindenstrauss (JL) branch embedding with random Fourier features (RFFN) embeddings. We set the size of the Trunk network to $N=200$ and the grid of input points to $m=100$. Numerical approximation accuracy vs. (b)-(e) the number of neurons $M$ in the hidden layer of the branch network; and (c)-(f) vs. computational times in seconds.}
    \label{fig:antiderivative_conv}
\end{figure}

\subsubsection{Case study 1: Anti-derivative operator}
The first benchmark problem that we consider is the anti-derivative operator\cite{lu2022learning}, thus the solution $v(t)$ given the function $u(t)$ of the following (phase-independent, $f(t,v)\equiv 0$) ODE problem:
\begin{equation}
    \frac{dv(t)}{dt}=u(t), \qquad v(0)=v_0, \quad t\in[0,1],
    \label{eq:antiderivative}
\end{equation}
thus learning the linear operator $\mathcal{F}[u](t)=v(t)$. The corresponding analytical anti-derivatives are:
\begin{equation}
    v(t)=\bm{w}\biggl(-\sqrt{\pi}\text{erfi}\big( \sqrt{\bm{s}} (\bm{c}-t) \big)/(2\sqrt{\bm{s}})\biggr) + a_0 t+a_1 t^2/2+a_2 t^3/3+C,
\end{equation}
where $\text{erfi}$ is the error function and $C$ is a constant that has to be fit by the initial condition $v(0)=v_0$. We select $v_0=0$ and we set $\tilde{v}(t)=v(t)-v(0)$ as the output function.

The values of the parameters $\bm{w},\, a_0,\, a_1,\, a_2 \sim \mathcal{U}[-1,1]$, $\bm{s} \sim \mathcal{U}[0,500]$ and $\bm{c} \sim \mathcal{U}[0,1]$, of the RP-FNN based function dataset, as in Eq.~\eqref{eq:basisRPNN}, are (element-wise) sampled from the corresponding aforementioned uniform distributions.
To generate the data, we used $1000$ random realizations. We considered two different sizes of training sets. In particular, we used $15$\% for the training and $85\%$ for the test set (we remind the reader that we call this limited-data case). As described above, for the extensive data-case we used $80\%$ for the training and $20\%$ for the test set.

\begin{table}[ht!]
\centering
\caption{Case study 1: Anti-derivative Operator in Eq. \eqref{eq:antiderivative}.  We report Mean Squared Error (MSE), percentiles (median, $5\%$, $95\%$ of $L^2$ error across the test set. The extensive-data case comprises $800$ training functions, while the limited-data case uses $150$ functions as training. We employed vanilla DeepOnets with $2$ hidden layers, denoted as $[N,N]$ neurons, for both the branch and the trunk. We set $N=5,10,20,40$. DeepOnets are trained with $20,000$ Adam iterations (with learning rate $0.001$ and then $0.0001$). We report the RandONet encompassing Johnson-Lindenstrauss (JL) Featured branch network (with $M=100$ neurons) as well as the Random Fourier Feature branch Network (RFFN) (with $M=2000$ neurons).}
\begin{tabular}{|c|c|c|c|c|c|c|}
\hline
                      data &    ML-model       & MSE  & 5\% $L^2$ & median--$L^2$ & 95\% $L^2$ & comp. time\\
                      \hline
\multirow{2}{*}{80\%} & DeepOnet $[5,5]$  & 9.83E$-$01  &  2.31E$+$00  &  6.90E$+$00  &   1.80E$+$01   &   4.75E$+$02   (GPU)     \\
& DeepOnet $[10,10]$  & 2.28E$-$03  &  2.43E$-$01  &  4.37E$-$01   &  7.46E$-$01  &    4.62E$+$02   (GPU)     \\
& DeepOnet $[20,20]$  & 4.39E$-$04  &  1.26E$-$01  &  2.00E$-$01  &   2.97E$-$01  &    4.79E$+$02   (GPU)     \\
& DeepOnet $[40,40]$  & 1.22E$-$04 &   7.04E$-$02  &  1.03E$-$01  & 1.60E$-$01  & 5.23E$+$02   (GPU)     \\
                      & RandONet--JL ($100$) & 9.43E$-$23 & 4.33E$-$11  & 8.01E$-$11  & 1.68E$-$10 & 1.02E$-$02 (CPU) \\
                      & RandONet--RFFN ($2000$) & 8.09E$-$16 & 6.81E$-$08 & 1.73E$-$07  & 5.99E$-$07  & 1.96E$-$01 (CPU) \\
                      \hline
\multirow{2}{*}{15\%}  & DeepOnet $[5,5]$  & 8.88E$-$02  &  1.08E$+$00   & 2.48E$+$00  &   5.15E$+$00   &   1.05E$+$02   (GPU)     \\
& DeepOnet $[10,10]$ & 2.99E$-$03   & 2.73E$-$01 &   4.91E$-$01  &   8.51E$-$01  &    9.78E$+$01   (GPU)     \\
& DeepOnet $[20,20]$  & 7.48E$-$04  &  1.51E$-$01 &   2.54E$-$01  &   4.07E$-$01   &   1.13E$+$02    (GPU)     \\
& DeepOnet $[40,40]$  & 1.16E$-$02 & 2.57E$-$01  & 7.36E$-$01 & 2.12E$+$00  & 1.24E$+$02  (GPU)     \\ 
                      & RandONet--JL ($100$) & 1.66E$-$21 & 2.22E$-$10 & 3.74E$-$10  & 6.11E$-$10  & 3.60E$-$03  (CPU)       \\
                      & RandONet--RFFN (2000) & 8.12E$-$12 & 8.14E$-$06  & 2.03E$-$05 & 5.25E$-$05  & 1.88E$-$02 (CPU)       \\
                      \hline
\end{tabular}
\label{tab:antiderivative}
\end{table}
In Figure \ref{fig:antiderivative_conv}, we depict the numerical approximation accuracy for the test set in terms of the MSE and percentiles median, $5\%-95\%$ of $L^2$--errors. As shown, the training of the all RandONets takes approximately less than 1 second and is performed without iterations.
In Table \ref{tab:antiderivative}, we summarize the comparison results with the vanilla DeepONet in terms of the best accuracy and best computational times. For the RandONets, we used $100$ neurons for the JL embedding, and $2000$ neurons for the RFFN embedding for the branch network. As shown, the JL-based RandONets gets an astonishing almost machine-precision accuracy of $MSE\simeq1E$-$23$, $L^2\simeq1E$-$11$ with just $40$ neurons in the branch with a computational time of $\simeq 0.01$ seconds.
Such ``perfect'' results are due to the simplicity of the problem and its linearity. The nonlinear RFFN embedding result in a lower performance with respect to the JL RandOnets, for this linear problem, obtaining an $MSE\simeq 1E$-$16$ and a median $L^2\simeq 1E$-$08$, using $2000$ neurons in the branch, with a computational time of less of the order $0.1$ seconds. We employ vanilla DeepOnets with two hidden layers, denoted as $[N,N]$ neurons, in both Trunk and Branch. We observe a rather slow convergence in accuracy by increasing the size of the Vanilla DeepOnets. However, the vanilla DeepOnets need many iterations to reach an adequate accuracy. After $20,000$ iterations, the accuracy in the extensive-data case, $for N=40$, is around $1E$-$04$ in terms of MSE, but the $L^2$ error is on the order of $1E$-$01$. In the limited-data case, the vanilla DeepOnet, with $N=40$, gives a rather poor performance: the MSE on test data is stuck at $1E$-$02$ thus, overfitting. The corresponding $L^2$ error is on the order of $1$. Indeed, the DeepOnet with $N=20$ performs better. We can explain such failure due to difficult dataset considered, that needs sufficient input-ouput functions to be well represented.\par 

Our results for the two hidden layers vanilla DeepOnet are in line with the ones presented in \cite{lu2021learning}. Also, for investigations on different architectures one can refer to the same paper. In particular, there, for the vanilla DeepOnet with $4$ hidden layers and $[2560,2560,2560,2560]$ neurons in both trunk and branch, they report an MSE of around $1E$-$08$ after $50,000$ iterations.\par
As a matter of fact, for this case study, the execution times (training times) for RandONet, utilizing both linear JL and nonlinear RP-FNN random embeddings,  are 10,000 times faster, while achieving $L^2$ accuracy that is $6$ to $10$ orders of magnitude higher.

\subsubsection{Case study 2: Pendulum with external force}
We consider the motion of a simple pendulum, consisting of a point mass suspended from a support by a mass-less string of length $l=1$, on which act the gravity force and an additional external force $u(t)$.
\begin{figure}[ht!]
    \centering
    \subfigure[DeepOnet (extensive-data)]{
        \includegraphics[width=0.31\textwidth]{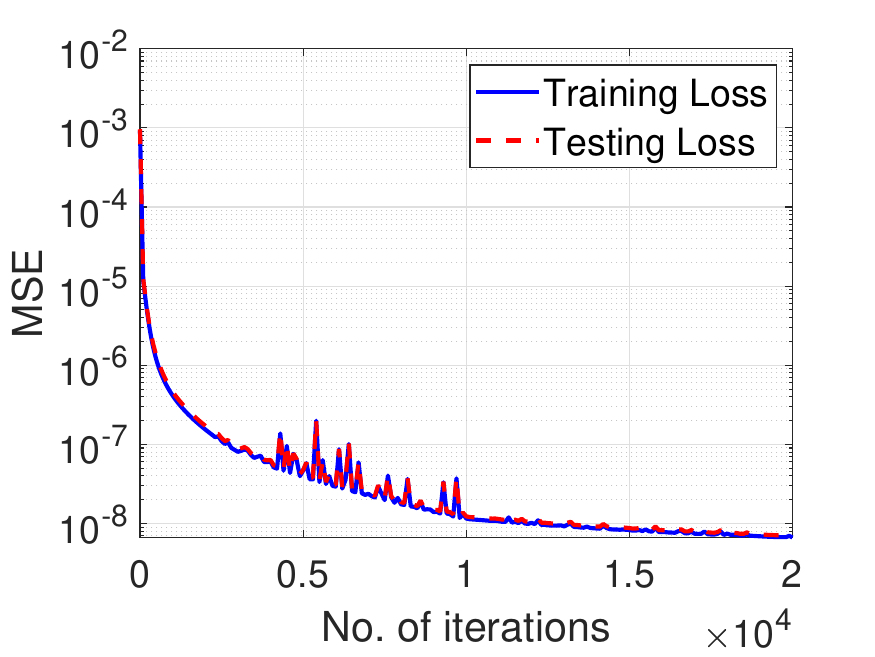}
    }
    \subfigure[RandONets (extensive-data)]{
    \includegraphics[width=0.31\textwidth]{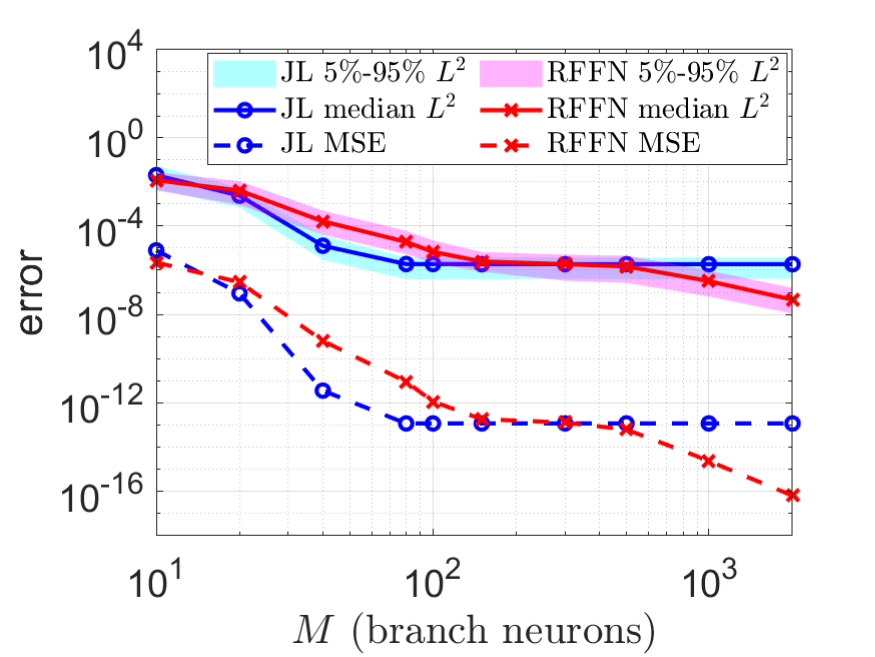}
    }
    \subfigure[RandONets (extensive-data)]{
    \includegraphics[width=0.31\textwidth]{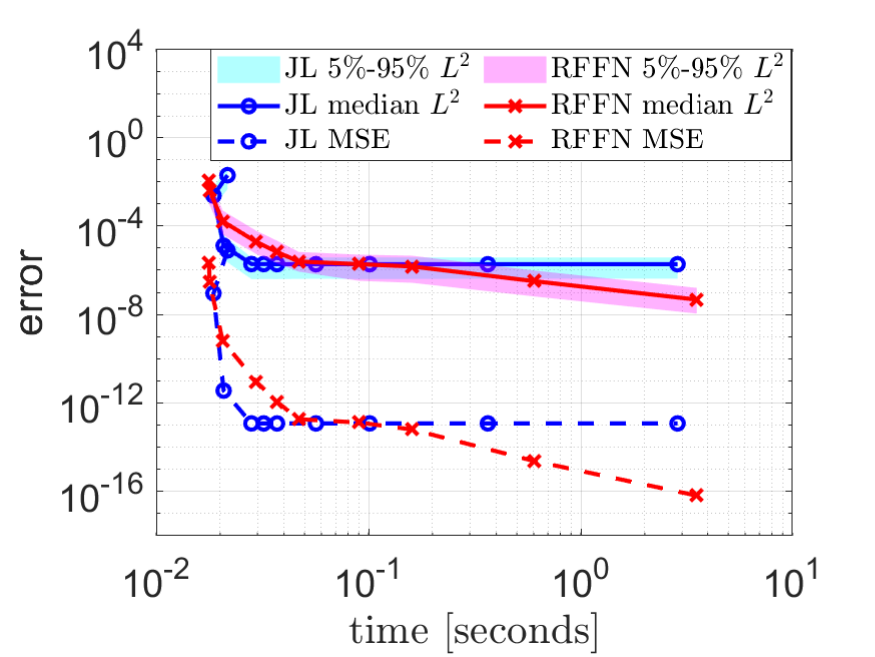}
    }
    \subfigure[DeepOnet (limited-data)]{
    \includegraphics[width=0.31\textwidth]{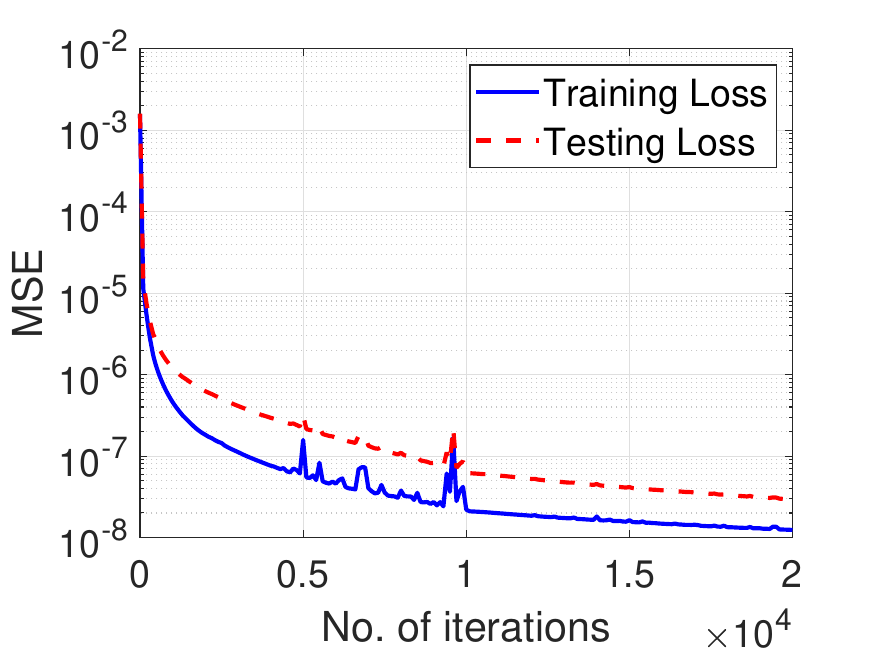}
    }
    \subfigure[RandONets (limited-data)]{
    \includegraphics[width=0.31\textwidth]{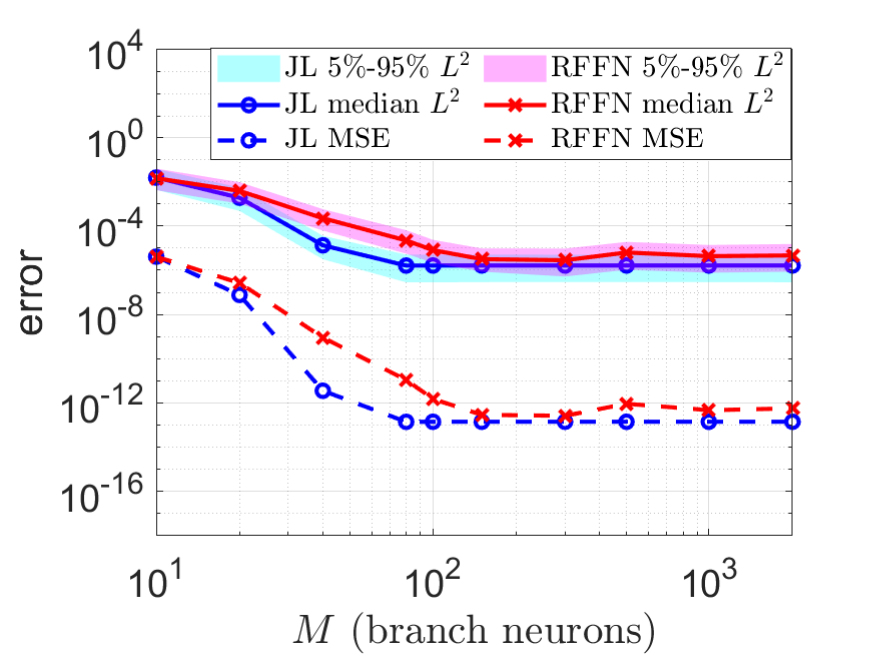}
    }
    \subfigure[RandONets (limited-data)]{
    \includegraphics[width=0.31\textwidth]{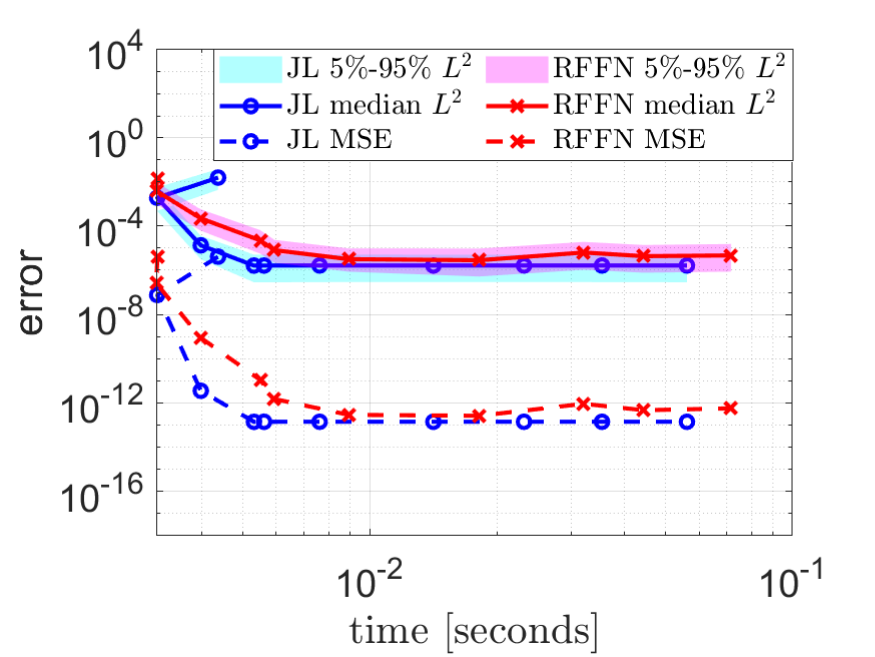}
    }
    \caption{Case study 2: Pendulum with external force, in Eq. \eqref{eq:pendulum}. (First row) extensive-data case, $2400$ training input functions; (Second row) limited-data case, $450$ training input functions. (a), (d) Convergence of training and test set MSE of the DeepOnet with two hidden layers (indicatively) with $40$ neurons each, for both the branch and trunk networks. (b), (c), (e), (f) MSE and $L^2$ error percentiles (median, $5\%-95\%$), of the RandONets, for different size $M$ of the branch embedding. The errors are computed w.r.t. only the output functions in the test dataset. Comparison of Johnson-Lindenstrauss (JL) branch embedding, in Eq. \eqref{eq:JL_embedding}, with Random Fourier Feature Networks (RFFN) embeddings, in Eq. \eqref{eq:RFFN_embedding}. We set the size of the Trunk network to $N=200$ and the grid of input points to $m=100$. Numerical approximation accuracy vs. (b)-(e) number of neurons $M$ in the hidden layer of the branch network; and (c)-(f) vs. computational time in seconds.}
    \label{fig:pendulum_conv}
\end{figure}
The system is described by the following second order ODE:
\begin{equation}
    \frac{d^2 v(t)}{dt^2}=-k\sin v(t)+u(t), \qquad t\in[0,1], \quad v(0)=0, \quad v'(0)=0,
    \label{eq:pendulum}
\end{equation}
where $v$ represents the angle with the vertical, and $k=9.81$. 

Again, the goal is to learn the operator that maps the input function $u(t)$ to the output function $v(t)$.
\begin{table}[ht!]
\centering
\caption{Case study 2: Simple pendulum with external force, as in Eq. \eqref{eq:pendulum}. We report Mean Squared Error (MSE) and percentiles (median, $5\%-95\%$) of $L^2$ error across the test set. The extensive-data case comprises $2400$ training functions, while the limited-data case uses $450$ functions as training. We employed vanilla DeepOnets with $2$ hidden layers, with $[N,N]$ neurons, in both trunk and branch. We set $N=5,10,20,40$. DeepOnets are trained with $20,000$ Adam iterations (with learning rate $0.001$ and then $0.0001$). We report the RandONet encompassing Johnson-Lindenstrauss (JL) Featured branch network (with $M=100$ neurons) and the Random Fourier Feature branch Network (RFFN) (with $M=150$ neurons for few data and with $M=2000$ neurons for many data).}
\begin{tabular}{|c|c|c|c|c|c|c|}
\hline
                      data &    ML-model       & MSE  & 5\% $L^2$ & median--$L^2$ & 95\% $L^2$ & comp. time\\
                      \hline
\multirow{2}{*}{80\%}  & DeepOnet $[5,5]$  & 2.63E$-$07  &  2.41E$-$03    & 4.50E$-$03  &   8.40E$-$03   &   1.25E$+$03  (GPU)     \\
& DeepOnet $[10,10]$  & 1.50E$-$07 &   1.71E$-$03  &  3.48E$-$03 &    5.88E$-$03    &  1.19E$+$03   (GPU)     \\
& DeepOnet $[20,20]$  & 1.83E$-$08 &   7.92E$-$04  &  1.27E$-$03  &   1.91E$-$03  &    1.36E$+$03  (GPU)     \\
& DeepOnet $[40,40]$  & 7.02E$-$09  & 5.37E$-$04  & 7.96E$-$04  & 1.13E$-$03 &     1.32E$+$03  (GPU)     \\
                      & RandONet--JL ($100$) & 1.19E$-$13 & 3.96E$-$07 & 1.89E$-$06  & 3.75E$-$06  & 3.31E$-$02  (CPU)       \\
                      & RandONet--RFFN ($2000$) & 6.52E$-$17 & 1.09E$-$08  & 4.66E$-$08 & 1.58E$-$07  & 3.91E$+$00 (CPU)       \\
                      \hline
\multirow{2}{*}{15\%} & DeepOnet $[5,5]$  & 2.30E$-$07  &  2.09E$-$03   & 4.10E$-$03  &   7.87E$-$03   &   2.97E$+$02   (GPU)     \\
& DeepOnet $[10,10]$  & 1.08E$-$07   & 1.48E$-$03  &  2.98E$-$03  &   5.02E$-$03  &    2.6170E$+$02   (GPU)     \\
& DeepOnet $[20,20]$  & 2.27E$-$08   & 8.57E$-$04  &  1.37E$-$03  &   2.24E$-$03  &    2.77E$+$02   (GPU)     \\
& DeepOnet $[40,40]$  & 2.92E$-$08  &  8.77E$-$04  &  1.45E$-$03  &   2.81E$-$03   &   2.89E$+$02   (GPU)     \\
                      & RandONet--JL ($100$) & 1.41E$-$13 & 2.82E$-$07  & 1.62E$-$06  & 5.03E$-$06 & 5.63E$-$03 (CPU) \\
                      & RandONet--RFFN ($150$) & 2.91E$-$13 & 8.55E$-$07 & 3.14E$-$06  & 9.59E$-$06  & 8.91E$-$03 (CPU) \\
                      \hline

\end{tabular}
\label{tab:pendulum}
\end{table}

The values of the parameters, of the RP-FNN based function dataset (see in Eq.~\eqref{eq:basisRPNN}), are randomly sampled from uniform distributions as follows: The values of $\bm{w},\, a_0,\, a_1,\, a_2$ are sampled uniformly in $\mathcal{U}[-0.05,0.05]$, the value of $\bm{s}$ from $ \mathcal{U}[0,500]$ and the value of $\bm{c}$ is uniformly sampled from $\mathcal{U}[0,1]$.\par
To obtain the ``ground-truth'' corresponding output functions, we employ the MATLAB solver \texttt{ode45} with absolute tolerance set to 1E$-$12 and relative tolerance set to 1E$-$10.
Here, we consider, compared to case 1, relatively smaller amplitudes in the functions, since a high forcing term can lead the system far from the initial condition. It is important to note also that in this case, the solution $v$ is not a simple primitive of the function $u$: it corresponds for a given initial condition to a nonlinear solution operator.\par
To generate the data, we used $3000$ random realizations for the values of the parameters. We consider two different sizes for the training set. As described above, we used $15$\% of the data for training and $85\%$ for testing (for the limited-data case) and $80\%$ of the data for training and $20\%$ for testing (for the extensive-data case).\par 
In Figure \ref{fig:pendulum_conv}, we report the numerical approximation accuracy for the test set in terms of the MSE and the median (and corresponding percentiles $5\%-95\%$) of the $L^2$--error. As shown, the training time of RandONets takes approximately less than one second and is performed without iterations.
In Table \ref{tab:pendulum}, we also summarize the comparative results in terms of the approximation errors, and computational times between RandONets, and vanilla DeepONets. For our illustrations, for RandOnets, we have used $M=100$ neurons for the JL embeddings and $M=150$ neurons for the RFFN, in the limited-data case, and $M=2000$ neurons in the extensive-data case.
Compared to case 1, the nonlinear RFFN embeddings perform slightly better than the linear JL embeddings, especially for higher sizes of the branch network. This is due to nonlinearity of the benchmark, yet they still both schemes exhibit comparable performance for smaller sizes of the networks. Actually, JL embeddings converge faster, generalizing better with few neurons compared to the nonlinear RFFN embeddings. 
This indicates that, despite the JL embedding being linear, the nonlinearity of the operator can be effectively approximated by the hyperbolic tangent RP-FNN based trunk embedding for the spatial locations $y$.
Also, the DeepOnets performed better than in case 1.\par
Finally, for this case study, the execution times when using RandONets (with JL and nonlinear RFFN embeddings) are $3$ to $5$ orders faster, than DeepOnets, while achieving a $3$ to $4$ orders higher numerical approximation accuracy in terms of the $L^2$ error.

\subsection{Approximation of Evolution Operators (RHS) of time-dependent PDEs}
\label{sec:benchmarkPDE}
Here, we consider some benchmark problems relative to the identification of the evolution operator, i.e., the right-hand-side (RHS) of time-dependent PDEs:
\begin{equation}
    v(x)=\frac{\partial u(x,t)}{\partial t}= \mathcal{L}(u)(x,t).
\end{equation}
The output function, the time derivative, 
(i.e., the right-hand-side of the evolutionary PDE) depends on the current state profile $u(x,t)$ at a certain time $t$. In the following examples, we do not consider the limited-data case.

\subsubsection{Case study 3: 1D Linear Diffusion-Advection-Reaction PDE}
As a first example for the learning of the evolution operator, we consider a simple 1D linear Diffusion-advection-reaction problem, described by:
\begin{equation}
    \frac{\partial u}{\partial t}=\nu \frac{\partial^2 u}{\partial x^2}+\gamma \frac{\partial u}{\partial x}+\zeta u, \qquad x\in [-1,1]
    \label{eq:LinearPDE}
\end{equation}
where $\nu=0.1$, $\gamma=0.4$ and $\zeta=-1$.

\begin{figure}[ht!]
    \centering
    \subfigure[DeepOnet ]{ 
        \includegraphics[width=0.31\textwidth]{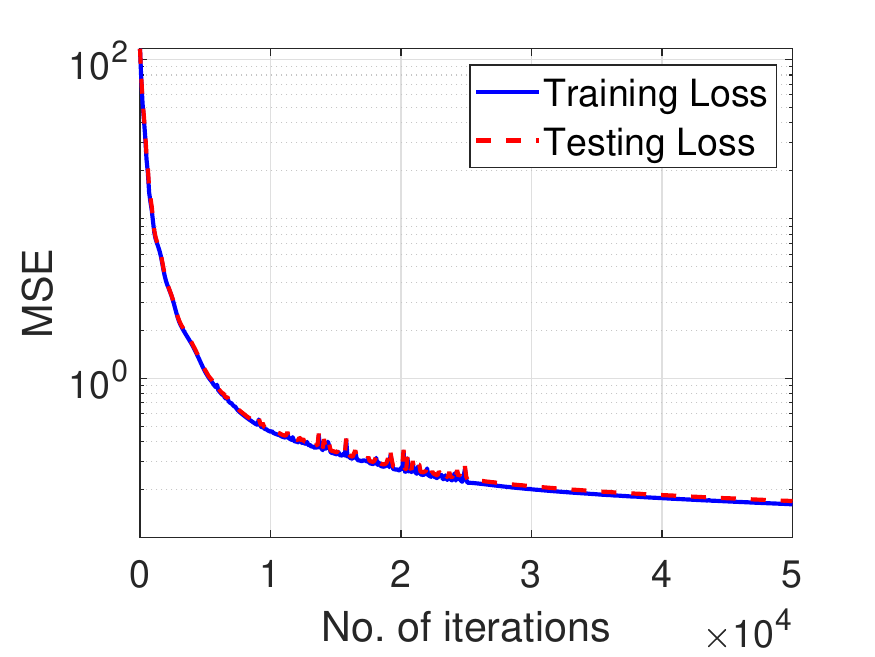}
    }
    \subfigure[RandONets ]{ 
    \includegraphics[width=0.31\textwidth]{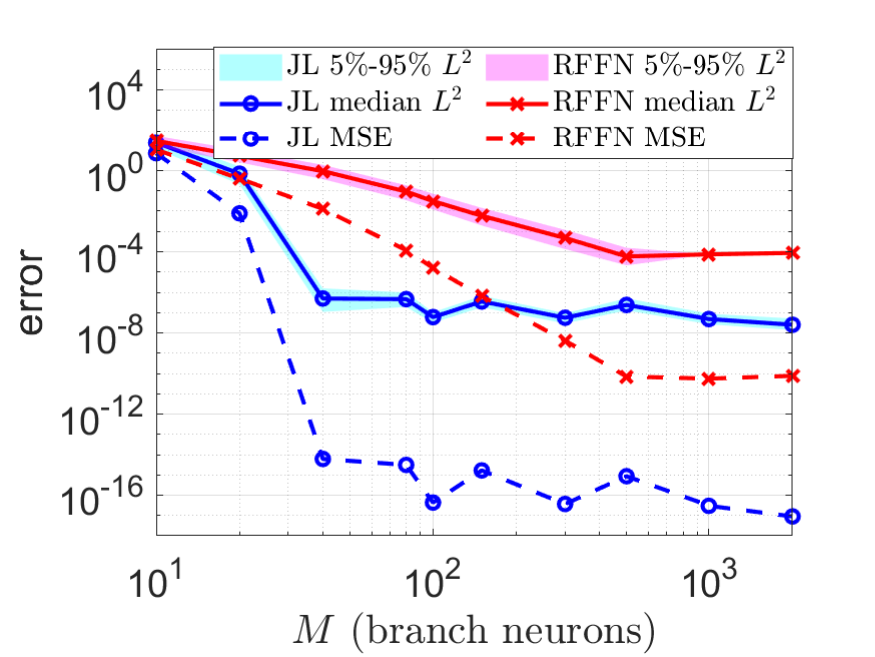}
    }
    \subfigure[RandONets ]{ 
    \includegraphics[width=0.31\textwidth]{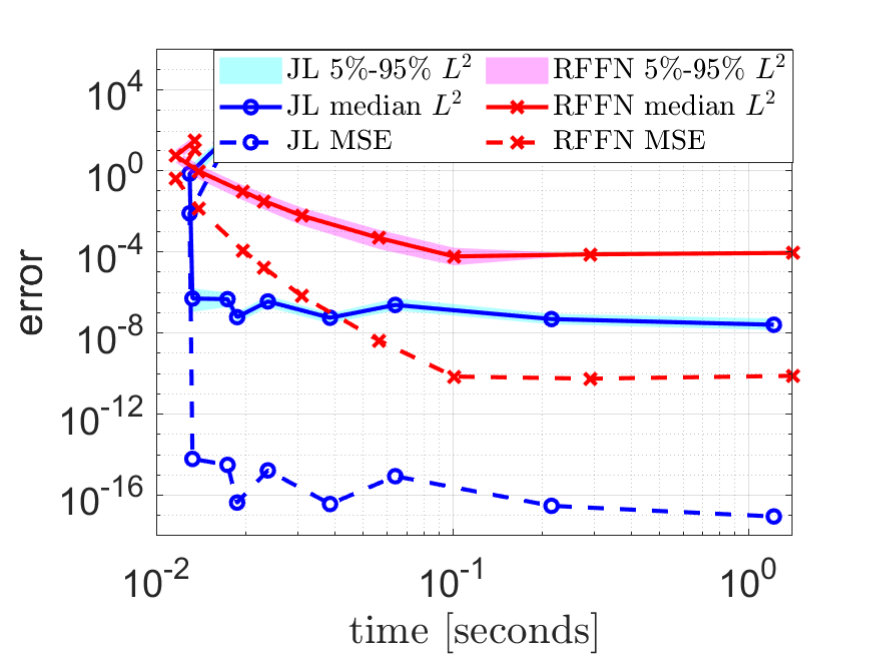}
    }
    \caption{Case study 3: 1D Diffusion-advection-reaction linear PDE in Eq.~\eqref{eq:LinearPDE}. We use $1600$ training input functions; (a) Convergence of training and test MSE of the DeepOnet with two hidden layers (indicatively) with $40$ neurons each, for both branch and trunk networks. (b), (c), MSE and $L^2$ error percentiles (median, $5\%-95\%$), of the RandONets, for different size $M$ of the branch embedding. The errors are computed w.r.t. only the output functions in the test dataset. Comparison of Johnson-Lindenstrauss (JL) random features, in Eq. \eqref{eq:JL_embedding}, with random Fourier features (RFFN) embeddings, in Eq. \eqref{eq:RFFN_embedding}. We set the size of the trunk network to $N=200$ and the grid of input points to $m=100$. Numerical approximation accuracy vs. (b) number of neurons $M$ in the hidden layer of the branch network; and (c) computational time in seconds.}
    \label{fig:DiffReac_conv}
\end{figure}
The output function can be computed analytically/symbolically based on Eq. \eqref{eq:basisRPNN}. The values of the parameters of the RP-FNN based function dataset, in Eq. \eqref{eq:basisRPNN}, are uniformly sampled as follows: $\bm{w},\, a_0,\, a_1,\, a_2 \sim \mathcal{U}[-1,1]$, $\bm{s} \sim \mathcal{U}[0,50]$ and $\bm{c} \sim \mathcal{U}[0,1]$. Here we select $w$ in a smaller range, as higher values may correspond to high derivatives resembling singularity in the second derivative.
\begin{table}[ht!]
\centering
\caption{Case study 3: 1D Diffusion-advection-reaction linear PDE (Eq.~\eqref{eq:LinearPDE}). We report the mean squared error (MSE) and percentiles (median, $5\%-95\%$) of $L^2$ error for the test set. We used $1600$ training functions. We employed a vanilla DeepOnet with $2$ hidden layers with $[N,N]$ neurons. We set $N=5,10,20,40$. DeepOnets are trained with $50,000$ Adam iterations (with learning rate $0.001$ and then $0.0001$). We report the RandONets encompassing Johnson-Lindenstrauss (JL) Featured branch network (with $M=100$ neurons) and the random Fourier features (RFFN) (with $M=500$ neurons).}
\begin{tabular}{|c|c|c|c|c|c|}
\hline
ML-model       & MSE  & 5\% $L^2$ & median--$L^2$ & 95\% $L^2$ & comp. time\\
\hline
DeepOnet $[5,5]$ & 4.74E$+$01  &  3.42E$+$01  &  6.30E$+$01  &   1.04E$+$02   &   2.18E$+$03 (GPU)\\
DeepOnet $[10,10]$ & 2.03E$+$01  &  1.88E$+$01 &   4.15E$+$01  &   6.81E$+$01   &   2.20E$+$03 (GPU)\\
DeepOnet $[20,20]$ & 1.57E$+$00 &   6.68E$+$00  &  1.09E$+$01  &   1.95E$+$01   &   2.31E$+$03 (GPU\\
DeepOnet $[40,40]$  & 1.69E$-$01 &   2.29E$+$00  &  3.73E$+$00   &  6.30E$+$00 &     2.30E$+$03 (GPU)    \\ 
RandONet--JL ($100$) & 4.33E$-$17 & 3.14E$-$08 & 5.98E$-$08 & 1.02E$-$07 & 1.83E$-$02  (CPU)       \\
RandONet--RFFN ($500$) & 7.03E$-$11 & 2.14E$-$05 & 5.87E$-$05 & 1.56E$-$04 & 1.02E$-$01 (CPU)       \\
\hline
\end{tabular}
\label{tab:DiffReac}
\end{table}

To generate the data we use $2000$ random realizations of the values of the parameters of the RP-FNNs based function dataset, as in \eqref{eq:basisRPNN}. We set $80\%$ for training and $20\%$ for testing.

In Figure \ref{fig:DiffReac_conv}, we depict the approximation accuracy w.r.t. the test set in terms of the MSE and the percentiles (median, $5\%-95\%$) of $L^2$--approximation errors. As shown, the training of all RandONets takes approximately less or around $0.1$ seconds and it is performed without iterations.
In Table \ref{tab:DiffReac}, we report the comparative results w.r.t. the numerical approximation accuracy and computational times of the RandONets, here for JL with $100$ neurons and RFFN with $500$ neurons in the branch embedding.In this case, the numerical results suggest that DeepOnets architectures exhibit limitations in achieving satisfactory accuracy, even if the operator is a simple linear RHS of a PDE. This is evident despite extensive training with Adam for $50,000$ iterations. Furthermore, we observed a concerning trend of slow convergence in DeepOnet performance as the hidden layer size ($N$) increased from $5$ to $40$ neurons. In contrast, RandOnet architectures demonstrate significantly faster learning and superior performance. RandOnet-JL achieves exceptional accuracy with a modest number of neurons ($M=100$). This is reflected in the Mean Squared Error (MSE) on the order of $10^{-17}$ and the $L^2$ error on the order of $10^{-8}$. While RandOnet-RFFN requires slightly more neurons ($M=500$) to reach its best performance. This still delivers respectable results with an MSE on the order of $10^{-11}$.\par

As a matter of fact, also for this case study, RandONets, utilizing both JL and RFFN, exhibits execution times (training times) that are on the order of $1000$ times faster, while achieving $L^2$ accuracy that is $5$ to $8$ orders of magnitude higher than the vanilla DeepONets.

\subsubsection{Case study 4: 1D viscous Burgers PDE}
We consider the nonlinear evolution operator of the Burgers' equation given by:
\begin{equation}
    v=\frac{\partial u}{\partial t}=\nu\frac{\partial^2 u}{\partial x^2}-u\frac{\partial u}{\partial x},
    \label{eq:burgers}
\end{equation}
where $\nu=0.01$.

\begin{figure}[ht!]
    \centering
    \subfigure[DeepOnet ]{ 
        \includegraphics[width=0.31\textwidth]{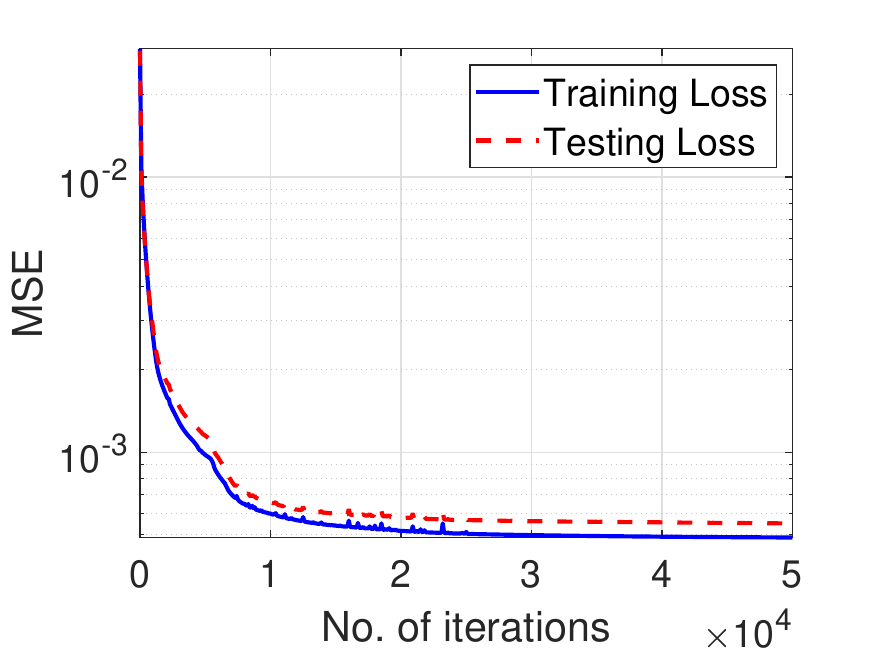}
    }
    \subfigure[RandONets ]{ 
    \includegraphics[width=0.31\textwidth]{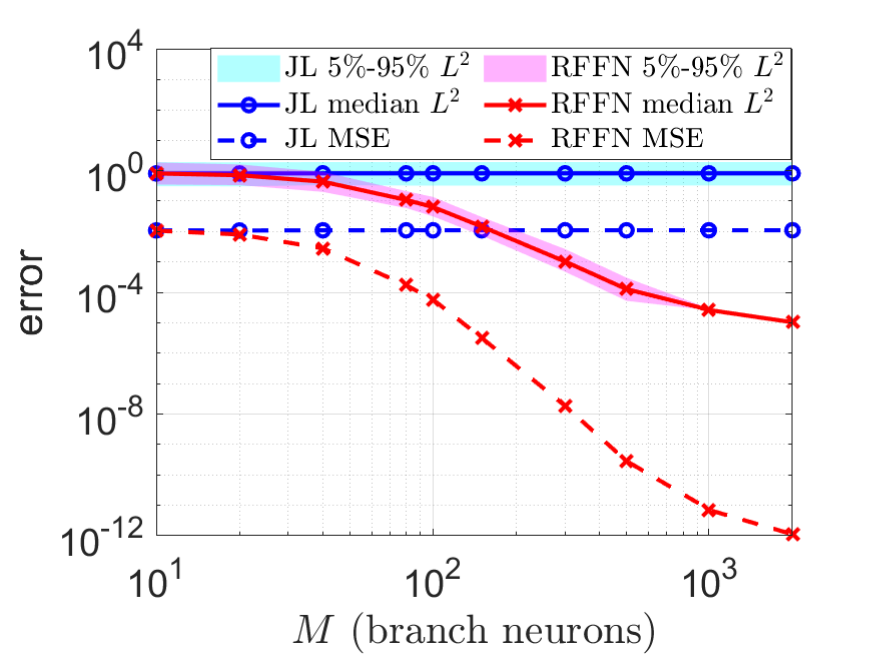}
    }
    \subfigure[RandONets ]{ 
    \includegraphics[width=0.31\textwidth]{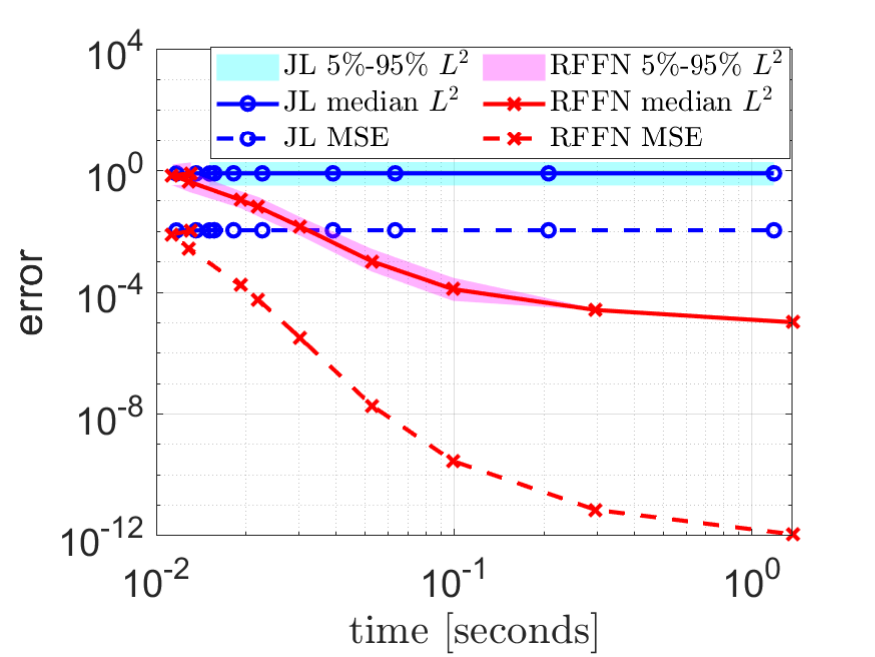}
    }
    \caption{Case study 4: 1D nonlinear Burgers' PDE (Eq.~\eqref{eq:burgers}). We used $1600$ training input functions: (a) MSE when using a vanilla DeepOnet with $2$ hidden layers with (indicatively) $40$ neurons each, for both branch and trunk networks. (b), (c), MSE and $L^2$ error percentiles (median, $5\%-95\%$), of the RandONets for different size $M$ of the branch embedding. Comparison of Johnson-Lindenstrauss random embeddings, as in Eq. \eqref{eq:JL_embedding}, with random Fourier features (RFFN) embeddings, as in Eq. \eqref{eq:RFFN_embedding}. We have set the size of the trunk network to $N=200$ and the grid of input points to $m=100$. Numerical approximation accuracy vs. (b) number of neurons $M$ in the hidden layer of the branch network; and (c) vs. computational time in seconds.}
    \label{fig:burgers_conv}
\end{figure}
The output function can be computed analytically/symbolically based on \eqref{eq:basisRPNN}. The parameters $\bm{w},\, a_0,\, a_1,\, a_2 \sim \mathcal{U}[-0.05,0.05]$, $\bm{s} \sim \mathcal{U}[0,50]$ and $\bm{c} \sim \mathcal{U}[-1,1]$, of the RP-FNN based function dataset, as in Eq.~\eqref{eq:basisRPNN}, to represent the functional space are (element-wise) uniformly distributed. Here we select $w$ in a smaller range, as higher values may correspond to high second derivatives approaching singularity.
\begin{table}[ht!]
\centering
\caption{Case study 4:  Burgers' Nonlinear PDE in Eq.~\eqref{eq:burgers}. We report Mean Squared Error (MSE) and percentiles (median, $5\%-95\%)$ of $L^2$ approximation errors, for the test set. We use $1600$ training functions. We employ a DeepOnet with $2$ hidden layers with $[N,N]$ neurons in both the branch and trunk. We set $N=5,10,20,40$. DeepOnets are trained with $50,000$ Adam iterations (with learning rate $0.001$ and then $0.0001$). We report the RandONets encompassing Johnson-Lindenstrauss (JL) Featured branch network (with $M=40$ neurons) and the Random Fourier Feature branch Network (RFFN) (with $M=2000$ neurons).}
\begin{tabular}{|c|c|c|c|c|c|}
\hline
ML-model  & MSE  & 5\% $L^2$ & median--$L^2$ & 95\% $L^2$ & comp. time\\
\hline
DeepOnet $[5,5]$  & 9.00E$-$03  &  3.70E$-$01   & 7.60E$-$01   &  1.66E$+$00   &   2.16E$+$03 (GPU)    \\
DeepOnet $[10,10]$  & 4.75E$-$03  &  3.43E$-$01  &  5.59E$-$01   &  1.20E$+$00   &   2.01E$+$03 (GPU)    \\
DeepOnet $[20,20]$  & 1.51E$-$03 &   2.24E$-$01  &  3.28E$-$01  &   6.16E$-$01  &    2.40E$+$03  (GPU)    \\
DeepOnet $[40,40]$  & 5.50E$-$04  &  1.30E$-$01  &  2.03E$-$01  &   3.82E$-$01   &   2.34E$+$03 (GPU)    \\
RandONet--JL ($40$) & 1.09E$-$02 & 3.32E$-$01 & 8.11E$-$01 & 1.91E$+$00 & 1.29E$-$02  (CPU)       \\
RandONet--RFFN ($2000$) & 1.12E$-$12 & 1.01E$-$05 & 1.04E$-$05 & 1.19E$-$05 & 1.51E$+$00 (CPU)       \\
\hline
\end{tabular}
\label{tab:burgers}
\end{table}

To generate the data, we used $2000$ random realizations of the parameters. We set $80\%$ for training and $20\%$ for testing.
In Figure \ref{fig:DiffReac_conv}, we report the accuracy w.r.t. the test set in terms of the MSE and the median (and percentiles $5\%-95\%$) of $L^2$--errors. As shown, the training of all RandONets takes approximately less or around one second.
In Table \ref{tab:DiffReac}, we also report the comparison results in terms of the numerical approximation accuracy and computational times.\par
Due to the inherent non-linearity of this example, linear JL random embeddings exhibit limitations in efficiently approximating the non-linear operator. This observation aligns with the theoretical understanding of JL embeddings being most effective in capturing linear relationships. Unlike case study 2, the non-linearity within only the trunk architecture appears insufficient for this specific problem. Therefore, incorporating non-linearity also in the branch embedding becomes crucial for achieving optimal performance.

Interestingly, the performance of JL embeddings approaches that of fully trained, entirely non-linear vanilla DeepOnets. While DeepOnets can achieve a minimum Mean Squared Error (MSE) on the order of 1E$-$04 and an $L^2$ error on the order of 1E$-$01, their performance is not significantly better than the JL approach. In contrast, the RandOnet-RFFN architecture emerges as the clear leader in this specific case study. It achieves a remarkably low MSE on the order of 1E$-$12, demonstrating its superior capability in handling the non-linearities present in this example.\par
Also for this case study, RandONets, utilizing both JL and RP-FFN random embeddings, demonstrates execution times (training times) that are $3$ to $5$ order times faster, while achieving $L^2$ accuracy that is $4$ orders of magnitude higher in the case of RFFN, and of a similar level of accuracy in the case of JL random embeddings.

\subsubsection{Case study 5: 1D Allen-Cahn phase-field PDE}
Here we consider the nonlinear evolution operator of the Allen-Cahn equation, described by:
\begin{equation}
    v=\frac{\partial u}{\partial t}=\nu\frac{\partial^2 u}{\partial x^2}+(u-u^3)
    \label{eq:AllenCahn}
\end{equation}
where we set the parameter $\nu=0.01$.

\begin{figure}[ht!]
    \centering
    \subfigure[DeepOnet ]{ 
        \includegraphics[width=0.31\textwidth]{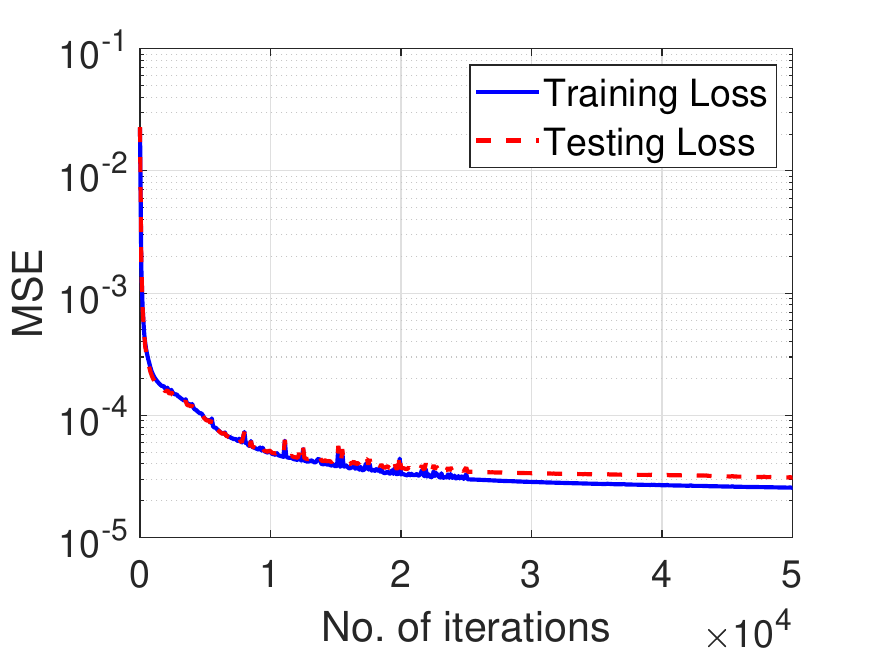}
    }
    \subfigure[RandONets ]{ 
    \includegraphics[width=0.31\textwidth]{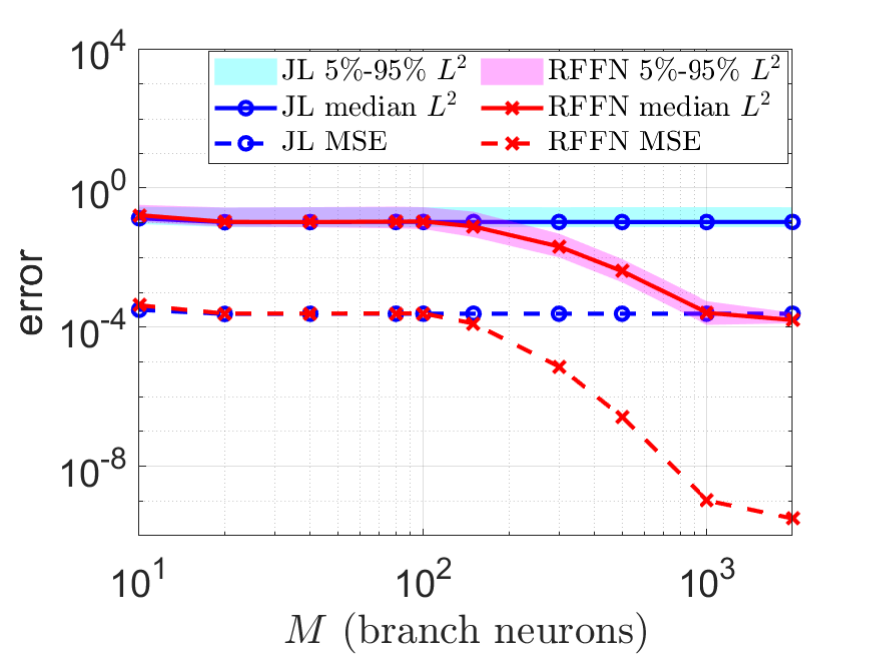}
    }
    \subfigure[RandONets ]{ 
    \includegraphics[width=0.31\textwidth]{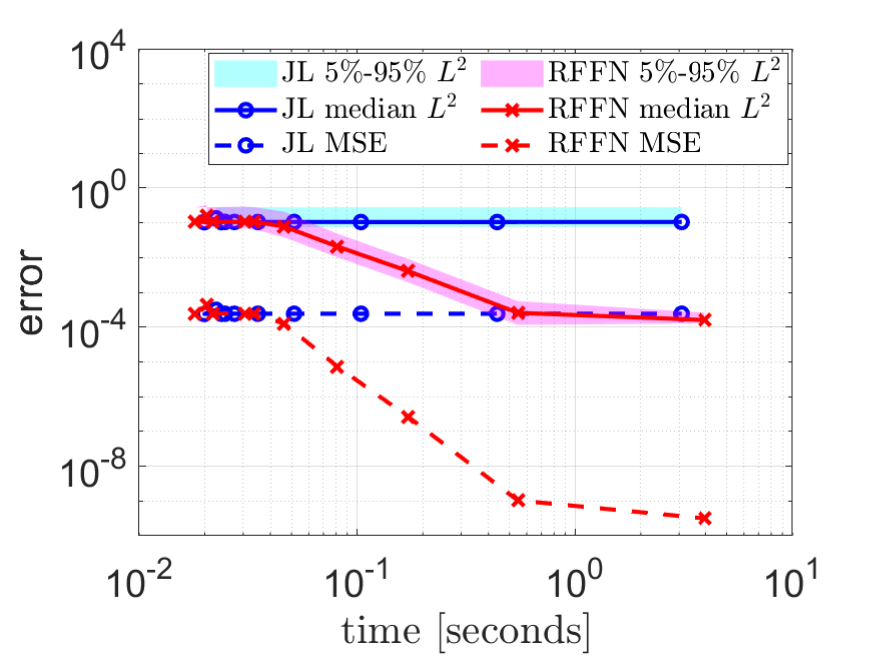}
    }
    \caption{Case study 5: 1D Allen-Cahn phase-field PDE in Eq.~\eqref{eq:AllenCahn}. We use $800$ training input functions; (a) Convergence of training and test MSE of the DeepOnet with $2$ hidden layer (indicatively) with $40$ neurons each, for both branch and trunk networks. (b), (c), MSE and $L^2$ approximation errors percentiles (median, $5\%-95\%$), of the RandONets, for different size $M$ of the branch embedding. The errors are computed w.r.t only the output functions in the test dataset. Comparison of Johnson-Lindenstrauss random features, as in Eq. \eqref{eq:JL_embedding}, with Random Fourier Feature Networks (RFFN), as in Eq. \eqref{eq:RFFN_embedding}. We set the size of the Trunk network to $N=200$ and the grid of input points to $m=100$. Numerical approximation accuracy vs. (b) number of neurons $M$ in the hidden layer of the branch network; and (c) vs. computational time in seconds.}
    \label{fig:AllenCahn_conv}
\end{figure}

The output function can be computed analytically/symbolically based on \eqref{eq:basisRPNN}. The parameters $\bm{w},\, a_0,\, a_1,\, a_2 \sim \mathcal{U}[-0.05,0.05]$, $\bm{s} \sim \mathcal{U}[0,50]$ and $\bm{c} \sim \mathcal{U}[-1,1]$, of the RP-FNN based function dataset, as in Eq.~\eqref{eq:basisRPNN}, to represent the functional space are (element-wise) uniformly distributed. Here, similarly to case study 4, we select $w$ in a smaller range as higher values may correspond to high second derivatives approaching singularity.


\begin{table}[ht!]
\centering
\caption{Case study 5: 1D Allen-Cahn phase-field PDE in Eq.~\eqref{eq:AllenCahn}. We report the mean Squared Error (MSE) and percentiles (median, $5\%-95\%$) of the $L^2$ approximation error for the test set. We use $2400$ training functions. Here, we depict, indicatively, the results with a vanilla DeepOnet with $2$ hidden layers with $[N,N]$ neurons. We set $N=5,10,20,40$. DeepOnets are trained with $50'000$ Adam iterations (with learning rate $0.001$ and then $0.0001$). We report the RandONets encompassing Johnson-Lindenstrauss (JL) Featured branch network (with $M=40$ neurons) and the Random Fourier Feature branch Network (RFFN) (with $M=2000$).}
\begin{tabular}{|c|c|c|c|c|c|}
\hline
ML-model       & MSE  & 5\% $L^2$ & median--$L^2$ & 95\% $L^2$ & comp. time\\
\hline
DeepOnet $[5,5]$  & 1.75E$-$03  &  1.57E$-$01 &   3.50E$-$01  &   7.28E$-$01  &    3.20E$+$03 (GPU)    \\
DeepOnet $[10,10]$  & 1.60E$-$04  &  6.46E$-$02  &  1.04E$-$01  &   2.12E$-$01   &   3.03E$+$03 (GPU)\\
DeepOnet $[20,20]$  & 5.62E$-$05  &  3.49E$-$02  &  5.52E$-$02  &   1.28E$-$01   &   3.39E$+$03 (GPU)\\
DeepOnet $[40,40]$  & 3.10E$-$05  &  2.39E$-$02   & 4.06E$-$02  &   9.38E$-$02   &   3.49E$+$03 (GPU)    \\
RandONet--JL ($40$) & 2.42E$-$04 & 7.61E$-$02 & 1.04E$-$01 & 2.77E$-$01 & 2.32E$-$02  (CPU)       \\
RandONet--RFFN ($2000$) & 3.15E$-$10 & 1.28E$-$04 & 1.60E$-$04 & 2.60E$-$04 & 3.20E$+$00 (CPU)       \\
\hline
\end{tabular}
\label{tab:AllenCahn}
\end{table}

To generate the data we use $3000$ random realizations of the parameters. We set $80\%$ for training and $20\%$ for testing.
We report the accuracy w.r.t. the test set in terms of MSE and the percentiles (median,$5\%-95\%$) of $L^2$ approximation errors in Figure \ref{fig:DiffReac_conv}. As shown, the training of the all RandONets takes approximately less or around 3 seconds and it is performed without iterations.
In Table \ref{tab:DiffReac}, we also report the comparison results with the vanilla DeepONets, denoted as $[N,N]$. We set $N=5,10,20,40$. \par
Consistent with our observations in case study 4, linear JL random embeddings exhibit limitations in efficiently approximating the non-linear Allen-Cahn operator. Like in case Study 4, the performance of JL embeddings approaches that of fully trained, entirely non-linear vanilla DeepOnets. While DeepOnets can achieve a minimum Mean Squared Error (MSE) on the order of 1E$-$04 and an $L^2$ error on the order of 1E$-$01, their performance is not significantly better than the RandOnets-JL approach.
Once again, the RandOnet-RFFN architecture emerges as the superior method. It achieves a remarkably low MSE on the order of 1E$-$10, demonstrating its capability in handling the non-linearities present in this example.

Finally, RandONets, utilizing both JL and RFFN embeddings, exhibit execution times (training times) that are of $3$ to $5$ orders faster, while achieving $L^2$ approximation accuracy that is of $2$ orders of magnitude higher when using RFFN embeddings, and of the same level of accuracy when using the linear JL random embeddings.

\section{Conclusions}
\label{sec:conclusion}
In this work, we presented RandONets, a framework based on DeepONets \cite{lu2021learning} and the celebrated by now paper due to DeepOnet, of Chen \& Chen \cite{chen1995universal} to approximate linear and nonlinear operators. Our work builds on three keystones: (a) random embeddings by Johnson \& Lindenstrauss (JL) for linear projections, and of Rahimi \& Recht \cite{rahimi2008uniform,rahimi2007random} for nonlinear random embeddings, (b) random projection neural networks whose ``birth'' can be traced back in early '90s \cite{schmidt1992feed,barron1993universal,pao1994learning}, and, (c) niche numerical analysis for the solution of the linear least-squares problem.  First, based on the above, we prove the universal approximation property of RandONets. We furthermore assess their performance by comparing them with vanilla DeepONets on various benchmark problems, including, simple problems of the approximation of the solution operator of ODEs, and linear and nonlinear evolution operators (right-hand-sides) of PDEs. We show that for these benchmark problems, and for the particular task, the proposed scheme outperforms the vanilla DeepONets in both computational cost and numerical accuracy. In particular, we show that RandONets with JL random embeddings are unbeatable when approximating linear evolution operators of PDEs, resulting to almost machine-precision accuracy for aligned data. For the benchmark problems considered for nonlinear evolution operators of PDEs, such as the 1D nonlinear viscous Burgers PDE, and the 1D phase-field Allen-Cahn PDE, RandONets with nonlinear random embeddings are on the order of $10^2-10^3$ more accurate and $10^3$ times faster than the vanilla DeepONets. Here, as in the celebrated paper introducing DeepONet\cite{lu2021learning}, we aimed at introducing to the community RandONets. So an extensive comparison, with more advanced versions of DeepONets and other approaches such as FNOs and others, is beyond the scope of this current work. However, in a following work, we aim at assessing the performance of RandONets considering high-dimensional nonlinear operators, and the approximation of the solution operator of PDEs, thus performing  ``a comprehensive and fair comparison'' of the various machine learning schemes as performed also in \cite{lu2022comprehensive}.\par 
We believe that our work will trigger further advances in the field, paving the way for further exploration of how niche numerical analysis can enhance the capabilities of powerful machine learning methodologies such as DeepOnets.

\section*{Acknowledgements}
I.G.K. acknowledges partial support from the US AFOSR FA9550-21-0317 and the US Department of Energy SA22-0052-S001. C.S. acknowledges partial support from the PNRR MUR, projects PE0000013-Future Artificial Intelligence Research-FAIR \& CN0000013 CN HPC - National Centre for HPC, Big Data and Quantum Computing. A.N.Y. acknowledges the use of resources from the Stochastic Modelling and 
Applications Laboratory, AUEB.  

\section*{Competing Interests}
The authors declare that there is no known competing financial interests or personal relationships that could have
appeared to influence the work reported in this paper.


\section*{Authorship contribution statement - CRediT}
\textbf{G.F.}: Conceptualization, Data curation, Mathematical Analysis, Investigation, Methodology, Software, Validation, Visualization, Writing – original draft, Writing – review \& editing.\\
\textbf{I.G.K.}: Methodology, Supervision, Validation, Writing – review \& editing.\\
\textbf{C.S.}: Conceptualization, Mathematical analysis,  Methodology, Supervision, Validation, Writing – original draft, Writing – review \& editing\\
\textbf{A.N.Y.}: Mathematical analysis, Methodology, Supervision, Validation, Writing – review \& editing.


\end{document}